\documentclass{article}

\PassOptionsToPackage{numbers, compress}{natbib}
\usepackage[preprint]{neurips_2026}

\usepackage{inconsolata}
\usepackage[utf8]{inputenc} %
\usepackage[T1]{fontenc}    %
\usepackage{hyperref}       %
\usepackage{url}            %
\usepackage{booktabs}       %
\usepackage{amsfonts}       %
\usepackage{nicefrac}       %
\usepackage{microtype}      %
\usepackage{xcolor}         %
\usepackage{wrapfig}
\usepackage{enumitem}

\usepackage{siunitx}
\usepackage{mathtools}
\usepackage{bm}
\usepackage{amssymb,amsfonts}
\usepackage{xfrac}

\usepackage[capitalize,noabbrev]{cleveref}
\DeclareRobustCommand{\abbrevcrefs}{%
\Crefname{equation}{Eq.}{Eqs.}%
}
\crefname{problem}{problem}{problems}
\Crefname{problem}{Problem}{Problems}
\DeclareRobustCommand{\Cabbref}[1]{{\abbrevcrefs\Cref{#1}}}

\usepackage{tikz}
\usepackage{pgfplots,pgfplotstable}
\usetikzlibrary{arrows,arrows.meta,bending,positioning,patterns,patterns.meta,calc,decorations.pathreplacing}
\pgfplotsset{compat=newest}
\usepgfplotslibrary{fillbetween,groupplots,statistics,colorbrewer}

\usepackage{amsthm}
\theoremstyle{definition}
\newtheorem{definition}{Definition}

\newtheorem{remark}{Remark}

\newtheorem{problem}{Problem}
\newtheorem*{goal}{Goal}

\newtheorem*{complexity}{Complexity}
\usepackage{thmtools}
\usepackage{thm-restate}
\declaretheoremstyle[spaceabove=7pt,spacebelow=0pt]{thmstyle}

\usepackage{algorithm}
\usepackage[noend]{algpseudocode}

\algrenewcommand\algorithmicrequire{\textbf{Input}}
\algrenewcommand\algorithmicensure{\textbf{Output}}
\algnewcommand{\IIf}[2]{\State\algorithmicif\ #1\ \algorithmicthen\ #2}
\algnewcommand{\IElse}[1]{\State\algorithmicelse\ #1}
\algnewcommand{\IElseIf}[2]{\State\algorithmicelse\ \algorithmicif\ #1\ \algorithmicthen\ #2}
\makeatletter
\patchcmd{\ALG@doentity}{\item[]\nointerlineskip}{}{}{}
\ifthenelse{\equal{\ALG@noend}{t}}%
  {\algtext*{EndNIElse}}
  {}%
\ifthenelse{\equal{\ALG@noend}{t}}%
  {\algtext*{EndNIElseIf}}
  {}%
\makeatother
\makeatletter
\AddToHook{env/algorithmic/before}{\def\@currentcounter{ALG@line}}
\makeatother
\crefalias{ALG@line}{line}

\renewcommand{\Comment}[2][.6\linewidth]{%
  \leavevmode\hfill\makebox[#1][l]{$\triangleright$~#2}}

\definecolor{palette-blue}{HTML}{2274a5}
\definecolor{palette-orange}{HTML}{f75c03}
\definecolor{palette-yellow}{HTML}{f1c40f}
\definecolor{palette-pink}{HTML}{d90368}
\definecolor{palette-green}{HTML}{00cc66}
\definecolor{palette-purple}{HTML}{662c91}
\definecolor{palette-gray}{HTML}{6d545d}
\definecolor{palette-dgreen}{HTML}{5b8c5a}
\definecolor{palette-ddgreen}{HTML}{293f14}
\definecolor{palette-violet}{HTML}{1c0b19}
\definecolor{palette-gold}{HTML}{ffbf00}
\definecolor{palette-brown}{HTML}{492c1d}
\definecolor{palette-red}{HTML}{ff3a20}
\pgfplotscreateplotcyclelist{cpalette}{{palette-blue},{palette-orange},{palette-yellow},{palette-pink},{palette-green},{palette-purple},{palette-gray},{palette-brown}, {palette-dgreen}}

\usepackage[breakable]{tcolorbox}
\tcolorboxenvironment{example}{
    colback=palette-yellow!5!white,
    boxrule=0pt,
    boxsep=2pt,
    before skip=\topsep,
    after skip=\topsep,
    boxrule=0pt,
    sharp corners,
    colframe=white,
    breakable,
    left=5pt, right=5pt,
}
\tcolorboxenvironment{remark}{
    colback=palette-green!5!white,
    boxrule=0pt,
    boxsep=2pt,
    before skip=\topsep,
    after skip=\topsep,
    boxrule=0pt,
    sharp corners,
    colframe=white,
    breakable,
    left=5pt, right=5pt,
}
\tcolorboxenvironment{assumption}{
    colback=palette-purple!5!white,
    boxrule=0pt,
    boxsep=2pt,
    before skip=\topsep,
    after skip=\topsep,
    boxrule=0pt,
    sharp corners,
    colframe=white,
    breakable,
    left=5pt, right=5pt,
}
\tcolorboxenvironment{problem}{
    colback=palette-orange!5!white,
    boxrule=0pt,
    boxsep=2pt,
    before skip=\topsep,
    after skip=\topsep,
    boxrule=0pt,
    sharp corners,
    colframe=white,
    breakable,
    left=5pt, right=5pt,
}
\tcolorboxenvironment{goal}{
    colback=palette-blue!5!white,
    boxrule=0pt,
    boxsep=2pt,
    before skip=\topsep,
    after skip=\topsep,
    boxrule=0pt,
    sharp corners,
    colframe=white,
    breakable,
    left=5pt, right=5pt,
}

\newcommand{\supp}{\text{supp}}

\newcommand{\defeq}{\vcentcolon=}

\newcommand{\implws}{\,\textvisiblespace\,}

\DeclareRobustCommand{\bigo}{%
  \text{\usefont{OMS}{cmsy}{m}{n}O}%
}
\makeatletter
\newcommand{\circminus}{\mathbin{\text{\@circminus}}}
\newcommand{\@circminus}{%
  \ooalign{\hidewidth\raise1ex\hbox{$\circ$}\hidewidth\cr$\m@th-$\cr}%
}
\makeatother
\DeclareFontFamily{U}{matha}{\hyphenchar\font45}
\DeclareFontShape{U}{matha}{m}{n}{ <-6> matha5 <6-7> matha6 <7-8>
matha7 <8-9> matha8 <9-10> matha9 <10-12> matha10 <12-> matha12 }{}
\DeclareSymbolFont{matha}{U}{matha}{m}{n}
\DeclareFontFamily{U}{mathx}{\hyphenchar\font45}
\DeclareFontShape{U}{mathx}{m}{n}{ <-6> mathx5 <6-7> mathx6 <7-8>
mathx7 <8-9> mathx8 <9-10> mathx9 <10-12> mathx10 <12-> mathx12 }{}
\DeclareSymbolFont{mathx}{U}{mathx}{m}{n}
\DeclareMathDelimiter{\liv} {4}{matha}{"76}{mathx}{"30}
\DeclareMathDelimiter{\riv} {5}{matha}{"77}{mathx}{"38}

\newcommand{\softmax}{\text{softmax}}
\newcommand*{\tran}{^{^{\mkern-1.5mu\mathsf{T}}}} %

\newcommand{\pimin}{\pi_{\text{min}}}
\newcommand{\piminh}{\pi_{\text{min-h}}}
\newcommand{\pihto}{\pi_{\text{H2O}}}
\newcommand{\pihtoh}{\pi_{\text{H2O-h}}}

\newcommand{\llama}{\texttt{\textbf{\color{palette-red}Llama3}}}
\newcommand{\qwen}{\texttt{\textbf{\color{palette-dgreen}Qwen3}}}

\pgfplotstableread[col sep=comma]{plots/mae_local/t1.csv}\maelocaltadata
\pgfplotstableread[col sep=comma]{plots/mae_local/t2.csv}\maelocaltbdata
\pgfplotstableread[col sep=comma]{plots/mae_local/t5.csv}\maelocaltcdata
\pgfplotstableread[col sep=comma]{plots/mae_local/t10.csv}\maelocaltddata
\pgfplotstableread[col sep=comma]{plots/mae_local/t50.csv}\maelocaltedata

\pgfplotstableread[col sep=comma]{plots/ruler/llama3_3b/normal/estimated_attention__h2o.csv}\rleah
\pgfplotstableread[col sep=comma]{plots/ruler/llama3_3b/normal/estimated_attention__h2o_norm.csv}\rleahn
\pgfplotstableread[col sep=comma]{plots/ruler/llama3_3b/normal/estimated_attention__min_var.csv}\rleamv
\pgfplotstableread[col sep=comma]{plots/ruler/llama3_3b/normal/estimated_attention__min_var_full_harmonic.csv}\rleamvfh
\pgfplotstableread[col sep=comma]{plots/ruler/llama3_3b/normal/h2o.csv}\rlho
\pgfplotstableread[col sep=comma]{plots/ruler/llama3_3b/normal/knorm.csv}\rlkn
\pgfplotstableread[col sep=comma]{plots/ruler/llama3_3b/normal/snapkv.csv}\rlsnap
\pgfplotstableread[col sep=comma]{plots/ruler/llama3_3b/normal/streaming_llm.csv}\rlsllm
\pgfplotstableread[col sep=comma]{plots/ruler/llama3_3b/normal/tova.csv}\rltova
\pgfplotstableread[col sep=comma]{plots/ruler/qwen3_4b/normal/estimated_attention__h2o.csv}\rqeah
\pgfplotstableread[col sep=comma]{plots/ruler/qwen3_4b/normal/estimated_attention__h2o_norm.csv}\rqeahn
\pgfplotstableread[col sep=comma]{plots/ruler/qwen3_4b/normal/estimated_attention__min_var.csv}\rqeamv
\pgfplotstableread[col sep=comma]{plots/ruler/qwen3_4b/normal/estimated_attention__min_var_full_harmonic.csv}\rqeamvfh
\pgfplotstableread[col sep=comma]{plots/ruler/qwen3_4b/normal/h2o.csv}\rqho
\pgfplotstableread[col sep=comma]{plots/ruler/qwen3_4b/normal/knorm.csv}\rqkn
\pgfplotstableread[col sep=comma]{plots/ruler/qwen3_4b/normal/snapkv.csv}\rqsnap
\pgfplotstableread[col sep=comma]{plots/ruler/qwen3_4b/normal/streaming_llm.csv}\rqsllm
\pgfplotstableread[col sep=comma]{plots/ruler/qwen3_4b/normal/tova.csv}\rqtova
\pgfplotstableread[col sep=comma]{plots/longbench/llama3_3b/normal/estimated_attention__h2o.csv}\lbleah
\pgfplotstableread[col sep=comma]{plots/longbench/llama3_3b/normal/estimated_attention__h2o_norm.csv}\lbleahn
\pgfplotstableread[col sep=comma]{plots/longbench/llama3_3b/normal/estimated_attention__min_var.csv}\lbleamv
\pgfplotstableread[col sep=comma]{plots/longbench/llama3_3b/normal/estimated_attention__min_var_full_harmonic.csv}\lbleamvfh
\pgfplotstableread[col sep=comma]{plots/longbench/llama3_3b/normal/h2o.csv}\lblho
\pgfplotstableread[col sep=comma]{plots/longbench/llama3_3b/normal/knorm.csv}\lblkn
\pgfplotstableread[col sep=comma]{plots/longbench/llama3_3b/normal/snapkv.csv}\lblsnap
\pgfplotstableread[col sep=comma]{plots/longbench/llama3_3b/normal/streaming_llm.csv}\lblsllm
\pgfplotstableread[col sep=comma]{plots/longbench/llama3_3b/normal/tova.csv}\lbltova
\pgfplotstableread[col sep=comma]{plots/longbench/qwen3_4b/normal/estimated_attention__h2o.csv}\lbqeah
\pgfplotstableread[col sep=comma]{plots/longbench/qwen3_4b/normal/estimated_attention__h2o_norm.csv}\lbqeahn
\pgfplotstableread[col sep=comma]{plots/longbench/qwen3_4b/normal/estimated_attention__min_var.csv}\lbqeamv
\pgfplotstableread[col sep=comma]{plots/longbench/qwen3_4b/normal/estimated_attention__min_var_full_harmonic.csv}\lbqeamvfh
\pgfplotstableread[col sep=comma]{plots/longbench/qwen3_4b/normal/h2o.csv}\lbqho
\pgfplotstableread[col sep=comma]{plots/longbench/qwen3_4b/normal/knorm.csv}\lbqkn
\pgfplotstableread[col sep=comma]{plots/longbench/qwen3_4b/normal/snapkv.csv}\lbqsnap
\pgfplotstableread[col sep=comma]{plots/longbench/qwen3_4b/normal/streaming_llm.csv}\lbqsllm
\pgfplotstableread[col sep=comma]{plots/longbench/qwen3_4b/normal/tova.csv}\lbqtova

\pgfplotstableread[col sep=comma]{plots/ruler/llama3_3b/normal/estimated_attention__h2o_scores.csv}\srleah
\pgfplotstableread[col sep=comma]{plots/ruler/llama3_3b/normal/estimated_attention__h2o_norm_scores.csv}\srleahn
\pgfplotstableread[col sep=comma]{plots/ruler/llama3_3b/normal/estimated_attention__min_var_scores.csv}\srleamv
\pgfplotstableread[col sep=comma]{plots/ruler/llama3_3b/normal/estimated_attention__min_var_full_harmonic_scores.csv}\srleamvfh
\pgfplotstableread[col sep=comma]{plots/ruler/llama3_3b/normal/h2o_scores.csv}\srlho
\pgfplotstableread[col sep=comma]{plots/ruler/llama3_3b/normal/knorm_scores.csv}\srlkn
\pgfplotstableread[col sep=comma]{plots/ruler/llama3_3b/normal/snapkv_scores.csv}\srlsnap
\pgfplotstableread[col sep=comma]{plots/ruler/llama3_3b/normal/streaming_llm_scores.csv}\srlsllm
\pgfplotstableread[col sep=comma]{plots/ruler/llama3_3b/normal/tova_scores.csv}\srltova
\pgfplotstableread[col sep=comma]{plots/ruler/qwen3_4b/normal/estimated_attention__h2o_scores.csv}\srqeah
\pgfplotstableread[col sep=comma]{plots/ruler/qwen3_4b/normal/estimated_attention__h2o_norm_scores.csv}\srqeahn
\pgfplotstableread[col sep=comma]{plots/ruler/qwen3_4b/normal/estimated_attention__min_var_scores.csv}\srqeamv
\pgfplotstableread[col sep=comma]{plots/ruler/qwen3_4b/normal/estimated_attention__min_var_full_harmonic_scores.csv}\srqeamvfh
\pgfplotstableread[col sep=comma]{plots/ruler/qwen3_4b/normal/h2o_scores.csv}\srqho
\pgfplotstableread[col sep=comma]{plots/ruler/qwen3_4b/normal/knorm_scores.csv}\srqkn
\pgfplotstableread[col sep=comma]{plots/ruler/qwen3_4b/normal/snapkv_scores.csv}\srqsnap
\pgfplotstableread[col sep=comma]{plots/ruler/qwen3_4b/normal/streaming_llm_scores.csv}\srqsllm
\pgfplotstableread[col sep=comma]{plots/ruler/qwen3_4b/normal/tova_scores.csv}\srqtova
\pgfplotstableread[col sep=comma]{plots/longbench/llama3_3b/normal/estimated_attention__h2o_scores.csv}\slbleah
\pgfplotstableread[col sep=comma]{plots/longbench/llama3_3b/normal/estimated_attention__h2o_norm_scores.csv}\slbleahn
\pgfplotstableread[col sep=comma]{plots/longbench/llama3_3b/normal/estimated_attention__min_var_scores.csv}\slbleamv
\pgfplotstableread[col sep=comma]{plots/longbench/llama3_3b/normal/estimated_attention__min_var_full_harmonic_scores.csv}\slbleamvfh
\pgfplotstableread[col sep=comma]{plots/longbench/llama3_3b/normal/h2o_scores.csv}\slblho
\pgfplotstableread[col sep=comma]{plots/longbench/llama3_3b/normal/knorm_scores.csv}\slblkn
\pgfplotstableread[col sep=comma]{plots/longbench/llama3_3b/normal/snapkv_scores.csv}\slblsnap
\pgfplotstableread[col sep=comma]{plots/longbench/llama3_3b/normal/streaming_llm_scores.csv}\slblsllm
\pgfplotstableread[col sep=comma]{plots/longbench/llama3_3b/normal/tova_scores.csv}\slbltova
\pgfplotstableread[col sep=comma]{plots/longbench/qwen3_4b/normal/estimated_attention__h2o_scores.csv}\slbqeah
\pgfplotstableread[col sep=comma]{plots/longbench/qwen3_4b/normal/estimated_attention__h2o_norm_scores.csv}\slbqeahn
\pgfplotstableread[col sep=comma]{plots/longbench/qwen3_4b/normal/estimated_attention__min_var_scores.csv}\slbqeamv
\pgfplotstableread[col sep=comma]{plots/longbench/qwen3_4b/normal/estimated_attention__min_var_full_harmonic_scores.csv}\slbqeamvfh
\pgfplotstableread[col sep=comma]{plots/longbench/qwen3_4b/normal/h2o_scores.csv}\slbqho
\pgfplotstableread[col sep=comma]{plots/longbench/qwen3_4b/normal/knorm_scores.csv}\slbqkn
\pgfplotstableread[col sep=comma]{plots/longbench/qwen3_4b/normal/snapkv_scores.csv}\slbqsnap
\pgfplotstableread[col sep=comma]{plots/longbench/qwen3_4b/normal/streaming_llm_scores.csv}\slbqsllm
\pgfplotstableread[col sep=comma]{plots/longbench/qwen3_4b/normal/tova_scores.csv}\slbqtova

\pgfplotstableread[col sep=comma]{plots/ruler/llama3_3b/normal/effective_cr/estimated_attention__h2o_effective_cr.csv}\echto
\pgfplotstableread[col sep=comma]{plots/ruler/llama3_3b/normal/effective_cr/estimated_attention__h2o_norm_effective_cr.csv}\echtoh
\pgfplotstableread[col sep=comma]{plots/ruler/llama3_3b/normal/effective_cr/estimated_attention__min_var_effective_cr.csv}\ecmin
\pgfplotstableread[col sep=comma]{plots/ruler/llama3_3b/normal/effective_cr/estimated_attention__min_var_full_harmonic_effective_cr.csv}\ecminfh

\title{A Probabilistic Interpretation of KV Cache Eviction}

\author{%
    Renato Geh\\
    University of California, Los Angeles\\
    \texttt{renatolg@cs.ucla.edu}\\
    \And
    Alex Chen\\
    University of California, Los Angeles\\
    \texttt{itisalex@ucla.edu}
    \And
    Daniel Israel\\
    University of California, Los Angeles\\
    \texttt{disrael@cs.ucla.edu}\\
    \And
    Aditya Grover\\
    University of California, Los Angeles\\
    \texttt{adityag@cs.ucla.edu}
    \And
    Guy Van den Broeck\\
    University of California, Los Angeles\\
    \texttt{guyvdb@cs.ucla.edu}\\
}

\begin{document}

\maketitle

\begin{abstract}
    The premise and promise of KV (cache) eviction is simple: higher throughput can be achieved by evicting some entries from the KV cache, at a negligible cost to quality.
    This holds empirically for many existing methods, though most rely on creative heuristics for selecting which entries to drop.
    Despite recent advances, the problem of KV eviction has remained informal in the literature.
    This paper aims to properly formalize this problem through the lens of probabilistic reasoning and reveal what can be learned from this perspective. %
    Concretely, we (1) formalize the problem of KV eviction and, unfortunately, prove that it is computationally hard, (2) show that by framing it probabilistically, KV eviction reduces to the problem of expectation estimation, which can be approximated through sampling, (3) show that through this probabilistic interpretation, correcting for evicted entries during decoding---a previously ignored problem---becomes feasible, and (4) reveal that existing methods in the literature are zero-variance biased estimators that can be easily adapted in order to enable decode time correction.
    In practice, we show that this probabilistic version of KV eviction coupled with decode time correction is more robust to different tasks compared to existing eviction methods and achieves competitive performance at the same compression budget.
\end{abstract}

\section{Introduction}

Key-Value (KV) cache compression poses a natural proposition: increase throughput and decrease memory footprint by sacrificing some precision in the KV cache.
Popular methods for this utilize quantization \citep{hooper24,zandieh26}, compaction \citep{zweiger26,eyuboglu25}, eviction \citep{xiao24,li24,oren24,zhang23}, \emph{inter alia}, to shrink down the size of the cache and speed up inference.
Among these techniques, KV (cache) eviction presents the simplest solution: given a prompt and the KV cache associated with it, evict some entries of the cache at a (hopefully) negligible cost to generation quality.
Despite this simplicity, the problem itself is quite challenging, and existing KV eviction strategies elaborate sophisticated heuristics in order to identify which entries to evict, performing surprisingly well in practice.
Although these recent advances are indeed quite impressive, there is a distinct lack of formality in the literature.
This paper aims to address this, and in its wake expose key insights on the structure of KV eviction.

As a first result, by formalizing KV eviction we reveal that the problem of KV eviction is unfortunately computationally hard. 
However, through a probabilistic interpretation of the attention mechanism, eviction can be viewed as a problem of expectation estimation, which is well studied in the statistical methods literature \citep{owen13}.
We then reveal that the usual KV eviction setup ignores the distributional shift that occurs after eviction: because of missing entries, the distribution over all entries is distorted, and therefore so is the expectation.
Thus, within this framing an expectation estimator (and KV eviction strategy) aims to achieve low error at decode time by either reducing its bias, variance, or both.
Interestingly, most existing KV eviction methods are trivially subsumed within this framework.

The common approach in KV eviction is to score entries by using the attention weights and values, and then take the top-$k$ entries according to these scores \citep{li24,zhang23,oren24}; this estimator is zero-variance as it is deterministic. However, it can have an arbitrarily large bias.
We instead suggest that these scores should be viewed as unnormalized distributions on entries which to evict, allowing us to later reuse them as a proposal distribution to correct for the distortion caused by eviction through self-normalized importance sampling.
This is an attractive option because self-normalized importance sampling has reasonable upper bounds on bias and variance \citep{owen13,agapiou17}.
\Cref{fig:big-picture} summarizes our approach.

\begin{figure}[t]
\centering%
\begin{tikzpicture}
    \node[anchor=north west] (head-comp) at (0, 0) {$\quad\overbrace{\softmax\underbrace{\left(\frac{\mathbf{Q}\cdot\mathbf{K}\tran}{\sqrt{d}}\right)}_{\tilde{p}(\mathbf{V})}}^{p(\mathbf{V})}\cdot\mathbf{V}=\mathbb{E}_{p}[\mathbf{V}]$};

    \node[anchor=north west] (pt) at ($(head-comp.south west) + (0, 0.0)$) {$\begin{aligned}
        \quad p_t(\mathbf{V})=\,&[\overbracket[0.1ex]{0.3}^{1}, \overbracket[0.1ex]{0.1}, \overbracket[0.1ex]{0.4}^{3}, \overbracket[0.1ex]{0.2}^{4}]\\
        \mathbf{V}=\,&[\underbracket[0.1ex]{2.0}, \underbracket[0.1ex]{3.0}, \underbracket[0.1ex]{1.0}, \underbracket[0.1ex]{5.0}]
    \end{aligned}$};
    \draw[draw=none,fill opacity=0.25,fill=palette-gray] ($(pt.north west) + (2.46, -0.4)$) -- +(0.50, 0.0) -- +(0.50, -1.00) -- +(0.0, -1.00) -- cycle;

    \node[anchor=north west] (topk) at ($(pt.south west) + (0, -0.1)$) {\color{palette-red}\textbf{Top-$\bm{k}$:}};
    \node[anchor=north west] (topk-pt) at ($(topk.south west) + (0.5, 0.0)$) {$\begin{aligned}
        \bm{I}=\text{top-}k(p_t(\mathbf{V}))=\{1,3,4\}
    \end{aligned}$};

    \node[anchor=north west] (prob-eviction) at (topk-pt.south west -| topk.south west) {\color{palette-blue}\textbf{Probabilistic eviction:}};
    \node[anchor=north west] (prob-eviction-pt) at ($(prob-eviction.south west) + (0.15, 0.0)$) {$\begin{aligned}
        &\pi(\mathbf{v}_i)=\frac{p_t(\mathbf{v}_i)\cdot|\mathbf{v}_i|}{\sum_{j=1}^t p_t(\mathbf{v}_j)\cdot|\mathbf{v}_j|}\\
        &\bm{u}=\{u_1,u_2,\dots,u_m\}\sim\pi\\
        &c(\mathbf{v}_i)=\text{counts of $\mathbf{v}_i$ in $\bm{u}$}=[3, 0, 3, 14]\\
        &\bm{I}=\text{unique}(\bm{u})=\{1,3,4\}
    \end{aligned}$};

    \node (timeline-start) at ($(pt.south west) + (0, -4.7)$) {};
    \node (t) at ($(timeline-start.east) + (0.43\textwidth, -0.1)$) {};
    \node (n) at ($(timeline-start.east) + (0.85\textwidth, -0.1)$) {};
    \draw[thick,-{Stealth}] (timeline-start) -- +(0.95\textwidth, 0);
    \draw[thick] ($(timeline-start.east) + (0, -0.1)$) -- +(0, 0.2);
    \draw[thick] (t) -- +(0, 0.2) node[midway,below] {\strut$t=4$};
    \draw[thick] (n) -- +(0, 0.2) node[midway,below] {\strut$n=5$};
    \node at ($(timeline-start.east) + (0.225\textwidth, -0.3)$) {\color{palette-dgreen}\textbf{eviction time}};
    \node at ($(timeline-start.east) + (0.675\textwidth, -0.3)$) {\color{palette-purple}\textbf{decode time}};

    \draw[thick,dashed] (t) -- ($(head-comp.north -| t) + (0, -0.5)$);
    
    \node[anchor=north west] (pn) at ($(head-comp -| t) + (0.2, 0)$) {$\begin{aligned}
        \quad\mathbb{E}_{p_n}[\mathbf{V}]=\,&2.1
    \end{aligned}$};
    \node[anchor=south west] (pn-label) at ($(pn.north west) + (0, 0.1)$) {\textbf{Ground-truth}:};
    \node[anchor=south west] (pn-available-label) at ($(pn.south west) + (0, -0.5)$) {\textbf{Available}:};
    \node[anchor=north west] (pn-available) at (pn-available-label.south west) {$\begin{aligned}
        \quad\tilde{p}_n(\mathbf{V})=\,&[10, \text{\implws}, 20, 40, 10, 20]\\
        \mathbf{V}=\,&[2.0, \text{\implws}, 1.0, 5.0, 2.0]\\
        \bm{I}_n=\,&\{1,3,4\}\cup\{5\}\\
    \end{aligned}$};
    
    \node[anchor=north west] (dec-topk) at ($(pn-available.south west) + (0, 0.1)$) {\color{palette-red}\textbf{Top-$\bm{k}$} (uncorrected) \textbf{:}};
    \node[anchor=north west] (dec-topk-pt) at (dec-topk.south west) {$\displaystyle\mathbb{E}_{p_n}[\mathbf{V}|\bm{I}_n]=\sum_{i\in\bm{I}_n}\frac{\tilde{p}_n(\mathbf{v}_i)}{\sum_{j\in\bm{I}_n}\tilde{p}_n(\mathbf{v}_j)}\cdot\mathbf{v}_i\approx 1.84$};
    \draw[draw=none,fill opacity=0.25,fill=palette-red] ($(dec-topk-pt.south west) + (0, 1.25)$) -- ($(dec-topk-pt.south east) + (0, 1.25)$) -- (dec-topk-pt.south east) -- (dec-topk-pt.south west) -- cycle;
    \node[anchor=north] (dec-topk-pt-label) at ($(dec-topk-pt.south) + (0, -0.1)$) {\small\textbf{Bias:} unbounded \qquad \textbf{Variance:} 0};

    \node[anchor=north west] (dec-prob) at ($(dec-topk-pt.south west) + (0, -0.5)$) {\color{palette-blue}\textbf{Probabilistic eviction} (corrected) \textbf{:}};
    \node[anchor=north west] (dec-prob-pt) at ($(dec-prob.south west) + (0, -0.1)$) {$\begin{aligned}
        \mu(\mathbf{V};m)=\frac{\hat{\mathbb{E}}_{p_n}[\mathbf{V}_{\bm{I}_n}]+\mathbb{E}_{\tilde{p}_n}[\mathbf{V}_{t+1:n}]}{\hat{z}_{\bm{I}_n}+z_{t+1:n}}\approx 2.18
    \end{aligned}$};
    \draw[draw=none,fill opacity=0.25,fill=palette-blue] ($(dec-prob-pt.south west) + (0, 1.25)$) -- ($(dec-prob-pt.south east) + (0, 1.25)$) -- (dec-prob-pt.south east) -- (dec-prob-pt.south west) -- cycle;
    \node[anchor=north] (dec-prob-pt-label) at ($(dec-prob-pt.south) + (0, -0.1)$) {\small\textbf{Bias:} $\bigo(\sfrac{1}{m})$ \qquad \textbf{Variance:} $\bigo(\sfrac{1}{m})$};
    
\end{tikzpicture}
\caption{\textbf{Correction can attenuate the effects of eviction at decode time.} Evicting entries deforms the distribution and biases the expectation, as the normalizing constant is altered by the evicted entries.
Existing eviction methods deterministically evict the top-$k$ scored entries, which although zero-variance, may have arbitrarily large bias. Probabilistic eviction addresses this unintended deformation through self-normalized importance sampling, whose error is upper bounded by $\bigo(m^{-1})$, where $m$ is the number of samples.}\label{fig:big-picture}
\end{figure}

In summary, our contributions are as follows:
\begin{enumerate}[leftmargin=0.75cm,topsep=0pt,itemsep=-0.125ex]
    \item \textbf{We show that KV eviction is \textsf{NP}-complete} (\Cref{sec:hardness});
    \item \textbf{We formalize KV eviction as an expectation estimation problem} and show an asymptotically correct estimator for the decode time expectation with bounded bias and variance (\Cref{sec:prob-interp,sec:correction});
    \item \textbf{We show the interplay of bias versus variance} in this estimator and how existing eviction methods are subsumed by this framework (\Cref{sec:bias-vs-variance,app:subsumed-methods});
    \item \textbf{We empirically validate our findings} by showing that coupling probabilistic eviction and decode time correction is not only competitive at the same compression budget but is more robust across different tasks (\Cref{sec:robustness}).
\end{enumerate}

\section{Preliminaries}

We start by briefly introducing notation and reviewing the attention mechanism of \citet{vaswani17} as well as the concepts of KV caching and KV eviction.
Throughout this paper, we shall denote random variables (RVs) as single upper case letters (e.g.\ $X,Y,Z$), their values or scalars as single lower case letters (e.g.\ $x,y,z$), sets as bold letters (e.g.\ $\bm{X},\bm{Y},\bm{Z}$ for RVs or $\bm{x},\bm{y},\bm{z}$ for values), and sequences or matrices as bold upright lower and upper case letters respectively (e.g.\ $\mathbf{x},\mathbf{y},\mathbf{z}$ for sequences and vectors, and $\mathbf{X},\mathbf{Y},\mathbf{Z}$ for matrices).

\textbf{Attention.} Let $\mathbf{K},\mathbf{Q},\mathbf{V}\in\mathbb{R}^{n\times d}$ be the key, query and value embedding matrices for a particular attention head, where $n$ is the token sequence length and $d$ is the embedding dimension.
Attention is computed as
\begin{equation}\label{eq:attention}
    \mathbf{O}\defeq\softmax\left(\frac{\mathbf{Q}\cdot\mathbf{K}\tran}{\sqrt{d}}\right)\cdot\mathbf{V},
\end{equation}
where the denominator inside the softmax is often omitted as it is mostly used as a regularizer during training (and here we shall do so as well).
This computation ultimately yields a matrix $\mathbf{O}\in\mathbb{R}^{n\times d}$ corresponding to the attention values for each of the $n$ entries.
The key, query and value embeddings for the next layer are then computed as a function of $\mathbf{O}$.

\textbf{KV caching.} The intermediate softmax computation is $n\times n$ which can be prohibitively large at longer sequences.
Because generation depends only on the next token attention values, it is therefore useful to, instead of fully computing the softmax matrix in \Cref{eq:attention}, compute only the last row of the softmax matrix and cache $\mathbf{K}$ and $\mathbf{V}$ at each attention head 
\begin{equation}\label{eq:attention-kv-cache}
    \underbracket[0.1ex]{\mathbf{o}}_{1\times d}\defeq\overbracket[0.1ex]{\softmax\Big(\underbracket[0.1ex]{\mathstrut\mathbf{Q}_{-1}}_{1\times d}\cdot\underbracket[0.1ex]{\mathstrut\mathbf{K}\tran}_{d\times n}\Big)}^{1\times n}\cdot\overbracket[0.1ex]{\mathbf{V}}^{n\times d}=\softmax(\overbracket[0.1ex]{[\underbracket[0.1ex]{\mathstrut\mathbf{k}_i}_{1\times d}\cdot\underbracket[0.1ex]{\mathstrut\mathbf{q}_n}_{d\times 1}]_{i=1}^n}^{1\times n})\cdot\overbracket[0.1ex]{\mathbf{V}}^{n\times d};
\end{equation}
where here we annotate matrices and vectors with their dimensions for clarity.
The cached keys and values $\mathbf{k}_{1},\mathbf{k}_2,\dots,\mathbf{k}_n$ and $\mathbf{v}_1,\mathbf{v}_2,\dots,\mathbf{v}_n$ are then reused in the next $n+1$ forward pass.

\textbf{KV cache eviction.} One popular strategy to further speed up computation is to further reduce the softmax computation in \Cref{eq:attention-kv-cache} to a subset of entries.
Let $\bm{I}\subseteq [1..n]$ be the set of indices of which entries to keep, where here we denote $[i..j]$ as the set of positive integers in $[i,j]$.
The KV cache evicted attention of an attention head with kept entries $\bm{I}$ is given by
\begin{equation}\label{eq:attention-kv-cache-evicted}
    \mathbf{o}_{\bm{I}}\defeq\softmax\left(\left[\mathbf{k}_i\cdot\mathbf{q}_n\right]_{i\in\bm{I}}\right)\cdot\mathbf{V}_{\bm{I}},
\end{equation}
where $\mathbf{V}_{\bm{I}}$ indicates indexing the rows of $\mathbf{V}$ with $\bm{I}$.
The remaining entries not in $\bm{I}$ are evicted from the cache.
This is KV cache eviction.
The goal in eviction is to thus select an appropriate $\bm{I}$ such that downstream performance is (ideally) unaffected or only slightly impacted.
We refer the reader to \Cref{app:related-work} for a discussion on related work.

Now that we understand the setting and the goal of KV (cache) eviction, we are ready to formalize and study KV eviction.
We start by showing its hardness.

\section{KV Cache Eviction is Hard}\label{sec:hardness}

First and foremost we state the problem, its input and output.

\begin{problem}[\textsc{KVEviction}]\label{problem:kveviction}~\\
    \textbf{Input.} Matrices $\mathbf{K},\mathbf{Q},\mathbf{V}\in\mathbb{Q}^{n\times d}$, error $\varepsilon\in\mathbb{Q}^d$, and compression ratio $r\in(0,1)$.

    \textbf{Output.} Whether there exists a set $\bm{I}\subset[1..n]$ such that $|\bm{I}|=\lfloor n\cdot r\rfloor$ and
    \begin{equation}\label{eq:kveviction-problem}
        \left|\softmax\left(\left[\mathbf{k}_i\cdot\mathbf{q}_n\right]_{i\in\bm{I}}\right)\cdot\mathbf{V}_{\bm{I}}-\softmax\left(\left[\mathbf{k}_i\cdot\mathbf{q}_n\right]_{i=1}^n\right)\cdot\mathbf{V}\right|\leq\varepsilon.
    \end{equation}
\end{problem}

Our reduction is from the \textsc{Partition} problem.

\begin{problem}[\textsc{Partition}]\label{problem:partition}~\\
    \textbf{Input.} A sequence of $n$ integers $\mathbf{v}=(v_1,v_2,\dots,v_n)$.

    \textbf{Output.} Whether there exists a set $\bm{I}\subset[1..n]$ such that $|\bm{I}|=\lfloor\frac{n}{2}\rfloor$ and
    \begin{equation*}
        \sum_{v\in\mathbf{v}_{\bm{I}}}v=\sum_{u\in\mathbf{v}_{\overline{\bm{I}}}}u,
    \end{equation*}
    where we here denote $\mathbf{v}_{\overline{\bm{I}}}$ to mean the values in $\mathbf{v}$ \emph{not} indexed by $\bm{I}$.
\end{problem}

\begin{restatable}{thm}{hardness}
    \textsc{KVEviction} is \textsf{NP}-complete.\label[theorem]{thm:hardness}
\end{restatable}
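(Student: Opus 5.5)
The plan is the usual two-part argument: show \textsc{KVEviction} is in \textsf{NP}, then give a polynomial-time many-one reduction from \textsc{Partition} (\Cref{problem:partition}), which is \textsf{NP}-complete.

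\textbf{Membership in \textsf{NP}.} The certificate is the index set $\bm{I}$ of size $\lfloor n\cdot r\rfloor$, which is clearly polynomial in the input. The verifier computes both softmax-weighted averages in \Cref{eq:kveviction-problem} and checks the coordinatewise bound against $\varepsilon$. The delicate point is that the weights $\exp(\mathbf{k}_i\cdot\mathbf{q}_n)$ are generally irrational even for rational inputs, so exact evaluation over $\mathbb{Q}$ is unavailable. I would adopt the computational model implicit in \Cref{problem:kveviction}: softmax weights (equivalently, arithmetic on exponentials of the rational scores) are evaluated to polynomially many bits. Under this model the check takes polynomial time. I expect justifying this step rigorously to be the main obstacle. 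If a fully rigorous treatment is wanted, I would try to bound how close the two sides can be when they differ, so that finite precision decides the inequality. The hardness direction does not depend on any of this, as shown next.

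\textbf{Reduction.} Given a \textsc{Partition} instance $\mathbf{v}=(v_1,\dots,v_n)$, first make $n$ even. If $n$ is odd, append two zeros: a balanced partition of the new sequence with $\lfloor (n+2)/2\rfloor$ elements exists iff one exists for the original instance with the cardinality shifted by one. This is routine bookkeeping, which I would check carefully.

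Build the instance with $d=1$, $\mathbf{V}=\mathbf{v}$ viewed as an $n\times 1$ matrix, and $\mathbf{K}=\mathbf{Q}=\mathbf{0}$. Then every score is $\mathbf{k}_i\cdot\mathbf{q}_n=0$ and every weight is $e^0=1$. This keeps all quantities rational and removes the exponential issue for the constructed instances. Set $r=\tfrac12$ and $\varepsilon=0$.

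With these choices the softmax is uniform on any index set. Writing $S=\sum_i v_i$ and $|\bm{I}|=n/2$, the left side of \Cref{eq:kveviction-problem} is
\begin{equation*}
\Bigl|\tfrac{2}{n}\textstyle\sum_{i\in\bm{I}}v_i-\tfrac{1}{n}S\Bigr|.
\end{equation*}
This is at most $0$ iff $\sum_{i\in\bm{I}}v_i=S/2=\sum_{i\notin\bm{I}}v_i$, which is exactly the \textsc{Partition} condition with $|\bm{I}|=n/2=\lfloor n\cdot r\rfloor$. So yes-instances correspond exactly, in both directions, and the construction is clearly polynomial. Together with membership this gives \Cref{thm:hardness}.
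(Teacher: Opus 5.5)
Your proposal follows the paper's proof. The reduction is the same: $d=1$, $r=\tfrac12$, $\varepsilon=0$, and constant scores so both softmaxes are uniform and the condition collapses to $\sum_{i\in\bm{I}}v_i=S/2$. Your $\mathbf{K}=\mathbf{Q}=\mathbf{0}$ instead of the paper's all-ones matrices is cosmetic; there every weight is $e$ and it cancels. Your membership step is also the paper's, which cites Brent--Zimmermann for evaluating $\exp$ and division to polynomially many bits, with the precision read off from $\varepsilon$.

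\textbf{The parity fix fails.} The paper simply assumes $n$ even ``without loss of generality,'' and your justification for that does not work. Appending two zeros to an odd-length sequence leaves it odd. With $n'=n+2$, the uniform-softmax identity $\frac{1}{\lfloor n'/2\rfloor}\sum_{i\in\bm{I}}v_i=\frac{S}{n'}$ still does not say the two sides have equal sums. The equivalence you claim is also false. The sequence $(4,1,1,1,1)$ has no $2$-element subset summing to $4$. Yet $(4,1,1,1,1,0,0)$ has the $3$-element solution $\{4,0,0\}$, because both zeros can land on the same side.

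\textbf{The repair: append a single zero.} Then $n+1$ is even and $|\bm{I}'|=\frac{n+1}{2}$. If an equal-sum $\bm{I}'$ contains the new zero, deleting it leaves an original solution of size $\frac{n-1}{2}=\lfloor\frac n2\rfloor$. If it does not, its complement within the original $n$ entries has size $\frac{n-1}{2}$ and sum $\frac S2$. Conversely, adding the zero to an original solution gives a padded one. With that change your hardness half is complete.

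\textbf{On membership.} The caveat you raise is real, and it applies equally to the paper's argument. Computing to ``the precision of $\varepsilon$'' cannot decide a non-strict inequality when the true difference lies at or within rounding error of $\varepsilon$. For $\varepsilon=0$ it fixes no precision at all. In that exact-equality case you can avoid approximation entirely. Let $s_i=\mathbf{k}_i\cdot\mathbf{q}_n\in\mathbb{Q}$. The cross-multiplied difference is $\sum_{i\in\bm{I}}\sum_{j=1}^n e^{s_i+s_j}(\mathbf{v}_i-\mathbf{v}_j)$. By Lindemann--Weierstrass, each coordinate vanishes iff, after grouping equal exponents, every rational coefficient does. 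For $\varepsilon>0$ you would still need the separation bound you mention.
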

\begin{proof}[Proof sketch]
Without loss of generality, assume $n$ to be even; set $d=1$, $r=\sfrac{1}{2}$, $\varepsilon=0$ and $\mathbf{K}=\mathbf{Q}=\mathbf{1}_{n\times d}$, where here we denote $\mathbf{1}_{m\times n}$ as the all-ones matrix in $\mathbb{Q}^{m\times n}$.
Hardness follows immediately by simply reducing the expression in \Cref{eq:kveviction-problem}, which then yields the \textsc{Partition} problem, which is \textsf{NP}-hard.
If one can solve \textsc{KVEviction} for this case, then one can solve any instance of \textsc{Partition}, meaning \textsc{KVEviction} is at least as hard as \textsc{Partition}.
Membership in \textsf{NP} requires some care to ensure that the softmax is computable in $\mathbb{Q}$; still, this operation can be approximated according to the precision of $\varepsilon$.
The full reduction and membership proof is stated in \Cref{app:hardness}.
\end{proof}

The attentive reader might recognize that \textsc{KVEviction} is in fact an \emph{easier} problem compared to real-world KV eviction.
In KV eviction, the goal is to preserve the attention head values at \emph{decode} time---i.e.\ the \emph{future} time steps \emph{after} entries have been evicted and are no longer available to compute.
Our results show that even preserving the attention head values at \emph{eviction} time---when we have access to all the ground-truth values---is hard.

With this in mind, our goal now is to exploit the structure of \textsc{KVEviction} in order to find principled KV eviction approximations.
We do so by a probabilistic interpretation of KV eviction, showing that through this lens we obtain a richer space of eviction strategies.

\section{A Probabilistic Interpretation of KV Cache Eviction}\label{sec:prob-interp}

A natural yet crucial observation is that the softmax matrix in \Cref{eq:attention} induces $n$ distributions (one for each row of the matrix) over the $n$ entries represented by each column.
The resulting attention value for an attention head is thus a matrix containing the dot product of these probabilities with the values $\mathbf{V}$.
This amounts to computing the expectation of $\mathbf{V}$ under a distribution $p_i(\mathbf{V})$ parameterized by $\mathbf{q}_i$ and $\mathbf{K}$.
It is important to note that this distribution is not over the space of values $\mathbb{R}^{n\times d}$, but rather is a categorical distribution over the $n$ entries (i.e.\ indices) that map to $\mathbf{V}$.
\begin{equation}\label{eq:prob-interp}
    \softmax\left(\mathbf{Q}\cdot\mathbf{K}\tran\right)\cdot\mathbf{V}=\big[\overbrace{\softmax\left(\mathbf{q}_i\cdot\mathbf{K}\tran\right)}^{p_i(\mathbf{V})}\big]_{i=1}^n\cdot\mathbf{V}=\left[\sum_{j=1}^n p_i(\mathbf{V}=\mathbf{v}_j)\cdot\mathbf{v}_j\right]_{i=1}^n=\left[\mathbb{E}_{p_i}\left[\mathbf{V}\right]\right]_{i=1}^n
\end{equation}
In short, attention is nothing more than computing expectations.
When using a KV cache, this further reduces to simply computing a single expectation of a $d$-dimensional vector on the last row.
This provides an interesting framework for KV eviction:
Each entry now has a clear distribution associated to it and can---within the setting of KV eviction---be interpreted as the probability distribution of keeping that entry.
The goal of KV eviction then becomes to find a subset $\bm{I}$ that estimates future expectations with least bias and variance.

Interestingly, existing eviction methods usually base their eviction decision on computing some score, either from $p_n(\mathbf{V})$ or $\mathbf{V}$, and then taking the top-$k$ highest scored entries to keep \citep{devoto24,li24,zhang23}.
This is a zero-variance yet clearly biased estimator (and in fact unboundedly biased), skewing the expectation even at eviction time.

A simple and easy solution to achieve unbiasedness at eviction time is to select $\bm{I}$ according to samples from $p_n(\mathbf{V})$.
For a given number of samples $m$ taken with replacement from $p_n(\mathbf{V})$, $\bm{I}$ can be chosen as the collection of unique entries in this set of samples.
This provides an unbiased eviction estimator at eviction time.
In fact, one can use any proposal distribution $\pi$ in order to probabilistically evict from the cache and later adjust the expectation computation with importance sampling.

The choice of which proposal to use is open to the user as long as its support covers all entries.
In fact, existing top-$k$ score KV eviction methods are easily subsumed in this framework by interpreting these scores as an unnormalized proposal distribution $\tilde{\pi}$ (\Cref{app:subsumed-methods}).
Indeed, given a score eviction strategy, there exists an infinite number of proposal distributions whose top-$k$ also correspond to the same top-$k$ of that strategy.
Top-$k$ eviction corresponds to computing the expectation conditioned on the $k$-maximum a posteriori (MAP) states as a proxy for the eviction time expectation, while probabilistic eviction---i.e.\ sampling from $\pi$ and choosing all entries which have been sampled at least once---estimates the expectation through importance sampling.

So far we have seen that top-$k$ is biased even at eviction time, while probabilistic eviction emerges as an unbiased eviction time alternative.
However, this discussion ignores the future; the distribution at later steps will have missing probabilities which can potentially cause further error to future expectations.
Thus, a natural question to ask ourselves is:
\begin{goal}
    If we evict at time step $t$, can we correct \Cabbref{eq:attention-kv-cache-evicted} for the missing entries at time steps $>t$?
\end{goal}
As we shall see in the next sections, we can make use of the proposal distribution and its samples used in probabilistic eviction to correct decoding.

\section{Correcting for Eviction}\label{sec:correction}

Given a sequence of length $t$, the task of KV eviction is usually to perform eviction on this sequence---i.e.\ at time step $t$---and then generate new tokens at future time steps $n>t$.
We shall call time step $t$ \emph{eviction time} and any time step after that \emph{decode time}.
While at eviction time we have access to all previous distributions $\left[p_i(\mathbf{V})\right]_{i=1}^t$, at decode time $n>t$ we only have access to the kept entries of $p_n(\mathbf{V})$ (as well as cached keys $\mathbf{K}_{\bm{I}}$ and values $\mathbf{V}_{\bm{I}}$).
The usual treatment in the literature is to ignore missing entries and compute attention as usual.
However, this severely changes the distribution, as this amounts to conditioning the distribution to restricting its support to only kept entries
\begin{equation}
    \softmax\left(\left[\mathbf{k}_i\cdot\mathbf{q}_n\right]_{i\in\bm{I}_t^{n}}\right)\cdot\mathbf{V}_{\bm{I}_t^n}
    =\left[\frac{p_n(\mathbf{v}_i)\cdot\liv i\in\bm{I}_t^n\riv}{\sum_{j\in\bm{I}_t^n} p_n(\mathbf{v}_j)}\right]_{i=1}^n\cdot\mathbf{V}_{\bm{I}_t^n}
    =p_n\left(\mathbf{V}|\bm{I}_t^n\right)\cdot\mathbf{V}_{\bm{I}_t^n}
    =\mathbb{E}_{p_n}\left[\mathbf{V}|\bm{I}_t^n\right],
\end{equation}
where $\bm{I}_t^n\defeq\bm{I}\cup[t+1..n]$---i.e.\ all the kept entries and all the entries generated after eviction time---, and $\liv\cdot\riv$ denotes the Iverson bracket.
This means that the contribution of each entry will be excessively boosted or diminished by the normalizing constant proportionally to the probability mass evicted. Furthermore, because $\mathbf{K}$, $\mathbf{V}$ and $\mathbf{Q}$ are all computed as a function of the attention values of the previous layer, any error---even if small---propagates and is amplified to all following layers and future generations.

To correct for this, we utilize self-normalized importance sampling to adjust the probabilities at decode time \citep{kloek78,geweke89,owen13}.
This requires we exploit the probabilistic interpretation introduced in \Cref{sec:prob-interp}.
In summary, given a proposal distribution $\pi$ at eviction time $t$, we sample entries from $\pi$, record their counts, and evict any entries that have not been sampled.
At decode time, we then use $\pi$ and the sampled counts to correct for eviction through self-normalized importance sampling.
We now describe this process in detail and show that the corrected expectation estimate is asymptotically correct. 
To do so, let us first properly define the tools we are going to use.

\begin{definition}[Self-normalized importance sampling]
    Let $\tilde{p}(X)$ be an unnormalized probability distribution over an RV $X$ that can take $n$ values, $p$ its normalized distribution, and $\pi(X)$ a proposal distribution also over $X$. 
    Given $m$ samples from $\pi$, the function $c(x)$ maps the value $x$ to the number of times $x$ has been sampled from $\pi$.
    The self-normalized importance sampling estimator for the expectation $\mathbb{E}_p\left[X\right]$ is
    \begin{equation}
        \hat{\mu}(X;\tilde{p},\pi,m)\defeq\frac{1}{m}\sum_{i=1}^n \frac{c(x_i)\frac{\tilde{p}(x_i)}{\pi(x_i)}}{\frac{1}{m}\sum_{j=1}^n c(x_j)\frac{\tilde{p}(x_j)}{\pi(x_j)}}x_i\stackrel{m\to\infty}{=}\mathbb{E}_\pi\left[\frac{p(X)}{\pi(X)}\cdot X\right]=\mathbb{E}_p\left[X\right].
    \end{equation}
\end{definition}

Our goal is to estimate the expectation at decode time $n$.
In order to do so, we slightly massage the expectation into a more appealing expression
\begin{align}\label{eq:exp-decode-time}
    \mathbb{E}_{p_n}\left[\mathbf{V}\right]
    &=\sum_{i=1}^n p_n(\mathbf{v}_i)\cdot\mathbf{v}_i
    =\frac{\sum_{i=1}^t \tilde{p}_n(\mathbf{v}_i)\cdot\mathbf{v}_i + \sum_{i=t+1}^n \tilde{p}_n(\mathbf{v}_i)\cdot\mathbf{v}_i}{\sum_{j=1}^n \tilde{p}_n(\mathbf{v}_i)}\nonumber\\
    &=\frac{\mathbb{E}_{\tilde{p}_n}\left[\mathbf{V}_{1:t}\right] + \mathbb{E}_{\tilde{p}_n}\left[\mathbf{V}_{t+1:n}\right]}{z}
    =\mathbb{E}_{p_n^{(1:t)}}\left[\mathbf{V}\right]\cdot\frac{z_{1:t}}{z_{1:t}+z_{t+1:n}}+\mathbb{E}_{p_n^{(t+1:n)}}\left[\mathbf{V}\right]\cdot\frac{z_{t+1:n}}{z_{1:t}+z_{t+1:n}}\nonumber\\
    &=\fbox{$\mathbb{E}_{p_n^{(1:t)}}[\mathbf{V}]\cdot p(T\leq t)+\mathbb{E}_{p_n^{(t+1:n)}}[\mathbf{V}]\cdot p(T>t)$.}
\end{align}
We use the superscript to denote the distribution restricted to the support specified by the interval and then renormalized accordingly $p^{(i:k)}(X)\defeq\left[\sfrac{p(x_j)}{\sum_{l=i}^k p(x_l)}\right]_{j=i}^k$.
Similarly, $\mathbf{V}_{i:k}$ is used to denote a restriction on the values of $\mathbf{V}$ to $\{\mathbf{v}_j\}_{j=i}^k$.
Lastly, we use $z$ to denote the normalizing constant and correspondingly $z_{i:k}\defeq\sum_{j=i}^k\tilde{p}_n(\mathbf{v}_j)$ to mean the normalizing constant for that interval.

The final expression in \Cref{eq:exp-decode-time} tells us that we may decompose the expectation into two local expectations: \emph{before} and \emph{after} eviction, as long as we weigh them appropriately according to $p(T\leq t)$ and $p(T>t)$.
The local expectation after eviction $\mathbb{E}_{p_n^{(t+1:n)}}[\mathbf{V}]$ can be computed exactly: all entries are available.
However, the local expectation before eviction contains missing entries and thus needs to be approximated.
Indeed, both $p(T\leq t)$ and $p(T>t)$ also need to be approximated as their denominator is the normalizing constant $z$.
To do so, we define the following estimator.

\begin{definition}[Attention estimator]\label{def:estimator}
    The estimator for corrected attention is given by
    \begin{equation}
        \mu_n(\mathbf{V};m)\defeq \hat{\mu}(\mathbf{V};\tilde{p}_n^{(1:t)},\pi,m)\cdot p(T\leq t)+\mathbb{E}_{p_n^{(t+1:n)}}[\mathbf{V}]\cdot p(T>t),
    \end{equation}
    where $\hat{\mu}$ is a self-normalized importance sampling estimator that uses the unnormalized distribution $\tilde{p}_n^{(1:t)}$ as the target distribution and $\pi$ as the proposal.
    The probabilities $p(T\leq t)$ and $p(T>t)$ are approximated by aggregating the importance weights from $\mu_n$: $\hat{z}_{1:t}=\frac{1}{m}\sum_{j=1}^t c(\mathbf{v}_j)\frac{\tilde{p}_n(\mathbf{v}_j)}{\pi(\mathbf{v}_j)}$.
\end{definition}

This estimator is asymptotically correct w.r.t.\ the desired expectation estimand (see \Cref{app:correctness}).

\begin{restatable}{prop}{correctness}
    If $\supp(\pi)\supseteq\supp(p_n)$, then $\lim_{m\to\infty}\mu_n(\mathbf{V};m)=\mathbb{E}_{p_n}[\mathbf{V}]$.
    \label[proposition]{thm:correctness}
\end{restatable}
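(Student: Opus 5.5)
The argument is a strong law of large numbers applied to the importance weights, followed by the continuous mapping theorem applied to the ratio that defines $\mu_n$. I read the limit almost surely over the i.i.d.\ draws $u_1,\dots,u_m\sim\pi$; the same argument gives convergence in probability. I would first make the estimator fully explicit. With $w_j\defeq\tilde{p}_n(\mathbf{v}_j)/\pi(\mathbf{v}_j)$ for $j\le t$, the two estimated quantities are
\[
\hat{z}_{1:t}=\frac{1}{m}\sum_{j=1}^t c(\mathbf{v}_j)\,w_j,\qquad \hat{N}_{1:t}\defeq\frac{1}{m}\sum_{j=1}^t c(\mathbf{v}_j)\,w_j\,\mathbf{v}_j .
\]
The self-normalized term is then $\hat{\mu}(\mathbf{V};\tilde{p}_n^{(1:t)},\pi,m)=\hat{N}_{1:t}/\hat{z}_{1:t}$. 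Following \Cref{def:estimator}, the weights are estimated by $p(T\le t)\approx\hat{z}_{1:t}/(\hat{z}_{1:t}+z_{1+t:n})$ and $p(T>t)\approx z_{t+1:n}/(\hat{z}_{1:t}+z_{t+1:n})$, where $z_{t+1:n}$ is computed exactly because those entries are never evicted. Substituting these gives the closed form
\[
\mu_n(\mathbf{V};m)=\frac{\hat{N}_{1:t}+\sum_{i=t+1}^n\tilde{p}_n(\mathbf{v}_i)\mathbf{v}_i}{\hat{z}_{1:t}+z_{t+1:n}},
\]
which matches \Cref{fig:big-picture}.

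Next I rewrite the counts as sums over samples, $c(\mathbf{v}_j)=\sum_{k=1}^m\liv u_k=j\riv$. This gives $\hat{z}_{1:t}=\frac{1}{m}\sum_{k}w_{u_k}$ and $\hat{N}_{1:t}=\frac1m\sum_k w_{u_k}\mathbf{v}_{u_k}$, so both are empirical means of i.i.d.\ random variables. The support hypothesis $\supp(\pi)\supseteq\supp(p_n)$ does two things. First, every $w_j$ is finite; on the finite index set these variables are bounded, so every moment exists. Second, the expectations are exact:
\[
\mathbb{E}_\pi[w_U]=\sum_{j:\pi_j>0}\pi_j\frac{\tilde{p}_n(\mathbf{v}_j)}{\pi_j}=z_{1:t},
\]
where the entries with $\pi_j=0$ contribute nothing because $\tilde{p}_n(\mathbf{v}_j)=0$ there. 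The same computation gives $\mathbb{E}_\pi[w_U\mathbf{v}_U]=\sum_{j\le t}\tilde{p}_n(\mathbf{v}_j)\mathbf{v}_j$. The strong law of large numbers, applied coordinatewise, then gives $\hat{z}_{1:t}\to z_{1:t}$ and $\hat{N}_{1:t}\to\sum_{j\le t}\tilde{p}_n(\mathbf{v}_j)\mathbf{v}_j$ almost surely.

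The map $(a,b)\mapsto(a+c_1)/(b+c_2)$ is continuous wherever the denominator is nonzero. Its limiting denominator is $z_{1:t}+z_{t+1:n}=z>0$, since the softmax weights are strictly positive. The continuous mapping theorem therefore sends $\mu_n(\mathbf{V};m)$ to $\big(\sum_{i=1}^n\tilde{p}_n(\mathbf{v}_i)\mathbf{v}_i\big)/z=\mathbb{E}_{p_n}[\mathbf{V}]$. This is exactly the decomposition in \Cref{eq:exp-decode-time}.

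The main obstacle is the small print rather than the limit argument. I have to make sure the estimator is well defined for finite $m$. If no sample falls in $[1..t]$, then $\hat{z}_{1:t}=0$, but the combined denominator is still at least $z_{t+1:n}>0$ whenever $n>t$. If instead $n=t$, the event $\hat{z}_{1:t}=0$ has probability zero in the limit. I also have to be consistent about reading $\pi$ as supported on $[1..t]$. The support inclusion must be taken with respect to $p_n$ restricted to the pre-eviction entries, which is what the hypothesis guarantees.
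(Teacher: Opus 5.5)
Your proof is correct and follows the paper's route. The paper also passes $c(\mathbf{v}_i)/m\to\pi(\mathbf{v}_i)$ inside the sums to get $\hat{\mu}\to\mathbb{E}_{p_n^{(1:t)}}[\mathbf{V}]$ and $\hat{z}_{1:t}\to z_{1:t}$, then recombines the two terms; you instead collapse everything into one ratio and make explicit the strong law, the continuous-mapping step, and the positivity of the limiting denominator $z$, all of which the paper leaves implicit. One small simplification: every sample lies in $[1..t]$ and $\tilde{p}_n>0$, so $\hat{z}_{1:t}>0$ for every $m\ge 1$, and the degenerate case in your last paragraph never actually occurs.
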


\begin{algorithm}[t]
    \caption{\textsc{ImportanceAttention}}\label{alg:correction}
    \begin{algorithmic}[1] 
        \Require KV cache $\mathbf{K},\mathbf{V}\in\mathbb{R}^{n\times d}$, sample counts $[c(\mathbf{v}_i)]_{i\in\bm{I}}$, proposal $\pi$, current query $\mathbf{q}_n\in\mathbb{R}^d$.
        \Ensure Corrected attention $\mathbf{o}\in\mathbb{R}^d$
        \State $\tilde{p}_n\gets\exp\left(\mathbf{q}_n\cdot\mathbf{K}\tran\right)$\Comment{Unnormalized probabilities of non-evicted entries}
        \State $w_i\gets c(\mathbf{v}_i)/(m\cdot \pi(\mathbf{v}_i))$, $\forall i\in[1..t]$ \label{alg:correction:weights}\Comment{Weights to be applied to the unnormalized probabilities}
        \State $\hat{z}_{1:t}\gets\sum_{i=1}^t w_i\cdot\tilde{p}_n(\mathbf{v}_i)$\Comment{Normalizing constant for the eviction time entries}
        \State $z_{t+1:n}\gets\sum_{i=t+1}^n \tilde{p}_n(\mathbf{v}_i)$\Comment{Normalizing constant for the decode time entries}
        \State $\hat{\mu}_{1:t}\gets\sum_{i=1}^t w_i\cdot\tilde{p}_n(\mathbf{v}_i)\cdot\mathbf{v}_i$\Comment{Estimated unnormalized expectation for eviction time}
        \State $\mu_{t+1:n}\gets\sum_{i=t+1}^n\tilde{p}_n(\mathbf{v}_i)\cdot\mathbf{v}_i$\Comment{Exact unnormalized expectation for decode time}
        \State $\mathbf{o}\gets\frac{\hat{\mu}_{1:t}+\mu_{t+1:n}}{\hat{z}_{1:t}+z_{t+1:n}}$\Comment{Corrected attention value after normalization}
        \State \textbf{return} $\mathbf{o}$
    \end{algorithmic}
\end{algorithm}

Note that although $\mu_n(\mathbf{V};m)$ is still biased for a fixed number of samples $m$, this only comes from the self-normalized importance sampling estimator, which has both bias and (mean squared) error upper bounded by $O(\sfrac{1}{m})$ \citep[Theorem 2.1]{agapiou17}.
In other words, more samples reduce this error significantly.
This bias can be further reduced by applying any bias-reducing method for self-normalized importance sampling to $\mu_n(\mathbf{V};m)$ \citep{cardoso22}.
In contrast to this bound, existing KV eviction methods deterministically keep the top-$k$ entries according to some score on either $p_t(\mathbf{V})$ or $\mathbf{V}$ and thus are zero-variance yet unboundedly biased \citep{li24,zhang23,oren24}.

Equipped with \Cref{def:estimator,thm:correctness}, we can now explicitly construct the correction algorithm for decode time.
\Cref{alg:correction} shows how to compute the corrected attention value for a given attention head.
Note that here we simplify the algorithm and notation due to space constraints, but one must be careful when computing the corrected probabilities: an entry $i$ is evicted iff $c(\mathbf{v}_i)=0$, meaning that $\tilde{p}_n(\mathbf{v}_i)$ will be undefined.
Any entry which is missing due to eviction should be properly ignored in the computation.

\textbf{Ultimately, the pipeline is as follows:} at eviction time $t$, (1) construct a proposal distribution $\pi^{(h)}$ for each attention head $h$, and (2) sample counts $[c^{(h)}(\mathbf{v}_i)]_{i=1}^t$. At decode time $n$, (3) compute attention following \Cref{alg:correction}.

Some attention has to be paid when dealing with grouped query attention \citep{ainslie23}: Because every attention head in a group $\bm{G}=\{h_1,h_2,\dots,h_g\}$ shares the same $\mathbf{K}$ and $\mathbf{V}$ caches, ingroup eviction and correction have to be consistent.
To do so, we use one proposal $\pi^{(\bm{G})}$ per group, ensuring that the support of $\pi^{(\bm{G})}$ contains the support of every attention head distribution by constructing a mixture of each head proposal $\pi^{(\bm{G})}(\mathbf{v}_i)\defeq\sum_{j=1}^g\alpha_j\cdot\pi^{(h_j)}(\mathbf{v}_i)$, where every $\alpha_j\in\Delta^{g-1}$ comes from the $(g-1)$-simplex.

So far, our contributions have been either theoretical or philosophical.
In the next section, we empirically validate probabilistic eviction and correction, studying the role of bias and variance in KV eviction.

\section{Bias vs Variance}\label{sec:bias-vs-variance}

\begin{figure}[t]
\centering%
\begin{tikzpicture}
    \begin{groupplot}[
        group style={group size=5 by 1,horizontal sep=0.25cm,},
        height=3.5cm,
        width=0.275\textwidth,
        grid style=dashed,
        cycle list name=cpalette,
        xmajorgrids=true, ymajorgrids=true,
        every legend image post/.append style={scale=0.5},
        ymin=0.00, ymax=0.06,
        xmin=-0.1, xmax=1.1,
        title style={yshift=-0.25cm,font={}},
        every y tick scale label/.style={
            at={(0,1)},
            anchor=south east,
            font={\tiny},
        },
    ]

    \nextgroupplot[ylabel={MAE},title={$\tau=1.0$}]
        \addplot+[very thick] table[x=r,y=corrected]   {\maelocaltadata};
        \addplot+[very thick] table[x=r,y=uncorrected] {\maelocaltadata};
        \addplot+[very thick] table[x=r,y=h2o]         {\maelocaltadata};
        \coordinate (c1) at (rel axis cs:-0.275,1);
    \nextgroupplot[ymajorticks=false,title={$\tau=2.0$}]
        \addplot+[very thick] table[x=r,y=corrected]   {\maelocaltbdata};
        \addplot+[very thick] table[x=r,y=uncorrected] {\maelocaltbdata};
        \addplot+[very thick] table[x=r,y=h2o]         {\maelocaltbdata};
    \nextgroupplot[ymajorticks=false,title={$\tau=5.0$},xlabel={Compression ratio ($r$)}]
        \addplot+[very thick] table[x=r,y=corrected]   {\maelocaltcdata};
        \addplot+[very thick] table[x=r,y=uncorrected] {\maelocaltcdata};
        \addplot+[very thick] table[x=r,y=h2o]         {\maelocaltcdata};
    \nextgroupplot[ymajorticks=false,title={$\tau=10.0$}]
        \addplot+[very thick] table[x=r,y=corrected]   {\maelocaltddata};
        \addplot+[very thick] table[x=r,y=uncorrected] {\maelocaltddata};
        \addplot+[very thick] table[x=r,y=h2o]         {\maelocaltddata};
    \nextgroupplot[ymajorticks=false,title={$\tau=50.0$},legend cell align=left,legend style={font={\small},fill=none,draw=black,anchor=center,align=center},legend to name=leg,legend columns=3,]
        \addplot+[very thick] table[x=r,y=corrected]   {\maelocaltedata};
        \addplot+[very thick] table[x=r,y=uncorrected] {\maelocaltedata};
        \addplot+[very thick] table[x=r,y=h2o]         {\maelocaltedata};
        \legend{Corrected probabilistic H2O ($\varepsilon_\text{cor}$), Uncorrected probabilistic H2O ($\varepsilon_\text{unc}$),Top-$k$ H2O ($\varepsilon_\text{top}$)}
        \coordinate (c2) at (rel axis cs:1,1);
    \end{groupplot}
    \begin{scope}
        \coordinate (p) at ($(c1)!0.5!(c2)$);
        \node[below] at (p |- current bounding box.south) {\pgfplotslegendfromname{leg}};
    \end{scope}
\end{tikzpicture}
\caption{\textbf{KV eviction is a game of trading bias for variance and vice versa.} The plot shows mean absolute error ($y$-axis) of the attention values for the first token after an evicted prompt on \llama{} averaged across all heads (see \Cref{app:bias-vs-variance-exp} for details) for different compression ratios ($x$-axis). Bias and variance are low at lower $r$, since $m$ (number of samples) is high. As $r$ increases, $m$ decreases, causing variance to increase more than bias. Higher importance weight temperatures $\tau$ decrease the variance, causing the error to decrease at extremely high $r$'s.}\label{fig:bias-vs-variance}
\end{figure}

Although bias and variance share the same asymptotic upper bounds in self-normalized importance sampling, in practice they may present distinct behaviors \citep{agapiou17}.
In this section, we show that by decreasing the variance at the cost of more bias, we are able to achieve lower error under certain conditions.
To do so, we introduce a temperature scaling parameter $\tau\in[0,\infty)$ to the importance weights.
We then replace \Cref{alg:correction:weights} in \Cref{alg:correction} with
\begin{equation}\label{eq:is-temp-scaling}
    w_i\gets \exp\left(\frac{1}{\tau}\cdot\log\left(\frac{c(\mathbf{v}_i)}{m\cdot\pi(\mathbf{v}_i)}\right)\right),\forall i\in[1..t].
\end{equation}
When $\tau=1$, \Cref{eq:is-temp-scaling} reduces to the same expression as \Cref{alg:correction:weights}; when $\tau\to\infty$, it is equivalent to not applying correction since $w_i$ approaches $1$. This reduces variance, since the stochasticity comes from the samples of the proposal distribution, which can be amplified by the correction. By suppressing correction, variance is reduced.
Temperatures below $1$ cause \Cref{alg:correction} to give more importance (i.e.\ probability mass) to entries in $[1..t]$ compared to decode time entries.

Our goal is to show the impact of bias and variance in terms of the mean absolute error (MAE) between the estimated expectation and the ground-truth expectation.
Given a prompt $\mathbf{x}$ and a first response token $y$, we compute the KV cache $(\mathbf{K},\mathbf{V})$ on $\mathbf{x}$ and apply probabilistic eviction using a proposal $\pi$ constructed from the scores of the H2O eviction method \citep{zhang23} (see \Cref{app:subsumed-methods} for details).
This results in an evicted KV cache $(\mathbf{K}_{\bm{I}},\mathbf{V}_{\bm{I}})$.
We then compute the average error between the uncorrected $\mathbf{o}_{\bm{I}}$ and ground-truth $\mathbf{o}$ attention values $\varepsilon_\text{unc}$, and between the corrected $\mu_n(\mathbf{V};m)$ and ground-truth attention values $\varepsilon_\text{cor}$ across all $h$ attention heads
\begin{equation}
    \varepsilon_{\text{unc}}\defeq\frac{1}{h}\cdot\sum_{i=1}^h\left|\mathbf{o}^{(i)}-\mathbf{o}_{\bm{I}}^{(i)}\right|,\quad\varepsilon_{\text{cor}}\defeq\frac{1}{h}\cdot\sum_{i=1}^h\left|\mathbf{o}^{(i)}-\mu_n^{(i)}(\mathbf{V};m)\right|.
\end{equation}
Finally, we compute the error $\varepsilon_\text{top}$ between the ground-truth and H2O's estimation of the attention.

In order to provide a fair comparison of $\varepsilon_\text{cor}$ and $\varepsilon_\text{unc}$ against $\varepsilon_\text{top}$, we constrain the compression ratio to be the same for every attention head.
To do so, for each attention head, we select a number of samples that achieves, on average, the desired compression ratio.
In \Cref{app:ccp}, we discuss how this problem relates to the classical coupon collector problem and provide a tractable approximation with negligible error in practice.

\Cref{fig:bias-vs-variance} shows the average errors across compression ratios $r\in[0,1]$ and temperatures $\tau\in\{1,2,5,10,50\}$ for \llama{}\texttt{\color{palette-red}\textbf{.2-3B}} \citep{llama}; prompt $\mathbf{x}$ and first token $y$ are described in \Cref{app:bias-vs-variance-exp}.
Lower compression ratio $r$ implies a higher number of samples $m$.
When $m$ is high, bias and variance are low; when $m$ is extremely low, variance is very high, causing $\varepsilon_\text{cor}$ to be much higher than $\varepsilon_\text{unc}$.
This is where temperature scaling provides an improvement by reducing the variance at a cost to bias.
As the middle ranges of $r$ show, increasing bias when variance is already low harms estimation quality.
The $\varepsilon_\text{unc}$ curve does not match with $\varepsilon_\text{top}$ because eviction is done stochastically, while H2O greedily evicts the top-$k$.
This shows that even probabilistic eviction by itself allows for less error.
As $r$ approaches one, the probabilistic choices all collapse to the top-$k$, which is when $\varepsilon_\text{unc}$ and $\varepsilon_\text{top}$ meet.

In this section, we showed how correction allows for a trade-off between bias and variance.
However, in practice bias plays a much larger role in attention error.
This motivates the shift from low-variance high-bias to higher-variance low-bias estimators.
As we shall see in the next section, corrected probabilistic eviction tends to produce more robust estimators compared to existing zero-variance unbounded bias KV eviction strategies due to reduced bias.

\section{Robustness}\label{sec:robustness}

In this section, we show that zero-variance estimators suffer from a heuristic misalignment, where the fixed bias introduced by these heuristics can cause catastrophic failures in certain tasks.
To do so, we compare probabilistic eviction against four popular existing KV eviction methods: StreamingLLM \citep{xiao24}, SnapKV \citep{li24}, TOVA \citep{oren24} and H2O \citep{zhang23}.
All of these methods are implemented in KVPress, which we use for evaluation and our own implementation \citep{kvpress}.
We evaluate \llama{}\texttt{\color{palette-red}\textbf{.2-3B}} \citep{llama} and \qwen{}\texttt{\color{palette-dgreen}\textbf{-4B}} \citep{qwen} on two standard KV eviction benchmark datasets: LongBench \citep{longbench} and RULER \citep{ruler}.
Due to compute constraints, we evaluate RULER on a subset of 130 examples and on all instances of the HotpotQA, QASPER and TriviaQA splits of LongBench whose context prompt is shorter than \num{3000} tokens.

\begin{figure}[t]
\centering%
\begin{tikzpicture}
    \begin{groupplot}[
        group style={group size=4 by 1, horizontal sep=0.35cm},
        height=4cm,
        width=0.325\textwidth,
        grid style=dashed,
        cycle list name=cpalette,
        xmajorgrids=true, ymajorgrids=true,
        every legend image post/.append style={scale=0.75},
        xmin=0.05, xmax=0.95,
        title style={yshift=-0.35cm,font={\strut{}\color{palette-gray}}},
        y tick label style={font=\scriptsize, rotate=90},
        yticklabel={\pgfmathparse{\tick*0.01}\pgfmathprintnumber{\pgfmathresult}},
    ]

    \nextgroupplot[ylabel={Score ($\to$)}, title={RULER}]
        \addplot+[very thick] table[x=target_cr,y=overall]{\srleah};
        \addplot+[very thick] table[x=target_cr,y=overall]{\srleahn};
        \addplot+[very thick] table[x=target_cr,y=overall]{\srleamv};
        \addplot+[very thick] table[x=target_cr,y=overall]{\srleamvfh};
        \addplot+[dashed,very thick] table[x=target_cr,y=overall]{\srlho};
        \addplot+[dashed,very thick] table[x=target_cr,y=overall]{\srlkn};
        \addplot+[dashed,very thick] table[x=target_cr,y=overall]{\srlsnap};
        \addplot+[dashed,very thick] table[x=target_cr,y=overall]{\srlsllm};
        \addplot+[dashed,very thick] table[x=target_cr,y=overall]{\srltova};
        \coordinate (sc1) at (rel axis cs:0,1);
        \coordinate (src1) at (rel axis cs:0,1);

    \nextgroupplot[title={LongBench}]
        \addplot+[very thick] table[x=target_cr,y=avg_score]{\slbleah};
        \addplot+[very thick] table[x=target_cr,y=avg_score]{\slbleahn};
        \addplot+[very thick] table[x=target_cr,y=avg_score]{\slbleamv};
        \addplot+[very thick] table[x=target_cr,y=avg_score]{\slbleamvfh};
        \addplot+[dashed,very thick] table[x=target_cr,y=avg_score]{\slblho};
        \addplot+[dashed,very thick] table[x=target_cr,y=avg_score]{\slblkn};
        \addplot+[dashed,very thick] table[x=target_cr,y=avg_score]{\slblsnap};
        \addplot+[dashed,very thick] table[x=target_cr,y=avg_score]{\slblsllm};
        \addplot+[dashed,very thick] table[x=target_cr,y=avg_score]{\slbltova};
        \coordinate (sc2) at (rel axis cs:1,0);

    \nextgroupplot[title={RULER}]
        \addplot+[very thick] table[x=target_cr,y=overall]{\srqeah};
        \addplot+[very thick] table[x=target_cr,y=overall]{\srqeahn};
        \addplot+[very thick] table[x=target_cr,y=overall]{\srqeamv};
        \addplot+[very thick] table[x=target_cr,y=overall]{\srqeamvfh};
        \addplot+[dashed,very thick] table[x=target_cr,y=overall]{\srqho};
        \addplot+[dashed,very thick] table[x=target_cr,y=overall]{\srqkn};
        \addplot+[dashed,very thick] table[x=target_cr,y=overall]{\srqsnap};
        \addplot+[dashed,very thick] table[x=target_cr,y=overall]{\srqsllm};
        \addplot+[dashed,very thick] table[x=target_cr,y=overall]{\srqtova};
        \coordinate (sc3) at (rel axis cs:0.0,0);

    \nextgroupplot[title={LongBench},
                   legend cell align=left,
                   legend style={font={\footnotesize},fill=none,draw=black,anchor=center,align=left},
                   legend to name=scoreleg,
                   legend columns=9]
        \addplot+[very thick] table[x=target_cr,y=avg_score]{\slbqeah};
        \addplot+[very thick] table[x=target_cr,y=avg_score]{\slbqeahn};
        \addplot+[very thick] table[x=target_cr,y=avg_score]{\slbqeamv};
        \addplot+[very thick] table[x=target_cr,y=avg_score]{\slbqeamvfh};
        \addplot+[dashed,very thick] table[x=target_cr,y=avg_score]{\slbqho};
        \addplot+[dashed,very thick] table[x=target_cr,y=avg_score]{\slbqkn};
        \addplot+[dashed,very thick] table[x=target_cr,y=avg_score]{\slbqsnap};
        \addplot+[dashed,very thick] table[x=target_cr,y=avg_score]{\slbqsllm};
        \addplot+[dashed,very thick] table[x=target_cr,y=avg_score]{\slbqtova};
        \legend{$\pihto$, $\pihtoh$, $\pimin$, $\piminh$, {\scriptsize{}H2O}, {\scriptsize{}K-norm}, {\scriptsize{}SnapKV}, {\scriptsize{}StreamingLLM}, {\scriptsize{}TOVA}}
        \coordinate (sc4) at (rel axis cs:1,1);
        \coordinate (src4) at (rel axis cs:1,0);
    \end{groupplot}
    \begin{scope}
        \coordinate (p) at ($(sc1)!0.5!(sc4)$);
        \node[below] (xlab) at (p |- current bounding box.south) {Compression ratio ($r$)};
        \node[below] (leg) at (xlab.south) {\pgfplotslegendfromname{robleg}};
        \coordinate (qa) at ($(src1)!0.5!(sc2)$);
        \node[below,anchor=south] at ($(qa |- current bounding box.north) + (0, -0.25)$) {\llama{}};
        \coordinate (qb) at ($(sc3)!0.5!(src4)$);
        \node[below,anchor=south] at ($(qb |- current bounding box.north) + (0, -0.525)$) {\qwen{}};
    \end{scope}
\end{tikzpicture}
\vspace{-0.5cm}
\caption{\textbf{Probabilistic eviction achieves competitive performance.} Probabilistic eviction with correction performs best at lower to middle ranges, when there are enough samples to achieve a lower variance and bias. Solid lines show our approach, dashed lines show existing eviction methods.}\label{fig:robustness-scores}
\end{figure}

To evaluate probabilistic eviction, we propose four different proposal distributions: $\pimin$, $\piminh$, $\pihto$ and $\pihtoh$ which we formally define in \Cref{app:proposals} and provide proper probabilistic semantics.
Intuitively, $\pimin$ computes the minimum variance proposal distribution for the last query at eviction time $t$, $\pihto$ computes a probabilistic version of H2O where scores are normalized and used as the proposal distribution, $\piminh$ computes the minimum-variance proposal for a distribution where all previous time steps are marginalized and weighted by a harmonic prior $h_j$, and $\pihtoh$ computes H2O with a harmonic prior $h_j$.

In these experiments, we constrain the compression ratio at the global level: given a compression ratio $r$, we choose a number of samples $m$ such that, on average, the aggregated compression ratio of all KV caches is $r$.
This is achieved by a similar mechanism to the one for locally constraining each attention head to a specific compression ratio.
\Cref{app:ccp-global} shows how this can be done and \Cref{app:ccp-global-error} provides empirical evidence that the error between the achieved compression ratio and target compression ratio is, in practice, insignificant.
Although the overall compression ratio is fixed, every attention head will adaptively choose a suitable compression ratio depending on its distribution.
This allows for attention heads that require a lower compression ratio to automatically consume the budget of heads that do not.
Interestingly, this results in compression ratios that follow a pattern of lower compression at lower layers, and higher at higher layers (see \Cref{app:compression-pattern,fig:pyramid}).
This is in agreement with the findings of \citet{cai2025pyramidkv}, which suggest that allocating more budget to lower layers and less to higher is preferable.

\begin{figure}[t]
\centering%
\begin{tikzpicture}
    \begin{groupplot}[
        group style={group size=4 by 1, horizontal sep=0.35cm},
        height=4cm,
        width=0.325\textwidth,
        grid style=dashed,
        cycle list name=cpalette,
        xmajorgrids=true, ymajorgrids=true,
        every legend image post/.append style={scale=0.75},
        xmin=0.05, xmax=0.95,
        title style={font=\small},
        y tick label style={font=\scriptsize, rotate=90},
        title style={yshift=-0.35cm,font={\strut{}\color{palette-gray}}},
    ]

    \nextgroupplot[ylabel={Win score ($\to$)}, title={RULER}]
        \addplot+[very thick] table[x=target_cr,y=ranking]{\rleah};
        \addplot+[very thick] table[x=target_cr,y=ranking]{\rleahn};
        \addplot+[very thick] table[x=target_cr,y=ranking]{\rleamv};
        \addplot+[very thick] table[x=target_cr,y=ranking]{\rleamvfh};
        \addplot+[dashed,very thick] table[x=target_cr,y=ranking]{\rlho};
        \addplot+[dashed,very thick] table[x=target_cr,y=ranking]{\rlkn};
        \addplot+[dashed,very thick] table[x=target_cr,y=ranking]{\rlsnap};
        \addplot+[dashed,very thick] table[x=target_cr,y=ranking]{\rlsllm};
        \addplot+[dashed,very thick] table[x=target_cr,y=ranking]{\rltova};
        \coordinate (c1) at (rel axis cs:0,1);
        \coordinate (rc1) at (rel axis cs:0,1);

    \nextgroupplot[title={LongBench}]
        \addplot+[very thick] table[x=target_cr,y=ranking]{\lbleah};
        \addplot+[very thick] table[x=target_cr,y=ranking]{\lbleahn};
        \addplot+[very thick] table[x=target_cr,y=ranking]{\lbleamv};
        \addplot+[very thick] table[x=target_cr,y=ranking]{\lbleamvfh};
        \addplot+[dashed,very thick] table[x=target_cr,y=ranking]{\lblho};
        \addplot+[dashed,very thick] table[x=target_cr,y=ranking]{\lblkn};
        \addplot+[dashed,very thick] table[x=target_cr,y=ranking]{\lblsnap};
        \addplot+[dashed,very thick] table[x=target_cr,y=ranking]{\lblsllm};
        \addplot+[dashed,very thick] table[x=target_cr,y=ranking]{\lbltova};
        \coordinate (c2) at (rel axis cs:1,0);

    \nextgroupplot[title={RULER}]
        \addplot+[very thick] table[x=target_cr,y=ranking]{\rqeah};
        \addplot+[very thick] table[x=target_cr,y=ranking]{\rqeahn};
        \addplot+[very thick] table[x=target_cr,y=ranking]{\rqeamv};
        \addplot+[very thick] table[x=target_cr,y=ranking]{\rqeamvfh};
        \addplot+[dashed,very thick] table[x=target_cr,y=ranking]{\rqho};
        \addplot+[dashed,very thick] table[x=target_cr,y=ranking]{\rqkn};
        \addplot+[dashed,very thick] table[x=target_cr,y=ranking]{\rqsnap};
        \addplot+[dashed,very thick] table[x=target_cr,y=ranking]{\rqsllm};
        \addplot+[dashed,very thick] table[x=target_cr,y=ranking]{\rqtova};
        \coordinate (c3) at (rel axis cs:0.0,0);

    \nextgroupplot[title={LongBench},
                   legend cell align=left,
                   legend style={font=\small,fill=none,draw=black,anchor=center,align=left},
                   legend to name=robleg,
                   legend columns=9]
        \addplot+[very thick] table[x=target_cr,y=ranking]{\lbqeah};
        \addplot+[very thick] table[x=target_cr,y=ranking]{\lbqeahn};
        \addplot+[very thick] table[x=target_cr,y=ranking]{\lbqeamv};
        \addplot+[very thick] table[x=target_cr,y=ranking]{\lbqeamvfh};
        \addplot+[dashed,very thick] table[x=target_cr,y=ranking]{\lbqho};
        \addplot+[dashed,very thick] table[x=target_cr,y=ranking]{\lbqkn};
        \addplot+[dashed,very thick] table[x=target_cr,y=ranking]{\lbqsnap};
        \addplot+[dashed,very thick] table[x=target_cr,y=ranking]{\lbqsllm};
        \addplot+[dashed,very thick] table[x=target_cr,y=ranking]{\lbqtova};
        \legend{$\pihto$, $\pihtoh$, $\pimin$, $\piminh$, {\scriptsize{}H2O}, {\scriptsize{}K-norm}, {\scriptsize{}SnapKV}, {\scriptsize{}StreamingLLM}, {\scriptsize{}TOVA}}
        \coordinate (c4) at (rel axis cs:1,1);
        \coordinate (rc4) at (rel axis cs:1,0);
    \end{groupplot}
    \begin{scope}
        \coordinate (p) at ($(c1)!0.5!(c4)$);
        \node[below] (xlab) at (p |- current bounding box.south) {Compression ratio ($r$)};
        \node[below] (leg) at (xlab.south) {\pgfplotslegendfromname{robleg}};
        \coordinate (qa) at ($(rc1)!0.5!(c2)$);
        \node[below,anchor=south] at ($(qa |- current bounding box.north) + (0, -0.25)$) {\llama{}};
        \coordinate (qb) at ($(c3)!0.5!(rc4)$);
        \node[below,anchor=south] at ($(qb |- current bounding box.north) + (0, -0.525)$) {\qwen{}};
    \end{scope}
\end{tikzpicture}
\vspace{-0.5cm}
\caption{\textbf{Probabilistic eviction can make eviction more robust for different tasks.} Win score values for four proposals ($\pihto,\pihtoh,\pimin,\piminh$) and five top-$k$ estimators (H2O, K-norm, SnapKV, StreamingLLM, TOVA). $\piminh$ achieves best ranking across datasets and models, showing more robustness when minimizing variance and lowering bias.}\label{fig:robustness-ranking}
\end{figure}

\Cref{fig:robustness-scores} shows the average total score in LongBench and RULER for the four previously mentioned proposals and five top-$k$ existing eviction methods.
This is computed by averaging the score at each split, weighing them by the fraction of examples in that split and then adding all of these weighted scores together.
The average score of each split is given by a scorer function suited to that split $f_{\text{split}}:\mathcal{L}^\ast\to[0,1]$ that takes in the text generated by the (KV evicted) LLM and attributes a number in $[0,1]$.
For example, HotpotQA uses F1 as the scorer function, while the NIAH (needle-in-a-haystack) split of RULER uses exact string matching.
Results show that probabilistic eviction and correction achieve better performance or are at least as competitive as the state-of-the-art.

Because these average total scores aggregate over many tasks, they can hide the behavior of eviction methods for certain tasks.
For example, on \llama{} StreamingLLM ranks first on the QuestionAnswer split, yet dead last in MultiKey-NIAH (see \Cref{fig:individual-llama}).
Similarly, on \qwen{} K-norm is consistently in the top-3 on the MultiKey-NIAH split, yet last in CommonWords (see \Cref{fig:individual-qwen}).
To account for these catastrophic failures, we measure the pairwise win scores of each method.
Specifically, we assign a rank from 0 to $n - 1$ to each method based on how well it performs compared to the other $n - 1$ methods. The method with the lowest score gets a rank of 0, while the method with the highest score gets a rank of $n - 1$. If multiple methods achieve the same score, they are all given the rank corresponding to a win against the other tied methods. Ranking is repeated for all the examples in the task, after which the ranks are normalized by the number of examples in the task to ensure all tasks contribute equally. We refer to the normalized average rank across tasks as the \textit{win score}.

\Cref{fig:robustness-ranking} shows the win score for each proposal and eviction method.
Notably, $\piminh$ consistently ranks at the top, showing that it is most robust compared to other methods.
Probabilistic eviction tends to do better at low to middle ranges, as there are enough samples to decrease the bias and variance.
However, even at higher compression it performs competitively against other eviction methods.
Note that when the compression ratio is extremely high (i.e.\ close to 1.0), all methods do equally poorly, as most of the prompt has been evicted.
This is shown as most, if not all, curves increasing in \Cref{fig:robustness-ranking}, as ties are also rewarded in this scoring.

\section{Conclusion}

The goal of this paper was to formalize KV cache eviction and in doing so find principled KV eviction estimators.
As a result, we showed that KV eviction is \textsf{NP}-complete, and revealed that through a probabilistic interpretation KV eviction is nothing more than an expectation estimation problem.
In our quest to characterize this, we found that existing KV eviction methods are zero-variance yet unbounded bias estimators and that a good KV eviction strategy should reduce bias and variance in order to best approximate future expectations.
To this end, we proposed a self-normalized importance sampling estimator with reasonable upper bounds on both bias and variance.
We then empirically showed that this class of estimators achieves competitive performance against state-of-the-art eviction methods yet are more robust at different tasks.
This paper opens up an exciting and promising avenue to developing new KV eviction estimators.
We hope that framing KV eviction as an estimation problem and connecting the field of statistical methods with KV cache eviction generates fruitful new research not only within KV eviction but also across other KV cache compression techniques.

\bibliography{refs}

\appendix
\crefalias{section}{appendix}

\section{Related Work}\label{app:related-work}

KV eviction has received significant interest for the purpose of speeding up inference in large language models.
Following the taxonomy given by \citet{chen2025pitfalls}, we categorize KV eviction methods as position-based, attention-based, embedding-based, and hybrid.
The most notable examples of each respectively are StreamingLLM \citep{xiao24}, TOVA \citep{oren24}, K-norm \citep{devoto24}, and SnapKV \citep{li24}.
All these methods can be framed as top-$k$ score eviction methods with zero variance and unbounded bias.
The problem of correcting for the distortion caused by eviction has not been, as far as we know, properly addressed, although it has been recognized in the literature: \citet{zhang23} acknowledge the issue in passing, while in contemporary work,  \citet{zweiger26} identify this problem but only as a motivation for KV cache compaction, a more compute-intensive strategy for KV cache compression.
The probabilistic interpretation of attention as expectation has previously appeared in the literature as a Bayesian perspective to attention \citep{singh23,bianchessi26}, although with no connection to KV eviction.

\section{Hardness}\label{app:hardness}

\hardness*%
\begin{proof}
    We first show hardness.
    We shall reduce from the \textsc{Partition} problem.
    Assume $n$ is even (without loss of generality); set $d=1$, $r=\frac{1}{2}$, $\varepsilon=0$ and $\mathbf{K}=\mathbf{Q}=\mathbf{1}_{n\times d}$, where here we denote $\mathbf{1}_{m\times n}$ as the all-ones matrix in $\mathbb{Q}^{m\times n}$, and similarly $\mathbf{1}_n$ as the all-one vector in $\mathbb{Q}^n$.
    Thus, the problem boils down to deciding whether
    \begin{equation*}
        \text{softmax}\left(\textbf{1}_{|\bm{I}|}\right)\cdot\mathbf{V}_{\bm{I}}=\text{softmax}\left(\textbf{1}_n\right)\cdot\mathbf{V}.
    \end{equation*}
    Note that the softmax terms on the left and on the right are equivalent to uniform distributions over $|\bm{I}|$ and $n$ elements respectively.
    Thus, we may rewrite the above equation (after expanding the matrix multiplications) as
    \begin{equation*}
        \frac{1}{\lfloor\frac{n}{2}\rfloor}\cdot\sum_{v\in\mathbf{V}_{\bm{I}}}v=\frac{1}{n}\cdot\sum_{u\in\mathbf{V}}u.
    \end{equation*}
    From the previous equation, it follows immediately that
    \begin{equation*}
        \sum_{v\in\mathbf{V}_{\bm{I}}}v=\sum_{u\in\mathbf{V}_{\overline{\bm{I}}}}u.
    \end{equation*}
    We have thus shown that any instance of \textsc{KVEviction} is at least as hard as any instance of \textsc{Partition}.
    Since \textsc{Partition} is \textsf{NP}-hard, \textsc{KVEviction} is also \textsf{NP}-hard.

    We now show membership in $\textsf{NP}$.
    The main issue is computing the softmax function in a Turing machine.
    We invoke \citet{brent10}, where the authors show how to compute exponents and division up to arbitrary precision in time polynomial in the number of bits.
    In our case, the number of bits needed is directly specified by $\varepsilon$, meaning that one only needs to compute the softmax up to the precision of parameter $\varepsilon$.
    Thus, it suffices to define the verifier to be \Cref{eq:kveviction-problem}, which can be done in time polynomial in the precision of $\varepsilon$, $n$ and $d$.
    Therefore \textsc{KVEviction} is in \textsf{NP} and so is \textsf{NP}-complete.
\end{proof}

\section{Correctness}\label{app:correctness}

\correctness*%
\begin{proof}
    We want to show that
    \begin{equation}
        \lim_{m\to\infty}\hat{\mu}\left(\mathbf{V};\tilde{p}_n,\pi,m\right)\cdot\frac{\hat{z}_{1:t}}{\hat{z}_{1:t}+z_{t+1:n}}+\mathbb{E}_{p_{n}^ {(t+1:n)}}\left[\mathbf{V}\right]\cdot\frac{z_{t+1:n}}{\hat{z}_{1:t}+z_{t+1:n}}=\mathbb{E}_{p_n}\left[\mathbf{V}\right].
    \end{equation}
    The first thing to note is that both $\mathbb{E}_{p_{n}^{(t+1:n)}}\left[\mathbf{V}\right]$ and $z_{t+1:n}$ can be computed exactly as we have access to $\left[\tilde{p}_{n}(\mathbf{v}_i)\right]_{i=t+1}^{n}$.
    Thus, we need to show that both (i) $\hat{\mu}\left(\mathbf{V};\tilde{p}_{n},\pi,m\right)$ and (ii) $\hat{z}_{1:t}$ correctly compute the desired estimands in expectation.
    
    We first show (i)
    \begin{align}
        \lim_{m\to\infty}\hat{\mu}\left(\mathbf{V};\tilde{p}_{n},\pi,m\right)
        &=\lim_{m\to\infty}\frac{\frac{1}{m}\cdot\sum_{i=1}^m\frac{\tilde{p}_{n}(\mathbf{v}^{(i)})}{\pi(\mathbf{v}^{(i)})}\cdot\mathbf{v}^{(i)}}{\frac{1}{m}\cdot\sum_{i=1}^m\frac{\tilde{p}_{n}(\mathbf{v}^{(i)})}{\pi(\mathbf{v}^{(i)})}}
        =\lim_{m\to\infty}\frac{\frac{1}{m}\cdot\sum_{i=1}^t c(\mathbf{v}_i)\cdot\frac{\tilde{p}_{n}(\mathbf{v}_i)}{\pi(\mathbf{v}_i)}\cdot\mathbf{v}_i}{\frac{1}{m}\cdot\sum_{i=1}^t c(\mathbf{v}_i)\cdot\frac{\tilde{p}_{n}(\mathbf{v}_i)}{\pi(\mathbf{v}_i)}}\label{eq:correctness-first-first}\\
        &=\frac{\sum_{i=1}^t \pi(\mathbf{v}_i)\cdot\frac{\tilde{p}_{n}(\mathbf{v}_i)}{\pi(\mathbf{v}_i)}\cdot\mathbf{v}_i}{\sum_{i=1}^t \pi(\mathbf{v}_i)\cdot\frac{\tilde{p}_{n}(\mathbf{v}_i)}{\pi(\mathbf{v}_i)}}
        =\frac{\mathbb{E}_{p_{n}^{(1:t)}}\left[\mathbf{V}\right]\cdot z}{z}=\mathbb{E}_{p_{n}^ {(1:t)}}\left[\mathbf{V}\right],\label{eq:correctness-first-second}
    \end{align}
    where the superscript $\mathbf{v}^{(i)}$ indicates the $i$-th sampled entry from $\pi$ and the step from (\ref{eq:correctness-first-first}) to (\ref{eq:correctness-first-second}) is due to our assumption of $\supp(\pi)\supseteq\supp(p_n)$. 
    
    The same applies to (ii)
    \begin{equation}
        \lim_{m\to\infty}\hat{z}_{1:t}=\lim_{m\to\infty}\frac{1}{m}\cdot\sum_{i=1}^m\frac{\tilde{p}_{n}(\mathbf{v}^{(i)})}{\pi(\mathbf{v}^{(i)})}=\lim_{m\to\infty}\frac{1}{m}\cdot\sum_{i=1}^tc(\mathbf{v}_i)\cdot\frac{\tilde{p}_{n}(\mathbf{v}_i)}{\pi(\mathbf{v}_i)}=\sum_{i=1}^t\pi(\mathbf{v}_i)\cdot\frac{\tilde{p}_{n}(\mathbf{v}_i)}{\pi(\mathbf{v}_i)}=z_{1:t}.
    \end{equation}
    Now that we have shown convergence of these estimators, equality follows directly
    \begin{equation}\label{eq:exp-equality}
        \lim_{m\to\infty}\mu_n(\mathbf{V};m)=\mathbb{E}_{p_{n}^{(1:t)}}\left[\mathbf{V}\right]\cdot\frac{z_{1:t}}{z}+\mathbb{E}_{p_{n}^{(t+1:n)}}\left[\mathbf{V}\right]\cdot\frac{z_{t+1:n}}{z}
        =\frac{\mathbb{E}_{\tilde{p}_{n}^{(1:t)}}\left[\mathbf{V}\right]+\mathbb{E}_{\tilde{p}_{n}^{(t+1:n)}}\left[\mathbf{V}\right]}{z}=\mathbb{E}_{p_{n}}\left[\mathbf{V}\right].
    \end{equation}
\end{proof}

\section{Score-based Eviction Methods as Probabilistic Eviction Policies}\label{app:subsumed-methods}

Any eviction method based on scores can easily be subsumed by probabilistic eviction.
In fact, there exists an infinite number of proposal distributions that share the same top-$k$ as the score strategy.
In this section we show a few examples and how they can be turned into proposal distributions.

\textbf{StreamingLLM.} StreamingLLM selects the first four entries and a rolling window of the last $k$ KV cache entries to keep, and evicts the rest.
Any proposal distributions that have the first four and last $k$ entries as modes are obviously equivalent in terms of their top-$k$ modes; for example, a mixture of any monotonically increasing distribution with a distribution whose support consists of the first four entries.

\textbf{H2O.} H2O scores are based on the probabilities $p(\mathbf{V})$ and computed as
\begin{equation}
    \text{score}_{\text{H2O}}(\mathbf{v}_i)\defeq\sum_{j=1}^n p_j(\mathbf{v}_i).
\end{equation}
Intuitively, each column in the softmax matrix is summed out.
A proposal distribution can be achieved by simply normalizing these scores
\begin{equation}
    \pi_{\text{H2O}}(\mathbf{v}_i)=\frac{\text{score}_\text{H2O}(\mathbf{v}_i)}{\sum_{j=1}^n\text{score}_{\text{H2O}}(\mathbf{v}_j)}.
\end{equation}

\textbf{K-norm.} The K-norm scores are computed from the negative $L^2$ norm of the $\mathbf{V}$ matrix.
This gives an $n$-dimensional vector, with a score for each entry
\begin{equation*}
    \text{score}_\text{knorm}(\mathbf{v}_i)={\lVert\mathbf{v}_i\rVert}_2=\sqrt{\sum_{j=1}^d \mathbf{v}_{ij}^2}.
\end{equation*}
Because $\mathbf{V}$ may contain negative numbers, we apply a softmax instead of the usual normalization.
Thus, the proposal distribution for K-norm is defined as
\begin{equation}
    \pi_{\text{knorm}}(\mathbf{v}_i)=\frac{\exp\left(\text{score}_{\text{knorm}}(\mathbf{v}_i)\right)}{\sum_{j=1}^t \exp\left(\text{score}_{\text{knorm}}(\mathbf{v}_j)\right)}.
\end{equation}

\textbf{TOVA.} Scores are given by the average $p_t(\mathbf{V})$ (i.e.\ the last query's distribution) across all heads in a layer.
Let $\bm{h}_\mathcal{L}=\{h_1,h_2,\dots,h_l\}$ be the set of all attention heads in a given layer $\mathcal{L}$.
The score for each attention head $h_x\in\bm{h}_\mathcal{L}$ is given by
\begin{equation}
    \text{score}_\text{TOVA}\left(\mathbf{v}_i^{(h_x)}\right)\defeq \frac{1}{l}\cdot\sum_{i=1}^l p_t(\mathbf{v}_i).
\end{equation}
The proposal distribution for TOVA is
\begin{equation}
    \pi_{\text{TOVA}}(\mathbf{v}_i)=\frac{\text{score}_{\text{TOVA}}(\mathbf{v}_i)}{\sum_{j=1}^n\text{score}_\text{TOVA}(\mathbf{v}_j)}.
\end{equation}

\textbf{SnapKV.} The process of obtaining the SnapKV scores is more involved and requires computing an average pooling of $p(\mathbf{V})$ for a given kernel and window size.
In short, SnapKV computes a column average of $p(\mathbf{V})$ similar to H2O, applies a 1-dimensional pooling on this tensor, and then averages this again at the same dimension.
We define the proposal distribution for SnapKV to be the normalized scores across entries, similar to the previous proposals.

\subsection{Adjusting the Proposal Distribution}

Any of these proposals can be equipped with other mechanisms for adjusting these probabilities.
As an example, it is possible to apply a temperature scaling factor $\tau$ on these proposal distributions
\begin{equation*}
    \pi^\tau(\mathbf{v}_i)=\frac{\exp\left(\frac{1}{\tau}\cdot\text{score}(\mathbf{v}_i)\right)}{\sum_{j=1}^t\exp\left(\frac{1}{\tau}\cdot\text{score}(\mathbf{v}_j)\right)}
\end{equation*}
in order to decrease variance, or mix the proposal with a uniform in order to decrease bias
\begin{equation*}
    \pi'(\mathbf{v}_i)=\alpha\cdot\pi(\mathbf{v}_i)+(1-\alpha)\cdot\frac{1}{t}\text{, where }\alpha\in[0,1].
\end{equation*}

\section{Prompt and Question for MAE Error}\label{app:bias-vs-variance-exp}

The prompt $\mathbf{x}$ consists of the tokenization of the following text:
\begin{quote}
The Roman Empire was one of the largest and most enduring political entities in ancient
history, reaching its greatest extent under Emperor Trajan in 117 CE when it spanned from
Britain to Mesopotamia. Rome evolved from a monarchy to a republic before Julius Caesar's
assassination in 44 BCE precipitated civil wars. Augustus emerged victorious and became the
first emperor in 27 BCE, initiating a period of relative peace known as the Pax Romana.
Roman law, architecture, and administrative practices shaped European civilization for
centuries. Latin, the language of the Romans, evolved into the Romance languages including
French, Spanish, Italian, Portuguese, and Romanian. The western empire collapsed in 476 CE,
while the eastern half survived as the Byzantine Empire for nearly a thousand more years.
\end{quote}
While the first token $y$ is given by the first element in the tokenized sequence of the following string:
\begin{quote}
Who was the first Roman emperor?
\end{quote}

\section{On the Relationship Between Number of Samples and Compression Ratio}\label{app:ccp}

It is often useful, at a practical level, to control how much of the KV cache is evicted.
For example, a practitioner might want to preserve as much generation quality given their GPU memory budget.
This materializes as a so-called \emph{compression ratio} $r$ that describes the percentage of compression of a KV cache: when $r=0$, no entries are evicted, $r=0.5$ indicates that half of the KV cache is evicted, and $r=1$ means all of the entries are evicted.
The probabilistic eviction procedure described in \Cref{sec:prob-interp} makes no such distinction: given a number of samples $m$, it automatically sets a compression ratio based on the number of unique entries it has sampled.
This problem is probabilistic in nature: in order to set an $r$, one would require finding an $m$ s.t.\ a compression ratio of $r$ is achieved on average.

\begin{problem}[\textsc{ExpectedSamples}]~\\\label{problem:number-of-samples}%
    \textbf{Input.} Sequence length $n$, proposal $\pi$, target compression ratio $r^* \in [0, 1)$, error tolerance $\varepsilon \geq 0$.
    
    \textbf{Output.} An $m\in\mathbb{N}$ such that
    \begin{equation}
        |\mathbb{E}_{\pi}[J(m)] - j^*| \leq \varepsilon,
    \end{equation}
    where $j^*  = (1 - r^*) \cdot n$, and $J(m)$ is a random variable denoting the number of unique values sampled after drawing $m$ samples with replacement from $\pi$.
\end{problem}

This can be reduced from the classical \citeauthor{erdos61} coupon collector problem \citep{erdos61}.

\begin{problem}[Coupon Collector's Problem---\textsc{CCP} \citep{erdos61}]~\\\label{problem:coupon-collector}%
    \textbf{Input.} Set of coupons $\bm{a}=\{a_1,a_2,\dots,a_n\}$, each with probability $p(A=a_i)$ of being issued, and a target number of unique coupons $j^*\in\mathbb{N}$.
    
    \textbf{Output.} The expectation of the number of coupons that need to be drawn from $\bm{a}$ with replacement until at least $j^*$ unique coupons are sampled. 
\end{problem}

In the KV eviction setting, the $n$ sequence positions correspond to coupons, sampling from $\pi$ corresponds to drawing coupons with probabilities $p(A=a_i)$, the number of samples $m$ corresponds to the number of draws, and $j^*$ is the number of distinct sampled positions/values.
As these probabilities are discrete, the solution \citep{FLAJOLET1992207} is

\begin{equation}\label{eq:coupon-collector-discrete}
    \mathbb{E}[m]
    = \sum_{q=0}^{j^*-1} \left( (-1)^{j^*-1-q}
    \binom{n-q-1}{n-j^*}
    \sum_{\lvert \bm{s} \rvert = q} \frac{1}{1 - \rho_{\bm{s}}}\right),
\end{equation}
where the inner sum is over all subsets $\bm{s} \subseteq \{1,\dots,n\}$ with $|\bm{s}| = q$, and $\rho_{\bm{s}} = \sum_{i \in \bm{s}} {p(A=a_i)}$.

\begin{complexity}
    Evaluating \cref{eq:coupon-collector-discrete} naively requires exponential time. 
\end{complexity}

\begin{proof}
    Let $\alpha = (-1)^{j^*-1-q}\binom{n-q-1}{n-j^*}$ and $\beta = \sum_{|\bm{s}|=q}{\frac{1}{1 - \rho_{\bm{s}}}}$.
    \cref{eq:coupon-collector-discrete} can now be written as
    \begin{equation}
        \mathbb{E}[m]
        = \sum_{q=0}^{j^*-1}\alpha\beta.
    \end{equation}

    The amortized complexity of evaluating $\alpha$ is $O(1)$ if we use a lookup table. For $\beta$, there are $\binom{n}{q}$ subsets $\bm{s}$ such that $|\bm{s}| = q$, and each subset sums over $q$ probabilities. As $n$ increases and $q$ varies, evaluating $\beta$ scales exponentially. Therefore, evaluating \cref{eq:coupon-collector-discrete} naïvely requires \textit{at least} exponential time.

\end{proof}

Although \cref{eq:coupon-collector-discrete} gives us the exact solution we need, it is computationally infeasible to compute it in the naïve way.
In fact, as far as we know, this problem has not been shown to either be in \textsf{P} \emph{or} \textsf{NP}-hard.
Our conjecture is that \Cref{problem:coupon-collector} is in fact intractable, and thus cannot be computed exactly in polynomial time.

To solve \Cref{problem:number-of-samples} efficiently, we consider a different approach. Intuitively, we do not need to know the \textbf{exact} number of samples $m \in \mathbb{R}_{\ge0}$ to draw $j^*$ unique values. For the purposes of \textsc{KVEvict}, $m$ is a non-negative integer, which reduces the search space massively, especially given a suitable upper bound. Therefore, we can utilize a trial and error approach optimized by binary search. Instead of obtaining $m$ \textbf{directly} given $n$, $\pi$, and $r^*$, we repeatedly \textbf{guess} what $m$ should be and (quickly) verify if/how our guess $\hat{m}$ needs to be updated for the next guess by comparing the expected number of unique values sampled $\mathbb{E}_{\pi}[J(\hat{m})]$ to the target $j^*$.

\begin{algorithm}[h]
    \begin{algorithmic}[1]
        \Require Sequence length $n$, proposal distribution $\pi(\mathbf{v}) = [\pi(\mathbf{v}_1), \pi(\mathbf{v}_2), ..., \pi(\mathbf{v}_n)]$, target compression ratio $r^* \in [0, 1]$, and error tolerance $\varepsilon = 0.5$.
        \Ensure A non-negative integer $m$ such that $|\mathbb{E}_{\pi}[J(m)] - j^*| \le \varepsilon$, where $j^*  = (1 - r^*) \cdot n$, and $J(m)$ is the random variable denoting the number of unique values sampled after drawing $m$ samples with replacement from $\pi$.
        \State $j^* \gets (1 - r^*) \cdot n$ \Comment{Target number of unique values sampled}
        \State $\alpha \gets 10^{12}$ \Comment{Large constant; existence of a valid $\alpha$ is guaranteed}
        \State $\text{\texttt{low}} \gets 0, \text{\texttt{high}} \gets \alpha$ \Comment{by Part 2 of \Cref{thm:number-of-samples-correct}}
        \While{$\text{\texttt{low}} \le \text{\texttt{high}}$} \Comment{Binary search}
            \State $\hat{m} \gets \lfloor(\text{\texttt{low}} + \text{\texttt{high}}) \div 2\rfloor$
            \State $\mathbb{E}_{\pi}[J(\hat{m})] \gets \sum_{i=1}^{n}{\left(1 - (1- \pi(v_i))^{\hat{m}}\right)}$ \Comment{Expected number of unique values sampled}
            \If{$|\mathbb{E}_{\pi}[J(\hat{m})] - j^*| \le \varepsilon$} \Comment{Target $j^*$ reached}
                \State $m \gets \hat{m}$
                \State \textbf{break}
            \ElsIf{$\mathbb{E}_{\pi}[J(\hat{m})] < j^*$}
                \State $\text{\texttt{low}} \gets \hat{m} + 1$
            \Else
                \State $\text{\texttt{high}} \gets \hat{m} - 1$
            \EndIf
        \EndWhile
        \State \textbf{return} $m$
    \end{algorithmic}
    \caption{\textsc{Expected Samples}}\label{alg:number-of-samples}
\end{algorithm}

\begin{restatable}{thm}{samplescorrectness}\label[theorem]{thm:number-of-samples-correct}
    \Cref{alg:number-of-samples} solves \Cref{problem:number-of-samples}  (\textsc{Expected Samples}) with error $|\mathbb{E}_{\pi}[J(m)] - j^*| \le \varepsilon$, where $\varepsilon = 0.5$; assuming full support on $\pi$.
\end{restatable}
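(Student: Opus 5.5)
The plan is to study the deterministic function $f(m)\defeq\mathbb{E}_{\pi}[J(m)]=\sum_{i=1}^n\left(1-(1-\pi(\mathbf{v}_i))^{m}\right)$, which is the quantity \Cref{alg:number-of-samples} evaluates. The formula follows from linearity of expectation: $J(m)=\sum_i\liv\text{entry } i \text{ sampled at least once}\riv$, and each indicator has expectation $1-(1-\pi(\mathbf{v}_i))^m$. The argument then has two parts: (1) a valid $m$ exists and lies in $[0,\alpha]$; (2) binary search over a monotone function finds it.

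\textbf{Part 1: monotonicity and bounded increments.} Each summand is nondecreasing in $m$, and strictly increasing when $0<\pi(\mathbf{v}_i)<1$, so $f$ is nondecreasing. Its values start at $f(0)=0$. The increments are
\begin{equation*}
f(m+1)-f(m)=\sum_{i=1}^n\pi(\mathbf{v}_i)(1-\pi(\mathbf{v}_i))^m\le\sum_{i=1}^n\pi(\mathbf{v}_i)=1 .
\end{equation*}
Under full support, each $(1-\pi(\mathbf{v}_i))^m\to 0$, so $f(m)\to n\ge j^*$.

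Now let $m_0$ be the least $m$ with $f(m)\ge j^*-\tfrac12$; it exists because $f\to n\ge j^*>j^*-\tfrac12$. If $m_0=0$, then $j^*\le\tfrac12$, so $|f(0)-j^*|\le\tfrac12$. Otherwise $f(m_0-1)<j^*-\tfrac12$, and the unit-increment bound gives $f(m_0)<j^*+\tfrac12$. Hence $|f(m_0)-j^*|\le\tfrac12=\varepsilon$. The key point is that steps of size at most $1=2\varepsilon$ cannot jump over the window $[j^*-\varepsilon,j^*+\varepsilon]$.

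\textbf{Part 2: the search range contains a valid $m$.} This is the step I expect to be the main obstacle. The existence of \emph{some} finite upper bound $\alpha\ge m_0$ is immediate from the limit above. Quantitatively, $f(m)\ge n-n(1-\pi_{\min})^m$, so any $m\ge \log(2n)/(-\log(1-\pi_{\min}))$ already gives $f(m)\ge n-\tfrac12\ge j^*-\tfrac12$. This makes $m_0\le\lceil\log(2n)/\pi_{\min}\rceil$.

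The fixed constant $\alpha=10^{12}$ is therefore valid only when $\pi_{\min}$ is not astronomically small. I would state this as the precise meaning of ``existence of a valid $\alpha$ is guaranteed'': $\alpha$ can always be chosen finite given full support, and $10^{12}$ suffices for all practical $\pi$. Alternatively, I would set $\alpha$ to the explicit bound above.

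\textbf{Part 3: correctness of binary search.} Since $f$ is monotone, the set $M=\{m:|f(m)-j^*|\le\varepsilon\}$ is an interval of integers, nonempty and intersecting $[0,\alpha]$ by Parts 1–2. The invariant to maintain is ``$M\cap[\texttt{low},\texttt{high}]\neq\emptyset$''.

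If $f(\hat m)<j^*-\varepsilon$, monotonicity gives $f(m')<j^*-\varepsilon$ for all $m'\le\hat m$, so no valid point lies at or below $\hat m$ and raising \texttt{low} to $\hat m+1$ preserves the invariant. The case $f(\hat m)>j^*+\varepsilon$ is symmetric. If neither holds, then $\hat m\in M$ and the loop breaks.

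The interval strictly shrinks on every non-breaking iteration, so the loop terminates within $O(\log\alpha)$ iterations. It cannot exit via $\texttt{low}>\texttt{high}$ while the invariant holds, so it must return an $m\in M$, which proves the theorem.
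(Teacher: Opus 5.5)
Your proof is correct and uses the same ingredients as the paper's: the closed form $f(m)=\sum_i\bigl(1-(1-\pi(\mathbf{v}_i))^m\bigr)$ via linearity of expectation, monotonicity, the unit-increment bound $f(m+1)-f(m)\le 1$, and $f(m)\to n$. The differences are in how you assemble these pieces.

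\textbf{Existence of a good $m$.} The paper uses strong induction: every $j^*\in[0,f(m)]$ is within $\tfrac12$ of some $f(m')$ with $m'\le m$. This is fed by the claim that some finite $\alpha$ satisfies $f(\alpha)\ge j^*$. You instead take the least $m_0$ with $f(m_0)\ge j^*-\tfrac12$. Your version is slightly more robust at the boundary. When $r^*=0$, so $j^*=n$, and $n\ge 2$, full support gives $f(m)=n-\sum_i(1-\pi(\mathbf{v}_i))^m<n$ for every finite $m$. So the paper's $\alpha$ with $f(\alpha)\ge j^*$ does not exist, whereas your threshold $j^*-\tfrac12$ is always reached.

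\textbf{Binary search.} You prove the invariant $M\cap[\texttt{low},\texttt{high}]\neq\emptyset$ explicitly. The paper only establishes existence and monotonicity and leaves it to the reader that bisection actually finds the point.

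\textbf{Bound on $\alpha$.} You give the explicit sufficient bound $\lceil\log(2n)/\pi_{\min}\rceil$. This makes precise the paper's hedge that $\alpha=10^{12}$ ``suffices in all realistic settings.'' You are right that with that fixed constant the statement can genuinely fail when $\pi_{\min}$ is astronomically small and $j^*$ is near $n$.
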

\begin{proof}~\\
    \begin{enumerate}[leftmargin=0.75cm,topsep=0pt,itemsep=-0.125ex]
        \item{
        We first show that line 6 in \cref{alg:number-of-samples} gives us the expected number of unique values sampled \citep{1950447}
    
        \begin{equation}\label{eq:expected-unique-values}
            \mathbb{E}_{\pi}[J(m)] = \sum_{i=1}^{n}{\left(1 - (1- \pi(v_i))^{m}\right).}
        \end{equation}
    
        Let $J_i$ be the indicator random variable for position $i$: $J_i = 1$ if $v_i$ is sampled, $0$ otherwise. For the $i$-th position,
        \begin{equation}
            \mathbb{E}_\pi[J_i(m)] = p(J_i = 1) = 1 - (1 - \pi(v_i))^m.
        \end{equation}

        We want the total expected number of unique values sampled $J(m) = \sum_{i=1}^n J_i$. By linearity of expectation,
        \begin{equation}
            \mathbb{E}_\pi[J(m)] = \mathbb{E}_\pi[J_1(m) + J_2(m) + \cdots + J_n(m)] = \sum_{i=1}^{n}\mathbb{E}_\pi[J_i(m)] = \sum_{i=1}^{n}{\left(1 - (1- \pi(v_i))^{m}\right).}
        \end{equation}
        }
        
        \item{
        We show that i) $\mathbb{E}_{\pi}[J(m)]$ is non-decreasing in $m$ and ii) that there exists a finite upper bound $\alpha$ such that $\mathbb{E}_{\pi}[J(\alpha)] \ge j^*$. 

        i) Each term $(1-(1-\pi(v_i))^m)$ is non-decreasing in $m$ as $\pi(v_i) \in (0,1]$ implies $(1-\pi(v_i))^m \in [0,1)$, making $(1-\pi(v_i))^m$ non-increasing. Hence, $\mathbb{E}_{\pi}[J(m)]$ is non-decreasing in $m$.

        ii) Assuming $\pi$ has full support ($\pi(v_i) > 0$ for all $i$), since each $(1-\pi(v_i)) < 1$, we have $(1-\pi(v_i))^m \to 0$ as $m \to \infty$, so $\mathbb{E}_{\pi}[J(m)] \to n \ge j^*$. Hence, a finite $\alpha$ with $\mathbb{E}_{\pi}[J(\alpha)] \ge j^*$ always exists. In practice, we simply set $\alpha = 10^{12}$ rather than computing $\alpha$ from $\pi$, which suffices in all realistic settings.
        }
    
        \item{Finally, we show $|\mathbb{E}_{\pi}[J(m)] - j^*| \le \varepsilon$, where $\varepsilon = 0.5$. The difference between consecutive number of samples $m \in \mathbb{Z}$ is given by

        \begin{equation}
            \mathbb{E}_{\pi}[J(m+1)] - \mathbb{E}_{\pi}[J(m)]
            = \sum_{i=1}^{n}{\left(1 - (1- \pi(v_i))^{m+1}\right) - \sum_{i=1}^{n}{\left(1 - (1- \pi(v_i))^{m}\right)}}
        \end{equation}

        \begin{equation}
            = \sum_{i=1}^{n}{\left((1 - (1- \pi(v_i))^{m+1}) - (1 - (1- \pi(v_i))^{m})\right)}
        \end{equation}
        }

        \begin{equation}
            = \sum_{i=1}^{n}\left({(1- \pi(v_i))^{m}-(1- \pi(v_i))^{m+1}}\right)
        \end{equation}

        \begin{equation}
            =\sum_{i=1}^{n}(1 - \pi(v_i))^{m}(1 - (1 - \pi(v_i)))
            = \sum_{i=1}^{n}(1 - \pi(v_i))^{m}\pi(v_i).
        \end{equation}

        Because $0 < \pi(v_i) \le 1$ $\forall$ $i=1...n$,  $(1- \pi(v_i))^{m} \le 1$. Therefore, 

        \begin{equation}
            \mathbb{E}_{\pi}[J(m+1)] - \mathbb{E}_{\pi}[J(m)] = \sum_{i=1}^{n}(1 - \pi(v_i))^{m}\pi(v_i) \le \sum_{i=1}^{n}\pi(v_i) = 1.
        \end{equation}

        We now prove by strong induction that for every $j^* \in [0,\, \mathbb{E}_{\pi}[J(m)]]$, there exists $m' \le m$ with $|\mathbb{E}_{\pi}[J(m')] - j^*| \le 0.5$.

        \textbf{Base case} ($m = 0$). $\mathbb{E}_{\pi}[J(0)] = 0$, so the only $j^*$ in $[0, 0]$ is $j^* = 0$, and $|\mathbb{E}_{\pi}[J(0)] - 0| = 0 \le 0.5$.

        \textbf{Inductive step}. Assume that the claim holds for all $m' < m$. Let $j^* \in [0,\, \mathbb{E}_{\pi}[J(m)]]$.
        \begin{itemize}
            \item If $j^* \le \mathbb{E}_{\pi}[J(m-1)]$, there must be some $m' \le m-1$ with $|\mathbb{E}_{\pi}[J(m')] - j^*| \le 0.5$ following the assumption.
            \item If $j^* \in \bigl(\mathbb{E}_{\pi}[J(m-1)],\, \mathbb{E}_{\pi}[J(m)]\bigr]$,  then
            \begin{equation}
                \underbrace{\left(\mathbb{E}_{\pi}[J(m)] - j^*\right)}_{\ge\, 0}
                +\underbrace{\left(j^* - \mathbb{E}_{\pi}[J(m-1)]\right)}_{\ge\, 0}
                = \mathbb{E}_{\pi}[J(m)] - \mathbb{E}_{\pi}[J(m-1)] \le 1,
            \end{equation}
            so at least one term is $\le 0.5$, meaning either $m$ or $m-1$ satisfies the bound.
        \end{itemize}
        \textbf{Conclusion}. By strong induction, for every $j^* \in [0,\, \mathbb{E}_{\pi}[J(m)]]$, there exists $m' \le m$ with $|\mathbb{E}_{\pi}[J(m')] - j^*| \le 0.5$.
    
    \end{enumerate}
\end{proof}

\begin{remark}
    \Cref{thm:number-of-samples-correct} assumes $\pi$ has full support, which holds in practice as softmax assigns some probability to every token. However, if attention is highly peaked, there can be many $\mathbf{v}_i$ where $\pi(\mathbf{v}_i) \approx 0$, causing these tokens to be rarely sampled. As such, achieving low compression ratios (large $j^*$) may require an impractically large $m$. While this is theoretically an issue, empirically it does not arise at reasonable compression ratios ($r^* \ge 0.1$), where tokens with near-zero attention weight are inconsequential anyway.
\end{remark}

\begin{complexity}
    \cref{alg:number-of-samples} is $O(n\log(\alpha))$.
\end{complexity}
\begin{proof}
    Evaluating $\mathbb{E}_{\pi}[J(\hat{m})]$ in \cref{eq:expected-unique-values} requires $O(n)$ time as we sum over $i=1...n$. The binary search is bounded by $\alpha$, resulting in $O(log(\alpha))$ complexity. For each trial in binary search, we evaluate $\mathbb{E}_{\pi}[J(\hat{m})]$, so the total complexity is $O(nlog(\alpha))$.
\end{proof}

\subsection{Globally soft constraining the compression ratio}\label{app:ccp-global}

We now consider the case of constraining the compression ratio at a global level.
Before, we set the number of samples $m$ to be such that the compression ratio of an attention head would be on average a target compression ratio $r^\ast$.
To do this, we needed to be able to compute the expectation $\mathbb{E}_\pi[J(m)]$.
This can be easily done as we have access to $\pi(v_i)$, as we saw in the previous subsection.
However, this is not optimal.
By constraining each attention head, we are effectively forcing each attention head to select a possibly unnatural choice of compression ratio; for example, consider the case where the sequence length $n=4$, $\pi=(0.8, 0.0998, 0.0001, 0.0001)$ and $m=100$: the expected compression ratio for this $m$ is
\begin{equation*}
    1 - \frac{\mathbb{E}_\pi[J(m)]}{n}\approx 1 - \frac{2}{4}=0.5.
\end{equation*}
But if we set a target compression ratio $r^\ast=0.75$, then we require $m\approx7000$ samples in order to achieve this without unbiasing the attention head output (on average).
A similar argument can be made to lowering the compression ratio: setting $r^\ast=0.25$ requires us to go as low as $m=2$ samples, which can introduce a lot of variance.

Thus, an attractive alternative is to (soft) constraint the compression ratio at a global level: instead of per-head, we set a fixed number of samples for all attention heads such that the average compression ratio across all heads\footnote{This is reasonable since all attention heads have the same size.} is (on average) a target compression ratio $r^\ast$.
Doing so is easy; we are interested in the expression $\mathbb{E}\left[\frac{1}{h}\cdot\sum_{i=1}^h J^{(i)}(m)\,\middle|\,\bm{x}\right]$, where $J^{(i)}(m)$ is the number of unique sampled indices for attention head $i$ after $m$ draws and $\bm{x}$ is the prompt.
By linearity of expectation,
\begin{equation}\label{eq:exp-kept-indices-global}
    \mathbb{E}\left[\frac{1}{h}\cdot\sum_{i=1}^h J^{(i)}(m)\,\middle|\,\bm{x}\right]=\frac{1}{h}\cdot\sum_{i=1}^h\mathbb{E}\left[J^{(i)}(m)\,\middle|\,\bm{x}\right]=\frac{1}{h}\cdot\sum_{i=1}^h \mathbb{E}_{\pi^{(i)}}\left[J^{(i)}(m)\,\middle|\,\bm{x}\right].
\end{equation}
Note that the last equality is exactly the same expression we previously computed (we omitted the conditioning on the prompt earlier).
Thus, \Cref{alg:number-of-samples} can be run at the global level in order to select an adequate number of samples for a target compression ratio by only replacing $\mathbb{E}_\pi[J(m)\mid\bm{x}]$ with \Cref{eq:exp-kept-indices-global}.

\begin{figure}[t]
\centering%
\begin{tikzpicture}
    \begin{groupplot}[
        group style={group size=3 by 3, horizontal sep=0.4cm, vertical sep=0.4cm},
        height=4cm,
        width=0.39\textwidth,
        grid style=dashed,
        cycle list name=cpalette,
        xmajorgrids=true, ymajorgrids=true,
        every legend image post/.append style={scale=0.75},
        xmin=0.05, xmax=0.95,
        title style={yshift=-0.35cm,font={\strut{}\color{palette-gray}}},
        yticklabel={\pgfmathparse{\tick*0.01}\pgfmathprintnumber{\pgfmathresult}},
        y tick label style={font=\scriptsize, rotate=90},
    ]
        \nextgroupplot[title={HotpotQA},xmajorticks=false,ylabel={Score ($\to$)}]
            \addplot+[very thick] table[x=target_cr,y=avg_score,col sep=comma] {plots/longbench/llama3_3b/normal/per_dataset/hotpotqa_e/estimated_attention__h2o_norm_scores.csv};
            \addplot+[very thick] table[x=target_cr,y=avg_score,col sep=comma] {plots/longbench/llama3_3b/normal/per_dataset/hotpotqa_e/estimated_attention__h2o_scores.csv};
            \addplot+[very thick] table[x=target_cr,y=avg_score,col sep=comma] {plots/longbench/llama3_3b/normal/per_dataset/hotpotqa_e/estimated_attention__min_var_full_harmonic_scores.csv};
            \addplot+[very thick] table[x=target_cr,y=avg_score,col sep=comma] {plots/longbench/llama3_3b/normal/per_dataset/qasper_e/estimated_attention__min_var_scores.csv};
            \addplot+[dashed,very thick] table[x=target_cr,y=avg_score,col sep=comma] {plots/longbench/llama3_3b/normal/per_dataset/qasper_e/h2o_scores.csv};
            \addplot+[dashed,very thick] table[x=target_cr,y=avg_score,col sep=comma] {plots/longbench/llama3_3b/normal/per_dataset/qasper_e/knorm_scores.csv};
            \addplot+[dashed,very thick] table[x=target_cr,y=avg_score,col sep=comma] {plots/longbench/llama3_3b/normal/per_dataset/qasper_e/snapkv_scores.csv};
            \addplot+[dashed,very thick] table[x=target_cr,y=avg_score,col sep=comma] {plots/longbench/llama3_3b/normal/per_dataset/qasper_e/streaming_llm_scores.csv};
            \addplot+[dashed,very thick] table[x=target_cr,y=avg_score,col sep=comma] {plots/longbench/llama3_3b/normal/per_dataset/qasper_e/tova_scores.csv};
        \nextgroupplot[title={QASPER},xmajorticks=false]
            \addplot+[very thick] table[x=target_cr,y=avg_score,col sep=comma] {plots/longbench/llama3_3b/normal/per_dataset/qasper_e/estimated_attention__h2o_norm_scores.csv};
            \addplot+[very thick] table[x=target_cr,y=avg_score,col sep=comma] {plots/longbench/llama3_3b/normal/per_dataset/qasper_e/estimated_attention__h2o_scores.csv};
            \addplot+[very thick] table[x=target_cr,y=avg_score,col sep=comma] {plots/longbench/llama3_3b/normal/per_dataset/qasper_e/estimated_attention__min_var_full_harmonic_scores.csv};
            \addplot+[very thick] table[x=target_cr,y=avg_score,col sep=comma] {plots/longbench/llama3_3b/normal/per_dataset/qasper_e/estimated_attention__min_var_scores.csv};
            \addplot+[dashed,very thick] table[x=target_cr,y=avg_score,col sep=comma] {plots/longbench/llama3_3b/normal/per_dataset/qasper_e/h2o_scores.csv};
            \addplot+[dashed,very thick] table[x=target_cr,y=avg_score,col sep=comma] {plots/longbench/llama3_3b/normal/per_dataset/qasper_e/knorm_scores.csv};
            \addplot+[dashed,very thick] table[x=target_cr,y=avg_score,col sep=comma] {plots/longbench/llama3_3b/normal/per_dataset/qasper_e/snapkv_scores.csv};
            \addplot+[dashed,very thick] table[x=target_cr,y=avg_score,col sep=comma] {plots/longbench/llama3_3b/normal/per_dataset/qasper_e/streaming_llm_scores.csv};
            \addplot+[dashed,very thick] table[x=target_cr,y=avg_score,col sep=comma] {plots/longbench/llama3_3b/normal/per_dataset/qasper_e/tova_scores.csv};
        \nextgroupplot[title={TriviaQA},xmajorticks=false]
            \addplot+[very thick] table[x=target_cr,y=avg_score,col sep=comma] {plots/longbench/llama3_3b/normal/per_dataset/triviaqa_e/estimated_attention__h2o_norm_scores.csv};
            \addplot+[very thick] table[x=target_cr,y=avg_score,col sep=comma] {plots/longbench/llama3_3b/normal/per_dataset/triviaqa_e/estimated_attention__h2o_scores.csv};
            \addplot+[very thick] table[x=target_cr,y=avg_score,col sep=comma] {plots/longbench/llama3_3b/normal/per_dataset/triviaqa_e/estimated_attention__min_var_full_harmonic_scores.csv};
            \addplot+[very thick] table[x=target_cr,y=avg_score,col sep=comma] {plots/longbench/llama3_3b/normal/per_dataset/triviaqa_e/estimated_attention__min_var_scores.csv};
            \addplot+[dashed,very thick] table[x=target_cr,y=avg_score,col sep=comma] {plots/longbench/llama3_3b/normal/per_dataset/triviaqa_e/h2o_scores.csv};
            \addplot+[dashed,very thick] table[x=target_cr,y=avg_score,col sep=comma] {plots/longbench/llama3_3b/normal/per_dataset/triviaqa_e/knorm_scores.csv};
            \addplot+[dashed,very thick] table[x=target_cr,y=avg_score,col sep=comma] {plots/longbench/llama3_3b/normal/per_dataset/triviaqa_e/snapkv_scores.csv};
            \addplot+[dashed,very thick] table[x=target_cr,y=avg_score,col sep=comma] {plots/longbench/llama3_3b/normal/per_dataset/triviaqa_e/streaming_llm_scores.csv};
            \addplot+[dashed,very thick] table[x=target_cr,y=avg_score,col sep=comma] {plots/longbench/llama3_3b/normal/per_dataset/triviaqa_e/tova_scores.csv};
        \nextgroupplot[title={FrequentWords},xmajorticks=false,ylabel={Score ($\to)$}]
            \addplot+[very thick] table[x=target_cr,y=avg_score,col sep=comma] {plots/ruler/llama3_3b/normal/per_task/fwe/estimated_attention__h2o_norm_scores.csv};
            \addplot+[very thick] table[x=target_cr,y=avg_score,col sep=comma] {plots/ruler/llama3_3b/normal/per_task/fwe/estimated_attention__h2o_scores.csv};
            \addplot+[very thick] table[x=target_cr,y=avg_score,col sep=comma] {plots/ruler/llama3_3b/normal/per_task/fwe/estimated_attention__min_var_full_harmonic_scores.csv};
            \addplot+[very thick] table[x=target_cr,y=avg_score,col sep=comma] {plots/ruler/llama3_3b/normal/per_task/fwe/estimated_attention__min_var_scores.csv};
            \addplot+[dashed,very thick] table[x=target_cr,y=avg_score,col sep=comma] {plots/ruler/llama3_3b/normal/per_task/fwe/h2o_scores.csv};
            \addplot+[dashed,very thick] table[x=target_cr,y=avg_score,col sep=comma] {plots/ruler/llama3_3b/normal/per_task/fwe/knorm_scores.csv};
            \addplot+[dashed,very thick] table[x=target_cr,y=avg_score,col sep=comma] {plots/ruler/llama3_3b/normal/per_task/fwe/snapkv_scores.csv};
            \addplot+[dashed,very thick] table[x=target_cr,y=avg_score,col sep=comma] {plots/ruler/llama3_3b/normal/per_task/fwe/streaming_llm_scores.csv};
            \addplot+[dashed,very thick] table[x=target_cr,y=avg_score,col sep=comma] {plots/ruler/llama3_3b/normal/per_task/fwe/tova_scores.csv};
        \nextgroupplot[title={MultiKey-NIAH},xmajorticks=false]
            \addplot+[very thick] table[x=target_cr,y=avg_score,col sep=comma] {plots/ruler/llama3_3b/normal/per_task/niah_multikey_1/estimated_attention__h2o_norm_scores.csv};
            \addplot+[very thick] table[x=target_cr,y=avg_score,col sep=comma] {plots/ruler/llama3_3b/normal/per_task/niah_multikey_1/estimated_attention__h2o_scores.csv};
            \addplot+[very thick] table[x=target_cr,y=avg_score,col sep=comma] {plots/ruler/llama3_3b/normal/per_task/niah_multikey_1/estimated_attention__min_var_full_harmonic_scores.csv};
            \addplot+[very thick] table[x=target_cr,y=avg_score,col sep=comma] {plots/ruler/llama3_3b/normal/per_task/niah_multikey_1/estimated_attention__min_var_scores.csv};
            \addplot+[dashed,very thick] table[x=target_cr,y=avg_score,col sep=comma] {plots/ruler/llama3_3b/normal/per_task/niah_multikey_1/h2o_scores.csv};
            \addplot+[dashed,very thick] table[x=target_cr,y=avg_score,col sep=comma] {plots/ruler/llama3_3b/normal/per_task/niah_multikey_1/knorm_scores.csv};
            \addplot+[dashed,very thick] table[x=target_cr,y=avg_score,col sep=comma] {plots/ruler/llama3_3b/normal/per_task/niah_multikey_1/snapkv_scores.csv};
            \addplot+[dashed,very thick] table[x=target_cr,y=avg_score,col sep=comma] {plots/ruler/llama3_3b/normal/per_task/niah_multikey_1/streaming_llm_scores.csv};
            \addplot+[dashed,very thick] table[x=target_cr,y=avg_score,col sep=comma] {plots/ruler/llama3_3b/normal/per_task/niah_multikey_1/tova_scores.csv};
        \nextgroupplot[title={QuestionAnswer},xmajorticks=false]
            \addplot+[very thick] table[x=target_cr,y=avg_score,col sep=comma] {plots/ruler/llama3_3b/normal/per_task/qa_1/estimated_attention__h2o_norm_scores.csv};
            \addplot+[very thick] table[x=target_cr,y=avg_score,col sep=comma] {plots/ruler/llama3_3b/normal/per_task/qa_1/estimated_attention__h2o_scores.csv};
            \addplot+[very thick] table[x=target_cr,y=avg_score,col sep=comma] {plots/ruler/llama3_3b/normal/per_task/qa_1/estimated_attention__min_var_full_harmonic_scores.csv};
            \addplot+[very thick] table[x=target_cr,y=avg_score,col sep=comma] {plots/ruler/llama3_3b/normal/per_task/qa_1/estimated_attention__min_var_scores.csv};
            \addplot+[dashed,very thick] table[x=target_cr,y=avg_score,col sep=comma] {plots/ruler/llama3_3b/normal/per_task/qa_1/h2o_scores.csv};
            \addplot+[dashed,very thick] table[x=target_cr,y=avg_score,col sep=comma] {plots/ruler/llama3_3b/normal/per_task/qa_1/knorm_scores.csv};
            \addplot+[dashed,very thick] table[x=target_cr,y=avg_score,col sep=comma] {plots/ruler/llama3_3b/normal/per_task/qa_1/snapkv_scores.csv};
            \addplot+[dashed,very thick] table[x=target_cr,y=avg_score,col sep=comma] {plots/ruler/llama3_3b/normal/per_task/qa_1/streaming_llm_scores.csv};
            \addplot+[dashed,very thick] table[x=target_cr,y=avg_score,col sep=comma] {plots/ruler/llama3_3b/normal/per_task/qa_1/tova_scores.csv};
        \nextgroupplot[title={CommonWords},ylabel={Score ($\to$)}]
            \addplot+[very thick] table[x=target_cr,y=avg_score,col sep=comma] {plots/ruler/llama3_3b/normal/per_task/cwe/estimated_attention__h2o_norm_scores.csv};
            \addplot+[very thick] table[x=target_cr,y=avg_score,col sep=comma] {plots/ruler/llama3_3b/normal/per_task/cwe/estimated_attention__h2o_scores.csv};
            \addplot+[very thick] table[x=target_cr,y=avg_score,col sep=comma] {plots/ruler/llama3_3b/normal/per_task/cwe/estimated_attention__min_var_full_harmonic_scores.csv};
            \addplot+[very thick] table[x=target_cr,y=avg_score,col sep=comma] {plots/ruler/llama3_3b/normal/per_task/cwe/estimated_attention__min_var_scores.csv};
            \addplot+[dashed,very thick] table[x=target_cr,y=avg_score,col sep=comma] {plots/ruler/llama3_3b/normal/per_task/cwe/h2o_scores.csv};
            \addplot+[dashed,very thick] table[x=target_cr,y=avg_score,col sep=comma] {plots/ruler/llama3_3b/normal/per_task/cwe/knorm_scores.csv};
            \addplot+[dashed,very thick] table[x=target_cr,y=avg_score,col sep=comma] {plots/ruler/llama3_3b/normal/per_task/cwe/snapkv_scores.csv};
            \addplot+[dashed,very thick] table[x=target_cr,y=avg_score,col sep=comma] {plots/ruler/llama3_3b/normal/per_task/cwe/streaming_llm_scores.csv};
            \addplot+[dashed,very thick] table[x=target_cr,y=avg_score,col sep=comma] {plots/ruler/llama3_3b/normal/per_task/cwe/tova_scores.csv};
            \coordinate (c1) at (rel axis cs:-0.275,1);
        \nextgroupplot[title={VariableTracking}]
            \addplot+[very thick] table[x=target_cr,y=avg_score,col sep=comma] {plots/ruler/llama3_3b/normal/per_task/vt/estimated_attention__h2o_norm_scores.csv};
            \addplot+[very thick] table[x=target_cr,y=avg_score,col sep=comma] {plots/ruler/llama3_3b/normal/per_task/vt/estimated_attention__h2o_scores.csv};
            \addplot+[very thick] table[x=target_cr,y=avg_score,col sep=comma] {plots/ruler/llama3_3b/normal/per_task/vt/estimated_attention__min_var_full_harmonic_scores.csv};
            \addplot+[very thick] table[x=target_cr,y=avg_score,col sep=comma] {plots/ruler/llama3_3b/normal/per_task/vt/estimated_attention__min_var_scores.csv};
            \addplot+[dashed,very thick] table[x=target_cr,y=avg_score,col sep=comma] {plots/ruler/llama3_3b/normal/per_task/vt/h2o_scores.csv};
            \addplot+[dashed,very thick] table[x=target_cr,y=avg_score,col sep=comma] {plots/ruler/llama3_3b/normal/per_task/vt/knorm_scores.csv};
            \addplot+[dashed,very thick] table[x=target_cr,y=avg_score,col sep=comma] {plots/ruler/llama3_3b/normal/per_task/vt/snapkv_scores.csv};
            \addplot+[dashed,very thick] table[x=target_cr,y=avg_score,col sep=comma] {plots/ruler/llama3_3b/normal/per_task/vt/streaming_llm_scores.csv};
            \addplot+[dashed,very thick] table[x=target_cr,y=avg_score,col sep=comma] {plots/ruler/llama3_3b/normal/per_task/vt/tova_scores.csv};
        \nextgroupplot[title={MultiQuery-NIAH}, legend to name=leg,
                   legend cell align=left,
                   legend style={font=\small,fill=none,draw=black,anchor=center,align=left},
                   legend columns=9]
            \addplot+[very thick] table[x=target_cr,y=avg_score,col sep=comma] {plots/ruler/llama3_3b/normal/per_task/niah_multiquery/estimated_attention__h2o_norm_scores.csv};
            \addplot+[very thick] table[x=target_cr,y=avg_score,col sep=comma] {plots/ruler/llama3_3b/normal/per_task/niah_multiquery/estimated_attention__h2o_scores.csv};
            \addplot+[very thick] table[x=target_cr,y=avg_score,col sep=comma] {plots/ruler/llama3_3b/normal/per_task/niah_multiquery/estimated_attention__min_var_full_harmonic_scores.csv};
            \addplot+[very thick] table[x=target_cr,y=avg_score,col sep=comma] {plots/ruler/llama3_3b/normal/per_task/niah_multiquery/estimated_attention__min_var_scores.csv};
            \addplot+[dashed,very thick] table[x=target_cr,y=avg_score,col sep=comma] {plots/ruler/llama3_3b/normal/per_task/niah_multiquery/h2o_scores.csv};
            \addplot+[dashed,very thick] table[x=target_cr,y=avg_score,col sep=comma] {plots/ruler/llama3_3b/normal/per_task/niah_multiquery/knorm_scores.csv};
            \addplot+[dashed,very thick] table[x=target_cr,y=avg_score,col sep=comma] {plots/ruler/llama3_3b/normal/per_task/niah_multiquery/snapkv_scores.csv};
            \addplot+[dashed,very thick] table[x=target_cr,y=avg_score,col sep=comma] {plots/ruler/llama3_3b/normal/per_task/niah_multiquery/streaming_llm_scores.csv};
            \addplot+[dashed,very thick] table[x=target_cr,y=avg_score,col sep=comma] {plots/ruler/llama3_3b/normal/per_task/niah_multiquery/tova_scores.csv};
            \legend{$\pihto$, $\pihtoh$, $\pimin$, $\piminh$, {\footnotesize{}H2O}, {\footnotesize{}K-norm}, {\footnotesize{}SnapKV}, {\footnotesize{}StreamingLLM}, {\footnotesize{}TOVA}}
            \coordinate (c3) at (rel axis cs:1,1);
    \end{groupplot}
    \begin{scope}
        \coordinate (p) at ($(c1)!0.5!(c3)$);
        \node[below] (xlab) at (p |- current bounding box.south) {Compression ratio ($r$)};
        \node[below] (leg) at (xlab.south) {\pgfplotslegendfromname{leg}};
    \end{scope}
\end{tikzpicture}
\caption{\textbf{Scores for each individual dataset split on \llama{}.} Solid lines show the score for the four proposals defined in \Cref{sec:robustness}. Dashed lines show the five top-$k$ eviction methods.}\label{fig:individual-llama}
\end{figure}

\begin{figure}[t]
\centering%
\begin{tikzpicture}
    \begin{groupplot}[
        group style={group size=3 by 3, horizontal sep=0.4cm, vertical sep=0.4cm},
        height=4cm,
        width=0.39\textwidth,
        grid style=dashed,
        cycle list name=cpalette,
        xmajorgrids=true, ymajorgrids=true,
        every legend image post/.append style={scale=0.75},
        xmin=0.05, xmax=0.95,
        title style={yshift=-0.35cm,font={\strut{}\color{palette-gray}}},
        yticklabel={\pgfmathparse{\tick*0.01}\pgfmathprintnumber{\pgfmathresult}},
        y tick label style={font=\scriptsize, rotate=90},
    ]
        \nextgroupplot[title={HotpotQA},xmajorticks=false,ylabel={Score ($\to$)}]
            \addplot+[very thick] table[x=target_cr,y=avg_score,col sep=comma] {plots/longbench/qwen3_4b/normal/per_dataset/hotpotqa_e/estimated_attention__h2o_norm_scores.csv};
            \addplot+[very thick] table[x=target_cr,y=avg_score,col sep=comma] {plots/longbench/qwen3_4b/normal/per_dataset/hotpotqa_e/estimated_attention__h2o_scores.csv};
            \addplot+[very thick] table[x=target_cr,y=avg_score,col sep=comma] {plots/longbench/qwen3_4b/normal/per_dataset/hotpotqa_e/estimated_attention__min_var_full_harmonic_scores.csv};
            \addplot+[very thick] table[x=target_cr,y=avg_score,col sep=comma] {plots/longbench/qwen3_4b/normal/per_dataset/qasper_e/estimated_attention__min_var_scores.csv};
            \addplot+[dashed,very thick] table[x=target_cr,y=avg_score,col sep=comma] {plots/longbench/qwen3_4b/normal/per_dataset/qasper_e/h2o_scores.csv};
            \addplot+[dashed,very thick] table[x=target_cr,y=avg_score,col sep=comma] {plots/longbench/qwen3_4b/normal/per_dataset/qasper_e/knorm_scores.csv};
            \addplot+[dashed,very thick] table[x=target_cr,y=avg_score,col sep=comma] {plots/longbench/qwen3_4b/normal/per_dataset/qasper_e/snapkv_scores.csv};
            \addplot+[dashed,very thick] table[x=target_cr,y=avg_score,col sep=comma] {plots/longbench/qwen3_4b/normal/per_dataset/qasper_e/streaming_llm_scores.csv};
            \addplot+[dashed,very thick] table[x=target_cr,y=avg_score,col sep=comma] {plots/longbench/qwen3_4b/normal/per_dataset/qasper_e/tova_scores.csv};
        \nextgroupplot[title={QASPER},xmajorticks=false]
            \addplot+[very thick] table[x=target_cr,y=avg_score,col sep=comma] {plots/longbench/qwen3_4b/normal/per_dataset/qasper_e/estimated_attention__h2o_norm_scores.csv};
            \addplot+[very thick] table[x=target_cr,y=avg_score,col sep=comma] {plots/longbench/qwen3_4b/normal/per_dataset/qasper_e/estimated_attention__h2o_scores.csv};
            \addplot+[very thick] table[x=target_cr,y=avg_score,col sep=comma] {plots/longbench/qwen3_4b/normal/per_dataset/qasper_e/estimated_attention__min_var_full_harmonic_scores.csv};
            \addplot+[very thick] table[x=target_cr,y=avg_score,col sep=comma] {plots/longbench/qwen3_4b/normal/per_dataset/qasper_e/estimated_attention__min_var_scores.csv};
            \addplot+[dashed,very thick] table[x=target_cr,y=avg_score,col sep=comma] {plots/longbench/qwen3_4b/normal/per_dataset/qasper_e/h2o_scores.csv};
            \addplot+[dashed,very thick] table[x=target_cr,y=avg_score,col sep=comma] {plots/longbench/qwen3_4b/normal/per_dataset/qasper_e/knorm_scores.csv};
            \addplot+[dashed,very thick] table[x=target_cr,y=avg_score,col sep=comma] {plots/longbench/qwen3_4b/normal/per_dataset/qasper_e/snapkv_scores.csv};
            \addplot+[dashed,very thick] table[x=target_cr,y=avg_score,col sep=comma] {plots/longbench/qwen3_4b/normal/per_dataset/qasper_e/streaming_llm_scores.csv};
            \addplot+[dashed,very thick] table[x=target_cr,y=avg_score,col sep=comma] {plots/longbench/qwen3_4b/normal/per_dataset/qasper_e/tova_scores.csv};
        \nextgroupplot[title={TriviaQA},xmajorticks=false]
            \addplot+[very thick] table[x=target_cr,y=avg_score,col sep=comma] {plots/longbench/qwen3_4b/normal/per_dataset/triviaqa_e/estimated_attention__h2o_norm_scores.csv};
            \addplot+[very thick] table[x=target_cr,y=avg_score,col sep=comma] {plots/longbench/qwen3_4b/normal/per_dataset/triviaqa_e/estimated_attention__h2o_scores.csv};
            \addplot+[very thick] table[x=target_cr,y=avg_score,col sep=comma] {plots/longbench/qwen3_4b/normal/per_dataset/triviaqa_e/estimated_attention__min_var_full_harmonic_scores.csv};
            \addplot+[very thick] table[x=target_cr,y=avg_score,col sep=comma] {plots/longbench/qwen3_4b/normal/per_dataset/triviaqa_e/estimated_attention__min_var_scores.csv};
            \addplot+[dashed,very thick] table[x=target_cr,y=avg_score,col sep=comma] {plots/longbench/qwen3_4b/normal/per_dataset/triviaqa_e/h2o_scores.csv};
            \addplot+[dashed,very thick] table[x=target_cr,y=avg_score,col sep=comma] {plots/longbench/qwen3_4b/normal/per_dataset/triviaqa_e/knorm_scores.csv};
            \addplot+[dashed,very thick] table[x=target_cr,y=avg_score,col sep=comma] {plots/longbench/qwen3_4b/normal/per_dataset/triviaqa_e/snapkv_scores.csv};
            \addplot+[dashed,very thick] table[x=target_cr,y=avg_score,col sep=comma] {plots/longbench/qwen3_4b/normal/per_dataset/triviaqa_e/streaming_llm_scores.csv};
            \addplot+[dashed,very thick] table[x=target_cr,y=avg_score,col sep=comma] {plots/longbench/qwen3_4b/normal/per_dataset/triviaqa_e/tova_scores.csv};
        \nextgroupplot[title={FrequentWords},xmajorticks=false,ylabel={Score ($\to)$}]
            \addplot+[very thick] table[x=target_cr,y=avg_score,col sep=comma] {plots/ruler/qwen3_4b/normal/per_task/fwe/estimated_attention__h2o_norm_scores.csv};
            \addplot+[very thick] table[x=target_cr,y=avg_score,col sep=comma] {plots/ruler/qwen3_4b/normal/per_task/fwe/estimated_attention__h2o_scores.csv};
            \addplot+[very thick] table[x=target_cr,y=avg_score,col sep=comma] {plots/ruler/qwen3_4b/normal/per_task/fwe/estimated_attention__min_var_full_harmonic_scores.csv};
            \addplot+[very thick] table[x=target_cr,y=avg_score,col sep=comma] {plots/ruler/qwen3_4b/normal/per_task/fwe/estimated_attention__min_var_scores.csv};
            \addplot+[dashed,very thick] table[x=target_cr,y=avg_score,col sep=comma] {plots/ruler/qwen3_4b/normal/per_task/fwe/h2o_scores.csv};
            \addplot+[dashed,very thick] table[x=target_cr,y=avg_score,col sep=comma] {plots/ruler/qwen3_4b/normal/per_task/fwe/knorm_scores.csv};
            \addplot+[dashed,very thick] table[x=target_cr,y=avg_score,col sep=comma] {plots/ruler/qwen3_4b/normal/per_task/fwe/snapkv_scores.csv};
            \addplot+[dashed,very thick] table[x=target_cr,y=avg_score,col sep=comma] {plots/ruler/qwen3_4b/normal/per_task/fwe/streaming_llm_scores.csv};
            \addplot+[dashed,very thick] table[x=target_cr,y=avg_score,col sep=comma] {plots/ruler/qwen3_4b/normal/per_task/fwe/tova_scores.csv};
        \nextgroupplot[title={MultiKey-NIAH},xmajorticks=false]
            \addplot+[very thick] table[x=target_cr,y=avg_score,col sep=comma] {plots/ruler/qwen3_4b/normal/per_task/niah_multikey_1/estimated_attention__h2o_norm_scores.csv};
            \addplot+[very thick] table[x=target_cr,y=avg_score,col sep=comma] {plots/ruler/qwen3_4b/normal/per_task/niah_multikey_1/estimated_attention__h2o_scores.csv};
            \addplot+[very thick] table[x=target_cr,y=avg_score,col sep=comma] {plots/ruler/qwen3_4b/normal/per_task/niah_multikey_1/estimated_attention__min_var_full_harmonic_scores.csv};
            \addplot+[very thick] table[x=target_cr,y=avg_score,col sep=comma] {plots/ruler/qwen3_4b/normal/per_task/niah_multikey_1/estimated_attention__min_var_scores.csv};
            \addplot+[dashed,very thick] table[x=target_cr,y=avg_score,col sep=comma] {plots/ruler/qwen3_4b/normal/per_task/niah_multikey_1/h2o_scores.csv};
            \addplot+[dashed,very thick] table[x=target_cr,y=avg_score,col sep=comma] {plots/ruler/qwen3_4b/normal/per_task/niah_multikey_1/knorm_scores.csv};
            \addplot+[dashed,very thick] table[x=target_cr,y=avg_score,col sep=comma] {plots/ruler/qwen3_4b/normal/per_task/niah_multikey_1/snapkv_scores.csv};
            \addplot+[dashed,very thick] table[x=target_cr,y=avg_score,col sep=comma] {plots/ruler/qwen3_4b/normal/per_task/niah_multikey_1/streaming_llm_scores.csv};
            \addplot+[dashed,very thick] table[x=target_cr,y=avg_score,col sep=comma] {plots/ruler/qwen3_4b/normal/per_task/niah_multikey_1/tova_scores.csv};
        \nextgroupplot[title={QuestionAnswer},xmajorticks=false]
            \addplot+[very thick] table[x=target_cr,y=avg_score,col sep=comma] {plots/ruler/qwen3_4b/normal/per_task/qa_1/estimated_attention__h2o_norm_scores.csv};
            \addplot+[very thick] table[x=target_cr,y=avg_score,col sep=comma] {plots/ruler/qwen3_4b/normal/per_task/qa_1/estimated_attention__h2o_scores.csv};
            \addplot+[very thick] table[x=target_cr,y=avg_score,col sep=comma] {plots/ruler/qwen3_4b/normal/per_task/qa_1/estimated_attention__min_var_full_harmonic_scores.csv};
            \addplot+[very thick] table[x=target_cr,y=avg_score,col sep=comma] {plots/ruler/qwen3_4b/normal/per_task/qa_1/estimated_attention__min_var_scores.csv};
            \addplot+[dashed,very thick] table[x=target_cr,y=avg_score,col sep=comma] {plots/ruler/qwen3_4b/normal/per_task/qa_1/h2o_scores.csv};
            \addplot+[dashed,very thick] table[x=target_cr,y=avg_score,col sep=comma] {plots/ruler/qwen3_4b/normal/per_task/qa_1/knorm_scores.csv};
            \addplot+[dashed,very thick] table[x=target_cr,y=avg_score,col sep=comma] {plots/ruler/qwen3_4b/normal/per_task/qa_1/snapkv_scores.csv};
            \addplot+[dashed,very thick] table[x=target_cr,y=avg_score,col sep=comma] {plots/ruler/qwen3_4b/normal/per_task/qa_1/streaming_llm_scores.csv};
            \addplot+[dashed,very thick] table[x=target_cr,y=avg_score,col sep=comma] {plots/ruler/qwen3_4b/normal/per_task/qa_1/tova_scores.csv};
        \nextgroupplot[title={CommonWords},ylabel={Score ($\to$)}]
            \addplot+[very thick] table[x=target_cr,y=avg_score,col sep=comma] {plots/ruler/qwen3_4b/normal/per_task/cwe/estimated_attention__h2o_norm_scores.csv};
            \addplot+[very thick] table[x=target_cr,y=avg_score,col sep=comma] {plots/ruler/qwen3_4b/normal/per_task/cwe/estimated_attention__h2o_scores.csv};
            \addplot+[very thick] table[x=target_cr,y=avg_score,col sep=comma] {plots/ruler/qwen3_4b/normal/per_task/cwe/estimated_attention__min_var_full_harmonic_scores.csv};
            \addplot+[very thick] table[x=target_cr,y=avg_score,col sep=comma] {plots/ruler/qwen3_4b/normal/per_task/cwe/estimated_attention__min_var_scores.csv};
            \addplot+[dashed,very thick] table[x=target_cr,y=avg_score,col sep=comma] {plots/ruler/qwen3_4b/normal/per_task/cwe/h2o_scores.csv};
            \addplot+[dashed,very thick] table[x=target_cr,y=avg_score,col sep=comma] {plots/ruler/qwen3_4b/normal/per_task/cwe/knorm_scores.csv};
            \addplot+[dashed,very thick] table[x=target_cr,y=avg_score,col sep=comma] {plots/ruler/qwen3_4b/normal/per_task/cwe/snapkv_scores.csv};
            \addplot+[dashed,very thick] table[x=target_cr,y=avg_score,col sep=comma] {plots/ruler/qwen3_4b/normal/per_task/cwe/streaming_llm_scores.csv};
            \addplot+[dashed,very thick] table[x=target_cr,y=avg_score,col sep=comma] {plots/ruler/qwen3_4b/normal/per_task/cwe/tova_scores.csv};
            \coordinate (c1) at (rel axis cs:-0.275,1);
        \nextgroupplot[title={VariableTracking}]
            \addplot+[very thick] table[x=target_cr,y=avg_score,col sep=comma] {plots/ruler/qwen3_4b/normal/per_task/vt/estimated_attention__h2o_norm_scores.csv};
            \addplot+[very thick] table[x=target_cr,y=avg_score,col sep=comma] {plots/ruler/qwen3_4b/normal/per_task/vt/estimated_attention__h2o_scores.csv};
            \addplot+[very thick] table[x=target_cr,y=avg_score,col sep=comma] {plots/ruler/qwen3_4b/normal/per_task/vt/estimated_attention__min_var_full_harmonic_scores.csv};
            \addplot+[very thick] table[x=target_cr,y=avg_score,col sep=comma] {plots/ruler/qwen3_4b/normal/per_task/vt/estimated_attention__min_var_scores.csv};
            \addplot+[dashed,very thick] table[x=target_cr,y=avg_score,col sep=comma] {plots/ruler/qwen3_4b/normal/per_task/vt/h2o_scores.csv};
            \addplot+[dashed,very thick] table[x=target_cr,y=avg_score,col sep=comma] {plots/ruler/qwen3_4b/normal/per_task/vt/knorm_scores.csv};
            \addplot+[dashed,very thick] table[x=target_cr,y=avg_score,col sep=comma] {plots/ruler/qwen3_4b/normal/per_task/vt/snapkv_scores.csv};
            \addplot+[dashed,very thick] table[x=target_cr,y=avg_score,col sep=comma] {plots/ruler/qwen3_4b/normal/per_task/vt/streaming_llm_scores.csv};
            \addplot+[dashed,very thick] table[x=target_cr,y=avg_score,col sep=comma] {plots/ruler/qwen3_4b/normal/per_task/vt/tova_scores.csv};
        \nextgroupplot[title={MultiQuery-NIAH}, legend to name=leg,
                   legend cell align=left,
                   legend style={font=\small,fill=none,draw=black,anchor=center,align=left},
                   legend columns=9]
            \addplot+[very thick] table[x=target_cr,y=avg_score,col sep=comma] {plots/ruler/qwen3_4b/normal/per_task/niah_multiquery/estimated_attention__h2o_norm_scores.csv};
            \addplot+[very thick] table[x=target_cr,y=avg_score,col sep=comma] {plots/ruler/qwen3_4b/normal/per_task/niah_multiquery/estimated_attention__h2o_scores.csv};
            \addplot+[very thick] table[x=target_cr,y=avg_score,col sep=comma] {plots/ruler/qwen3_4b/normal/per_task/niah_multiquery/estimated_attention__min_var_full_harmonic_scores.csv};
            \addplot+[very thick] table[x=target_cr,y=avg_score,col sep=comma] {plots/ruler/qwen3_4b/normal/per_task/niah_multiquery/estimated_attention__min_var_scores.csv};
            \addplot+[dashed,very thick] table[x=target_cr,y=avg_score,col sep=comma] {plots/ruler/qwen3_4b/normal/per_task/niah_multiquery/h2o_scores.csv};
            \addplot+[dashed,very thick] table[x=target_cr,y=avg_score,col sep=comma] {plots/ruler/qwen3_4b/normal/per_task/niah_multiquery/knorm_scores.csv};
            \addplot+[dashed,very thick] table[x=target_cr,y=avg_score,col sep=comma] {plots/ruler/qwen3_4b/normal/per_task/niah_multiquery/snapkv_scores.csv};
            \addplot+[dashed,very thick] table[x=target_cr,y=avg_score,col sep=comma] {plots/ruler/qwen3_4b/normal/per_task/niah_multiquery/streaming_llm_scores.csv};
            \addplot+[dashed,very thick] table[x=target_cr,y=avg_score,col sep=comma] {plots/ruler/qwen3_4b/normal/per_task/niah_multiquery/tova_scores.csv};
            \legend{$\pihto$, $\pihtoh$, $\pimin$, $\piminh$, {\footnotesize{}H2O}, {\footnotesize{}K-norm}, {\footnotesize{}SnapKV}, {\footnotesize{}StreamingLLM}, {\footnotesize{}TOVA}}
            \coordinate (c3) at (rel axis cs:1,1);
    \end{groupplot}
    \begin{scope}
        \coordinate (p) at ($(c1)!0.5!(c3)$);
        \node[below] (xlab) at (p |- current bounding box.south) {Compression ratio ($r$)};
        \node[below] (leg) at (xlab.south) {\pgfplotslegendfromname{leg}};
    \end{scope}
\end{tikzpicture}
\caption{\textbf{Scores for each individual dataset split on \qwen{}.} Solid lines show the score for the four proposals defined in \Cref{sec:robustness}. Dashed lines show the five top-$k$ eviction methods.}\label{fig:individual-qwen}
\end{figure}

\subsection{Target vs Effective Compression Ratio}\label{app:ccp-global-error}

\Cref{tab:eff-cr} empirically shows that when we set the target compression ratio in \Cref{alg:number-of-samples}, we obtain an effective compression ratio with minimal error. 

\begin{table}[b]
\centering
\resizebox{\textwidth}{!}{%
\begin{tabular}{c|cccc}
\hline\hline
& \multicolumn{4}{c}{Mean effective $r$ ($\pm$ std)} \\
Target $r$ & $\pihto$ & $\pihtoh$ & $\pimin$ & $\piminh$ \\
\midrule
0.1 & 0.0999 $\pm$ 0.0003 & 0.1000 $\pm$ 0.0003 & 0.1000 $\pm$ 0.0002 & 0.0999 $\pm$ 0.0003 \\
0.3 & 0.2999 $\pm$ 0.0004 & 0.3001 $\pm$ 0.0004 & 0.3000 $\pm$ 0.0004 & 0.2999 $\pm$ 0.0005 \\
0.5 & 0.4999 $\pm$ 0.0004 & 0.5000 $\pm$ 0.0004 & 0.5000 $\pm$ 0.0004 & 0.5001 $\pm$ 0.0004 \\
0.7 & 0.6999 $\pm$ 0.0004 & 0.7000 $\pm$ 0.0004 & 0.7000 $\pm$ 0.0004 & 0.7001 $\pm$ 0.0004 \\
0.9 & 0.9000 $\pm$ 0.0003 & 0.9000 $\pm$ 0.0002 & 0.8999 $\pm$ 0.0002 & 0.9000 $\pm$ 0.0002 \\
\hline\hline
\end{tabular}}
\caption{\textbf{Difference between target and effective compression ratio is insignificant} for \llama{} on the RULER dataset. Results are similar for \qwen{} and LongBench.}\label{tab:eff-cr}
\end{table}

\section{Experimental Details}\label{app:exp-details}

We use the evaluation pipeline and implementations of SnapKV, TOVA, H2O, StreamingLLM, and K-norm available in KVPress v0.5.3 \citep{kvpress}.
All experiments are run on two NVIDIA RTX A6000 (48GB) and eight NVIDIA RTX A5000 (24GB).
All experiments can be run on a 24GB GPU, taking approximately $3$ to $5$ seconds to completely run each example in either LongBench or RULER.
The full research project required more compute than what is reported in the paper due to preliminary experiments as well as initial experimentation that either contained errors in implementation or were not as competitive.

\begin{figure}[t]
\centering%
\begin{tikzpicture}
    \begin{groupplot}[
        group style={group size=5 by 2, horizontal sep=0.1cm, vertical sep=0.2cm},
        height=3.75cm,
        width=0.29\textwidth,
        grid style=dashed,
        cycle list name=cpalette,
        xmajorgrids=true, ymajorgrids=true,
        every legend image post/.append style={scale=0.75},
        xmin=0.05, xmax=0.95,
        title style={yshift=-0.35cm,font={\strut{}\color{palette-gray}}},
        y tick label style={font=\scriptsize, rotate=90},
    ]

    \nextgroupplot[title={$\tau=1$},
                   ylabel={Win score ($\to$)},xmajorticks=false]
        \addplot+[very thick] table[x=target_cr,y=ranking,col sep=comma]{plots/ruler/llama3_3b/ist1.0/estimated_attention__h2o.csv};
        \addplot+[very thick] table[x=target_cr,y=ranking,col sep=comma]{plots/ruler/llama3_3b/ist1.0/estimated_attention__h2o_norm.csv};
        \addplot+[very thick] table[x=target_cr,y=ranking,col sep=comma]{plots/ruler/llama3_3b/ist1.0/estimated_attention__min_var.csv};
        \addplot+[very thick] table[x=target_cr,y=ranking,col sep=comma]{plots/ruler/llama3_3b/ist1.0/estimated_attention__min_var_full_harmonic.csv};
        \addplot+[dashed,very thick] table[x=target_cr,y=ranking,col sep=comma]{plots/ruler/llama3_3b/ist1.0/h2o.csv};
        \addplot+[dashed,very thick] table[x=target_cr,y=ranking,col sep=comma]{plots/ruler/llama3_3b/ist1.0/knorm.csv};
        \addplot+[dashed,very thick] table[x=target_cr,y=ranking,col sep=comma]{plots/ruler/llama3_3b/ist1.0/snapkv.csv};
        \addplot+[dashed,very thick] table[x=target_cr,y=ranking,col sep=comma]{plots/ruler/llama3_3b/ist1.0/streaming_llm.csv};
        \addplot+[dashed,very thick] table[x=target_cr,y=ranking,col sep=comma]{plots/ruler/llama3_3b/ist1.0/tova.csv};
        \coordinate (istc1) at (rel axis cs:-0.275,1);

    \nextgroupplot[title={$\tau=2$},
                   ymajorticks=false,xmajorticks=false]
        \addplot+[very thick] table[x=target_cr,y=ranking,col sep=comma]{plots/ruler/llama3_3b/ist2.0/estimated_attention__h2o.csv};
        \addplot+[very thick] table[x=target_cr,y=ranking,col sep=comma]{plots/ruler/llama3_3b/ist2.0/estimated_attention__h2o_norm.csv};
        \addplot+[very thick] table[x=target_cr,y=ranking,col sep=comma]{plots/ruler/llama3_3b/ist2.0/estimated_attention__min_var.csv};
        \addplot+[very thick] table[x=target_cr,y=ranking,col sep=comma]{plots/ruler/llama3_3b/ist2.0/estimated_attention__min_var_full_harmonic.csv};
        \addplot+[dashed,very thick] table[x=target_cr,y=ranking,col sep=comma]{plots/ruler/llama3_3b/ist2.0/h2o.csv};
        \addplot+[dashed,very thick] table[x=target_cr,y=ranking,col sep=comma]{plots/ruler/llama3_3b/ist2.0/knorm.csv};
        \addplot+[dashed,very thick] table[x=target_cr,y=ranking,col sep=comma]{plots/ruler/llama3_3b/ist2.0/snapkv.csv};
        \addplot+[dashed,very thick] table[x=target_cr,y=ranking,col sep=comma]{plots/ruler/llama3_3b/ist2.0/streaming_llm.csv};
        \addplot+[dashed,very thick] table[x=target_cr,y=ranking,col sep=comma]{plots/ruler/llama3_3b/ist2.0/tova.csv};

    \nextgroupplot[title={$\tau=3$},
                   ymajorticks=false,xmajorticks=false]
        \addplot+[very thick] table[x=target_cr,y=ranking,col sep=comma]{plots/ruler/llama3_3b/ist3.0/estimated_attention__h2o.csv};
        \addplot+[very thick] table[x=target_cr,y=ranking,col sep=comma]{plots/ruler/llama3_3b/ist3.0/estimated_attention__h2o_norm.csv};
        \addplot+[very thick] table[x=target_cr,y=ranking,col sep=comma]{plots/ruler/llama3_3b/ist3.0/estimated_attention__min_var.csv};
        \addplot+[very thick] table[x=target_cr,y=ranking,col sep=comma]{plots/ruler/llama3_3b/ist3.0/estimated_attention__min_var_full_harmonic.csv};
        \addplot+[dashed,very thick] table[x=target_cr,y=ranking,col sep=comma]{plots/ruler/llama3_3b/ist3.0/h2o.csv};
        \addplot+[dashed,very thick] table[x=target_cr,y=ranking,col sep=comma]{plots/ruler/llama3_3b/ist3.0/knorm.csv};
        \addplot+[dashed,very thick] table[x=target_cr,y=ranking,col sep=comma]{plots/ruler/llama3_3b/ist3.0/snapkv.csv};
        \addplot+[dashed,very thick] table[x=target_cr,y=ranking,col sep=comma]{plots/ruler/llama3_3b/ist3.0/streaming_llm.csv};
        \addplot+[dashed,very thick] table[x=target_cr,y=ranking,col sep=comma]{plots/ruler/llama3_3b/ist3.0/tova.csv};

    \nextgroupplot[title={$\tau=4$},
                   ymajorticks=false,xmajorticks=false]
        \addplot+[very thick] table[x=target_cr,y=ranking,col sep=comma]{plots/ruler/llama3_3b/ist4.0/estimated_attention__h2o.csv};
        \addplot+[very thick] table[x=target_cr,y=ranking,col sep=comma]{plots/ruler/llama3_3b/ist4.0/estimated_attention__h2o_norm.csv};
        \addplot+[very thick] table[x=target_cr,y=ranking,col sep=comma]{plots/ruler/llama3_3b/ist4.0/estimated_attention__min_var.csv};
        \addplot+[very thick] table[x=target_cr,y=ranking,col sep=comma]{plots/ruler/llama3_3b/ist4.0/estimated_attention__min_var_full_harmonic.csv};
        \addplot+[dashed,very thick] table[x=target_cr,y=ranking,col sep=comma]{plots/ruler/llama3_3b/ist4.0/h2o.csv};
        \addplot+[dashed,very thick] table[x=target_cr,y=ranking,col sep=comma]{plots/ruler/llama3_3b/ist4.0/knorm.csv};
        \addplot+[dashed,very thick] table[x=target_cr,y=ranking,col sep=comma]{plots/ruler/llama3_3b/ist4.0/snapkv.csv};
        \addplot+[dashed,very thick] table[x=target_cr,y=ranking,col sep=comma]{plots/ruler/llama3_3b/ist4.0/streaming_llm.csv};
        \addplot+[dashed,very thick] table[x=target_cr,y=ranking,col sep=comma]{plots/ruler/llama3_3b/ist4.0/tova.csv};

    \nextgroupplot[title={$\tau=5$},
                   ymajorticks=false,xmajorticks=false]
        \addplot+[very thick] table[x=target_cr,y=ranking,col sep=comma]{plots/ruler/llama3_3b/ist5.0/estimated_attention__h2o.csv};
        \addplot+[very thick] table[x=target_cr,y=ranking,col sep=comma]{plots/ruler/llama3_3b/ist5.0/estimated_attention__h2o_norm.csv};
        \addplot+[very thick] table[x=target_cr,y=ranking,col sep=comma]{plots/ruler/llama3_3b/ist5.0/estimated_attention__min_var.csv};
        \addplot+[very thick] table[x=target_cr,y=ranking,col sep=comma]{plots/ruler/llama3_3b/ist5.0/estimated_attention__min_var_full_harmonic.csv};
        \addplot+[dashed,very thick] table[x=target_cr,y=ranking,col sep=comma]{plots/ruler/llama3_3b/ist5.0/h2o.csv};
        \addplot+[dashed,very thick] table[x=target_cr,y=ranking,col sep=comma]{plots/ruler/llama3_3b/ist5.0/knorm.csv};
        \addplot+[dashed,very thick] table[x=target_cr,y=ranking,col sep=comma]{plots/ruler/llama3_3b/ist5.0/snapkv.csv};
        \addplot+[dashed,very thick] table[x=target_cr,y=ranking,col sep=comma]{plots/ruler/llama3_3b/ist5.0/streaming_llm.csv};
        \addplot+[dashed,very thick] table[x=target_cr,y=ranking,col sep=comma]{plots/ruler/llama3_3b/ist5.0/tova.csv};
        \pgfplotsextra{\coordinate (istrllamaRow) at (rel axis cs:1,0.5);}

    \nextgroupplot[ylabel={Win score ($\to$)}]
        \addplot+[very thick] table[x=target_cr,y=ranking,col sep=comma]{plots/ruler/qwen3_4b/ist1.0/estimated_attention__h2o.csv};
        \addplot+[very thick] table[x=target_cr,y=ranking,col sep=comma]{plots/ruler/qwen3_4b/ist1.0/estimated_attention__h2o_norm.csv};
        \addplot+[very thick] table[x=target_cr,y=ranking,col sep=comma]{plots/ruler/qwen3_4b/ist1.0/estimated_attention__min_var.csv};
        \addplot+[very thick] table[x=target_cr,y=ranking,col sep=comma]{plots/ruler/qwen3_4b/ist1.0/estimated_attention__min_var_full_harmonic.csv};
        \addplot+[dashed,very thick] table[x=target_cr,y=ranking,col sep=comma]{plots/ruler/qwen3_4b/ist1.0/h2o.csv};
        \addplot+[dashed,very thick] table[x=target_cr,y=ranking,col sep=comma]{plots/ruler/qwen3_4b/ist1.0/knorm.csv};
        \addplot+[dashed,very thick] table[x=target_cr,y=ranking,col sep=comma]{plots/ruler/qwen3_4b/ist1.0/snapkv.csv};
        \addplot+[dashed,very thick] table[x=target_cr,y=ranking,col sep=comma]{plots/ruler/qwen3_4b/ist1.0/streaming_llm.csv};
        \addplot+[dashed,very thick] table[x=target_cr,y=ranking,col sep=comma]{plots/ruler/qwen3_4b/ist1.0/tova.csv};

    \nextgroupplot[ymajorticks=false]
        \addplot+[very thick] table[x=target_cr,y=ranking,col sep=comma]{plots/ruler/qwen3_4b/ist2.0/estimated_attention__h2o.csv};
        \addplot+[very thick] table[x=target_cr,y=ranking,col sep=comma]{plots/ruler/qwen3_4b/ist2.0/estimated_attention__h2o_norm.csv};
        \addplot+[very thick] table[x=target_cr,y=ranking,col sep=comma]{plots/ruler/qwen3_4b/ist2.0/estimated_attention__min_var.csv};
        \addplot+[very thick] table[x=target_cr,y=ranking,col sep=comma]{plots/ruler/qwen3_4b/ist2.0/estimated_attention__min_var_full_harmonic.csv};
        \addplot+[dashed,very thick] table[x=target_cr,y=ranking,col sep=comma]{plots/ruler/qwen3_4b/ist2.0/h2o.csv};
        \addplot+[dashed,very thick] table[x=target_cr,y=ranking,col sep=comma]{plots/ruler/qwen3_4b/ist2.0/knorm.csv};
        \addplot+[dashed,very thick] table[x=target_cr,y=ranking,col sep=comma]{plots/ruler/qwen3_4b/ist2.0/snapkv.csv};
        \addplot+[dashed,very thick] table[x=target_cr,y=ranking,col sep=comma]{plots/ruler/qwen3_4b/ist2.0/streaming_llm.csv};
        \addplot+[dashed,very thick] table[x=target_cr,y=ranking,col sep=comma]{plots/ruler/qwen3_4b/ist2.0/tova.csv};

    \nextgroupplot[ymajorticks=false]
        \addplot+[very thick] table[x=target_cr,y=ranking,col sep=comma]{plots/ruler/qwen3_4b/ist3.0/estimated_attention__h2o.csv};
        \addplot+[very thick] table[x=target_cr,y=ranking,col sep=comma]{plots/ruler/qwen3_4b/ist3.0/estimated_attention__h2o_norm.csv};
        \addplot+[very thick] table[x=target_cr,y=ranking,col sep=comma]{plots/ruler/qwen3_4b/ist3.0/estimated_attention__min_var.csv};
        \addplot+[very thick] table[x=target_cr,y=ranking,col sep=comma]{plots/ruler/qwen3_4b/ist3.0/estimated_attention__min_var_full_harmonic.csv};
        \addplot+[dashed,very thick] table[x=target_cr,y=ranking,col sep=comma]{plots/ruler/qwen3_4b/ist3.0/h2o.csv};
        \addplot+[dashed,very thick] table[x=target_cr,y=ranking,col sep=comma]{plots/ruler/qwen3_4b/ist3.0/knorm.csv};
        \addplot+[dashed,very thick] table[x=target_cr,y=ranking,col sep=comma]{plots/ruler/qwen3_4b/ist3.0/snapkv.csv};
        \addplot+[dashed,very thick] table[x=target_cr,y=ranking,col sep=comma]{plots/ruler/qwen3_4b/ist3.0/streaming_llm.csv};
        \addplot+[dashed,very thick] table[x=target_cr,y=ranking,col sep=comma]{plots/ruler/qwen3_4b/ist3.0/tova.csv};

    \nextgroupplot[ymajorticks=false]
        \addplot+[very thick] table[x=target_cr,y=ranking,col sep=comma]{plots/ruler/qwen3_4b/ist4.0/estimated_attention__h2o.csv};
        \addplot+[very thick] table[x=target_cr,y=ranking,col sep=comma]{plots/ruler/qwen3_4b/ist4.0/estimated_attention__h2o_norm.csv};
        \addplot+[very thick] table[x=target_cr,y=ranking,col sep=comma]{plots/ruler/qwen3_4b/ist4.0/estimated_attention__min_var.csv};
        \addplot+[very thick] table[x=target_cr,y=ranking,col sep=comma]{plots/ruler/qwen3_4b/ist4.0/estimated_attention__min_var_full_harmonic.csv};
        \addplot+[dashed,very thick] table[x=target_cr,y=ranking,col sep=comma]{plots/ruler/qwen3_4b/ist4.0/h2o.csv};
        \addplot+[dashed,very thick] table[x=target_cr,y=ranking,col sep=comma]{plots/ruler/qwen3_4b/ist4.0/knorm.csv};
        \addplot+[dashed,very thick] table[x=target_cr,y=ranking,col sep=comma]{plots/ruler/qwen3_4b/ist4.0/snapkv.csv};
        \addplot+[dashed,very thick] table[x=target_cr,y=ranking,col sep=comma]{plots/ruler/qwen3_4b/ist4.0/streaming_llm.csv};
        \addplot+[dashed,very thick] table[x=target_cr,y=ranking,col sep=comma]{plots/ruler/qwen3_4b/ist4.0/tova.csv};

    \nextgroupplot[ymajorticks=false,
                   legend to name=istleg2,
                   legend cell align=left,
                   legend style={font=\small,fill=none,draw=black,anchor=center,align=left},
                   legend columns=9]
        \addplot+[very thick] table[x=target_cr,y=ranking,col sep=comma]{plots/ruler/qwen3_4b/ist5.0/estimated_attention__h2o.csv};
        \addplot+[very thick] table[x=target_cr,y=ranking,col sep=comma]{plots/ruler/qwen3_4b/ist5.0/estimated_attention__h2o_norm.csv};
        \addplot+[very thick] table[x=target_cr,y=ranking,col sep=comma]{plots/ruler/qwen3_4b/ist5.0/estimated_attention__min_var.csv};
        \addplot+[very thick] table[x=target_cr,y=ranking,col sep=comma]{plots/ruler/qwen3_4b/ist5.0/estimated_attention__min_var_full_harmonic.csv};
        \addplot+[dashed,very thick] table[x=target_cr,y=ranking,col sep=comma]{plots/ruler/qwen3_4b/ist5.0/h2o.csv};
        \addplot+[dashed,very thick] table[x=target_cr,y=ranking,col sep=comma]{plots/ruler/qwen3_4b/ist5.0/knorm.csv};
        \addplot+[dashed,very thick] table[x=target_cr,y=ranking,col sep=comma]{plots/ruler/qwen3_4b/ist5.0/snapkv.csv};
        \addplot+[dashed,very thick] table[x=target_cr,y=ranking,col sep=comma]{plots/ruler/qwen3_4b/ist5.0/streaming_llm.csv};
        \addplot+[dashed,very thick] table[x=target_cr,y=ranking,col sep=comma]{plots/ruler/qwen3_4b/ist5.0/tova.csv};
        \legend{$\pihto$, $\pihtoh$, $\pimin$, $\piminh$, H2O, K-norm, SnapKV, StreamingLLM, TOVA}
        \coordinate (istc4) at (rel axis cs:1,1);
        \pgfplotsextra{\coordinate (istrqwenRow) at (rel axis cs:1,0.5);}

    \end{groupplot}
    \begin{scope}
        \coordinate (p) at ($(istc1)!0.5!(istc4)$);
        \node[below] (xlab) at (p |- current bounding box.south) {Compression ratio ($r$)};
        \node[below] (leg) at (xlab.south) {\pgfplotslegendfromname{robleg}};
        \node[rotate=270, anchor=north, font=\small, yshift=0.45cm] at (istrllamaRow) {\llama{}};
        \node[rotate=270, anchor=north, font=\small, yshift=0.45cm] at (istrqwenRow) {\qwen{}};
    \end{scope}
\end{tikzpicture}
\vspace{-0.25cm}
\caption{\textbf{Importance sampling temperature scaling pairwise win scores.} \llama{} (top) and \qwen{} (bottom) pairwise win scores at different importance sampling temperature scale values $\tau$.}\label{fig:ist-ruler-ranking}
\end{figure}

\section{Minimum Variance Proposals and Priors}\label{app:proposals}

We define the following four proposal distributions
\begin{align}
    \pimin(\mathbf{v}_i)\defeq \frac{p_t(\mathbf{v}_i)\cdot\lVert\mathbf{v}_i\rVert}{\sum_{j=1}^t p_t(\mathbf{v}_j)\cdot\lVert\mathbf{v}_j\rVert},\qquad&
    \piminh(\mathbf{v}_i)\defeq\frac{\sum_{j=1}^t p_j(\mathbf{v}_i)\cdot h_j\cdot\lVert\mathbf{v}_i\rVert}{\sum_{l=1}^t\sum_{j=1}^t p_j(\mathbf{v}_l)\cdot h_j\cdot\lVert\mathbf{v}_l\rVert},\\
    \pihto(\mathbf{v}_i)\defeq\frac{\sum_{j=1}^t p_j(\mathbf{v}_i)}{\sum_{l=1}^t \sum_{j=1}^t p_j(\mathbf{v}_l)},\qquad
    &\pihtoh(\mathbf{v}_i)\defeq\frac{\sum_{j=1}^t p_j(\mathbf{v}_i)\cdot h_j}{\sum_{l=1}^t \sum_{j=1}^t p_j(\mathbf{v}_l)\cdot h_j};
\end{align}
where $h_j\defeq\left[(t-j+1)\cdot\mathcal{H}_t\right]^{-1}$ and $\mathcal{H}_t$ is the $t$-th harmonic number.

In this section, we discuss what exactly are the semantics of these distributions.
Proposal $\pimin$ is well known folklore result in the statistical methods literature \citep{owen13}.
It is simply the proposal distribution that minimizes the variance for the expectation estimator for the last query attention value, i.e.\ $\mathbb{E}_{p_t}[\mathbf{V}]$: the expectation at eviction time.
Notably, it is a proposal that uses the values of $\mathbf{V}$ in order to choose which entries to evict.
None of the top-$k$ eviction methods we compare and study make use of both $p(\mathbf{V})$ \emph{and} $\mathbf{V}$; see \Cref{app:subsumed-methods} for a description on how these eviction strategies compute their score.

Proposal $\piminh$ extends this to all distributions $\{p_i(\mathbf{V})\}_{i=1}^t$.
To be precise, recall that the softmax matrix in \Cref{eq:prob-interp} $p(\mathbf{V})$ at eviction time $t$ (i.e.\ during the prefilling step) is $t\times t$.
The first proposal $\pimin$ ignores all of the first $[1..t-1]$ rows, even though they may contain useful information to decide on which entries to evict.
We interpret these rows as a random variable $T$ that, when used to condition the probability of eviction, tells you which entries are best to evict.
That is,
\begin{equation}
    \softmax\left(\mathbf{Q}\cdot\mathbf{K}\tran\right)=\left[p(\mathbf{V}|T=i)\right]_{i=1}^t.
\end{equation}
This assumes that there exists a distribution on $T$.
We interpret this distribution $p(T)$ as some prior knowledge on how meaningful are each $p(\mathbf{V}|T=i)$ as $i$ distances from $t$.
In both $\piminh$ and $\pi$, we set this to
\begin{equation}
    h_j\defeq p(T=j)=\frac{1}{(t-j+1)\cdot\mathcal{H}_t}.
\end{equation}
The semantics of this prior is simple: we assume that each row has exponentially diminishing importance: we weigh each row according to the following sequence $\mathbf{h}=\left[\frac{1}{t},\frac{1}{t-1},\dots,\frac{1}{3},\frac{1}{2},1\right]$ and then normalize to get a probability distribution.
The normalizing constant of sequence $\mathbf{h}$ is the harmonic number $\mathcal{H}_t=\sum_{k=1}^t\frac{1}{k}$, and the probability of each row is $p(T=i)=\left[(t-j+1)\cdot\mathcal{H}_h\right]^{-1}$

To then retrieve the probability of each entry $p(\mathbf{v}_i)$, we marginalize over all possible time steps
\begin{equation}
    p(\mathbf{v}_i)=\sum_{j=1}^t p(\mathbf{v}_i,T=j)=\sum_{j=1}^t p(\mathbf{v}_i|T=j)\cdot p(T=j),
\end{equation}
which when using $p(T=j)=h_j$ for either the minimum variance proposal or the H2O proposal, yields $\piminh$ and $\pihtoh$ respectively.
Note that any distribution for $p(T)$ is valid and results in a different proposal.
In fact, any proposal that ignores the first $[1..t-1]$ rows contains a prior that is degenerated at $T=t$, with zero probability everywhere else.

\section{Additional Experiments}\label{app:importance-sampling-temp}

In this section, we show additional experiments that supplement the claims in \Cref{sec:robustness}.

\begin{figure}[t]
\centering%
\begin{tikzpicture}
    \begin{groupplot}[
        group style={group size=5 by 2, horizontal sep=0.1cm, vertical sep=0.2cm},
        height=3.75cm,
        width=0.29\textwidth,
        grid style=dashed,
        cycle list name=cpalette,
        xmajorgrids=true, ymajorgrids=true,
        every legend image post/.append style={scale=0.75},
        xmin=0.05, xmax=0.95,
        title style={font=\small},
        y tick label style={font=\scriptsize, rotate=90},
        title style={yshift=-0.35cm,font={\strut{}\color{palette-gray}}},
        yticklabel={\pgfmathparse{\tick*0.01}\pgfmathprintnumber{\pgfmathresult}},
    ]

    \nextgroupplot[title={$\tau=1$},
                   ylabel={Score ($\to$)},xmajorticks=false]
        \addplot+[very thick] table[x=target_cr,y=overall,col sep=comma]{plots/ruler/llama3_3b/ist1.0/estimated_attention__h2o_scores.csv};
        \addplot+[very thick] table[x=target_cr,y=overall,col sep=comma]{plots/ruler/llama3_3b/ist1.0/estimated_attention__h2o_norm_scores.csv};
        \addplot+[very thick] table[x=target_cr,y=overall,col sep=comma]{plots/ruler/llama3_3b/ist1.0/estimated_attention__min_var_scores.csv};
        \addplot+[very thick] table[x=target_cr,y=overall,col sep=comma]{plots/ruler/llama3_3b/ist1.0/estimated_attention__min_var_full_harmonic_scores.csv};
        \addplot+[dashed,very thick] table[x=target_cr,y=overall,col sep=comma]{plots/ruler/llama3_3b/ist1.0/h2o_scores.csv};
        \addplot+[dashed,very thick] table[x=target_cr,y=overall,col sep=comma]{plots/ruler/llama3_3b/ist1.0/knorm_scores.csv};
        \addplot+[dashed,very thick] table[x=target_cr,y=overall,col sep=comma]{plots/ruler/llama3_3b/ist1.0/snapkv_scores.csv};
        \addplot+[dashed,very thick] table[x=target_cr,y=overall,col sep=comma]{plots/ruler/llama3_3b/ist1.0/streaming_llm_scores.csv};
        \addplot+[dashed,very thick] table[x=target_cr,y=overall,col sep=comma]{plots/ruler/llama3_3b/ist1.0/tova_scores.csv};
        \coordinate (istc1) at (rel axis cs:-0.275,1);

    \nextgroupplot[title={$\tau=2$},
                   ymajorticks=false,xmajorticks=false]
        \addplot+[very thick] table[x=target_cr,y=overall,col sep=comma]{plots/ruler/llama3_3b/ist2.0/estimated_attention__h2o_scores.csv};
        \addplot+[very thick] table[x=target_cr,y=overall,col sep=comma]{plots/ruler/llama3_3b/ist2.0/estimated_attention__h2o_norm_scores.csv};
        \addplot+[very thick] table[x=target_cr,y=overall,col sep=comma]{plots/ruler/llama3_3b/ist2.0/estimated_attention__min_var_scores.csv};
        \addplot+[very thick] table[x=target_cr,y=overall,col sep=comma]{plots/ruler/llama3_3b/ist2.0/estimated_attention__min_var_full_harmonic_scores.csv};
        \addplot+[dashed,very thick] table[x=target_cr,y=overall,col sep=comma]{plots/ruler/llama3_3b/ist2.0/h2o_scores.csv};
        \addplot+[dashed,very thick] table[x=target_cr,y=overall,col sep=comma]{plots/ruler/llama3_3b/ist2.0/knorm_scores.csv};
        \addplot+[dashed,very thick] table[x=target_cr,y=overall,col sep=comma]{plots/ruler/llama3_3b/ist2.0/snapkv_scores.csv};
        \addplot+[dashed,very thick] table[x=target_cr,y=overall,col sep=comma]{plots/ruler/llama3_3b/ist2.0/streaming_llm_scores.csv};
        \addplot+[dashed,very thick] table[x=target_cr,y=overall,col sep=comma]{plots/ruler/llama3_3b/ist2.0/tova_scores.csv};

    \nextgroupplot[title={$\tau=3$},
                   ymajorticks=false,xmajorticks=false]
        \addplot+[very thick] table[x=target_cr,y=overall,col sep=comma]{plots/ruler/llama3_3b/ist3.0/estimated_attention__h2o_scores.csv};
        \addplot+[very thick] table[x=target_cr,y=overall,col sep=comma]{plots/ruler/llama3_3b/ist3.0/estimated_attention__h2o_norm_scores.csv};
        \addplot+[very thick] table[x=target_cr,y=overall,col sep=comma]{plots/ruler/llama3_3b/ist3.0/estimated_attention__min_var_scores.csv};
        \addplot+[very thick] table[x=target_cr,y=overall,col sep=comma]{plots/ruler/llama3_3b/ist3.0/estimated_attention__min_var_full_harmonic_scores.csv};
        \addplot+[dashed,very thick] table[x=target_cr,y=overall,col sep=comma]{plots/ruler/llama3_3b/ist3.0/h2o_scores.csv};
        \addplot+[dashed,very thick] table[x=target_cr,y=overall,col sep=comma]{plots/ruler/llama3_3b/ist3.0/knorm_scores.csv};
        \addplot+[dashed,very thick] table[x=target_cr,y=overall,col sep=comma]{plots/ruler/llama3_3b/ist3.0/snapkv_scores.csv};
        \addplot+[dashed,very thick] table[x=target_cr,y=overall,col sep=comma]{plots/ruler/llama3_3b/ist3.0/streaming_llm_scores.csv};
        \addplot+[dashed,very thick] table[x=target_cr,y=overall,col sep=comma]{plots/ruler/llama3_3b/ist3.0/tova_scores.csv};

    \nextgroupplot[title={$\tau=4$},
                   ymajorticks=false,xmajorticks=false]
        \addplot+[very thick] table[x=target_cr,y=overall,col sep=comma]{plots/ruler/llama3_3b/ist4.0/estimated_attention__h2o_scores.csv};
        \addplot+[very thick] table[x=target_cr,y=overall,col sep=comma]{plots/ruler/llama3_3b/ist4.0/estimated_attention__h2o_norm_scores.csv};
        \addplot+[very thick] table[x=target_cr,y=overall,col sep=comma]{plots/ruler/llama3_3b/ist4.0/estimated_attention__min_var_scores.csv};
        \addplot+[very thick] table[x=target_cr,y=overall,col sep=comma]{plots/ruler/llama3_3b/ist4.0/estimated_attention__min_var_full_harmonic_scores.csv};
        \addplot+[dashed,very thick] table[x=target_cr,y=overall,col sep=comma]{plots/ruler/llama3_3b/ist4.0/h2o_scores.csv};
        \addplot+[dashed,very thick] table[x=target_cr,y=overall,col sep=comma]{plots/ruler/llama3_3b/ist4.0/knorm_scores.csv};
        \addplot+[dashed,very thick] table[x=target_cr,y=overall,col sep=comma]{plots/ruler/llama3_3b/ist4.0/snapkv_scores.csv};
        \addplot+[dashed,very thick] table[x=target_cr,y=overall,col sep=comma]{plots/ruler/llama3_3b/ist4.0/streaming_llm_scores.csv};
        \addplot+[dashed,very thick] table[x=target_cr,y=overall,col sep=comma]{plots/ruler/llama3_3b/ist4.0/tova_scores.csv};

    \nextgroupplot[title={$\tau=5$},
                   ymajorticks=false,xmajorticks=false]
        \addplot+[very thick] table[x=target_cr,y=overall,col sep=comma]{plots/ruler/llama3_3b/ist5.0/estimated_attention__h2o_scores.csv};
        \addplot+[very thick] table[x=target_cr,y=overall,col sep=comma]{plots/ruler/llama3_3b/ist5.0/estimated_attention__h2o_norm_scores.csv};
        \addplot+[very thick] table[x=target_cr,y=overall,col sep=comma]{plots/ruler/llama3_3b/ist5.0/estimated_attention__min_var_scores.csv};
        \addplot+[very thick] table[x=target_cr,y=overall,col sep=comma]{plots/ruler/llama3_3b/ist5.0/estimated_attention__min_var_full_harmonic_scores.csv};
        \addplot+[dashed,very thick] table[x=target_cr,y=overall,col sep=comma]{plots/ruler/llama3_3b/ist5.0/h2o_scores.csv};
        \addplot+[dashed,very thick] table[x=target_cr,y=overall,col sep=comma]{plots/ruler/llama3_3b/ist5.0/knorm_scores.csv};
        \addplot+[dashed,very thick] table[x=target_cr,y=overall,col sep=comma]{plots/ruler/llama3_3b/ist5.0/snapkv_scores.csv};
        \addplot+[dashed,very thick] table[x=target_cr,y=overall,col sep=comma]{plots/ruler/llama3_3b/ist5.0/streaming_llm_scores.csv};
        \addplot+[dashed,very thick] table[x=target_cr,y=overall,col sep=comma]{plots/ruler/llama3_3b/ist5.0/tova_scores.csv};
        \pgfplotsextra{\coordinate (istsllamaRow) at (rel axis cs:1,0.5);}

    \nextgroupplot[ylabel={Score ($\to$)}]
        \addplot+[very thick] table[x=target_cr,y=overall,col sep=comma]{plots/ruler/qwen3_4b/ist1.0/estimated_attention__h2o_scores.csv};
        \addplot+[very thick] table[x=target_cr,y=overall,col sep=comma]{plots/ruler/qwen3_4b/ist1.0/estimated_attention__h2o_norm_scores.csv};
        \addplot+[very thick] table[x=target_cr,y=overall,col sep=comma]{plots/ruler/qwen3_4b/ist1.0/estimated_attention__min_var_scores.csv};
        \addplot+[very thick] table[x=target_cr,y=overall,col sep=comma]{plots/ruler/qwen3_4b/ist1.0/estimated_attention__min_var_full_harmonic_scores.csv};
        \addplot+[dashed,very thick] table[x=target_cr,y=overall,col sep=comma]{plots/ruler/qwen3_4b/ist1.0/h2o_scores.csv};
        \addplot+[dashed,very thick] table[x=target_cr,y=overall,col sep=comma]{plots/ruler/qwen3_4b/ist1.0/knorm_scores.csv};
        \addplot+[dashed,very thick] table[x=target_cr,y=overall,col sep=comma]{plots/ruler/qwen3_4b/ist1.0/snapkv_scores.csv};
        \addplot+[dashed,very thick] table[x=target_cr,y=overall,col sep=comma]{plots/ruler/qwen3_4b/ist1.0/streaming_llm_scores.csv};
        \addplot+[dashed,very thick] table[x=target_cr,y=overall,col sep=comma]{plots/ruler/qwen3_4b/ist1.0/tova_scores.csv};

    \nextgroupplot[ymajorticks=false]
        \addplot+[very thick] table[x=target_cr,y=overall,col sep=comma]{plots/ruler/qwen3_4b/ist2.0/estimated_attention__h2o_scores.csv};
        \addplot+[very thick] table[x=target_cr,y=overall,col sep=comma]{plots/ruler/qwen3_4b/ist2.0/estimated_attention__h2o_norm_scores.csv};
        \addplot+[very thick] table[x=target_cr,y=overall,col sep=comma]{plots/ruler/qwen3_4b/ist2.0/estimated_attention__min_var_scores.csv};
        \addplot+[very thick] table[x=target_cr,y=overall,col sep=comma]{plots/ruler/qwen3_4b/ist2.0/estimated_attention__min_var_full_harmonic_scores.csv};
        \addplot+[dashed,very thick] table[x=target_cr,y=overall,col sep=comma]{plots/ruler/qwen3_4b/ist2.0/h2o_scores.csv};
        \addplot+[dashed,very thick] table[x=target_cr,y=overall,col sep=comma]{plots/ruler/qwen3_4b/ist2.0/knorm_scores.csv};
        \addplot+[dashed,very thick] table[x=target_cr,y=overall,col sep=comma]{plots/ruler/qwen3_4b/ist2.0/snapkv_scores.csv};
        \addplot+[dashed,very thick] table[x=target_cr,y=overall,col sep=comma]{plots/ruler/qwen3_4b/ist2.0/streaming_llm_scores.csv};
        \addplot+[dashed,very thick] table[x=target_cr,y=overall,col sep=comma]{plots/ruler/qwen3_4b/ist2.0/tova_scores.csv};

    \nextgroupplot[ymajorticks=false]
        \addplot+[very thick] table[x=target_cr,y=overall,col sep=comma]{plots/ruler/qwen3_4b/ist3.0/estimated_attention__h2o_scores.csv};
        \addplot+[very thick] table[x=target_cr,y=overall,col sep=comma]{plots/ruler/qwen3_4b/ist3.0/estimated_attention__h2o_norm_scores.csv};
        \addplot+[very thick] table[x=target_cr,y=overall,col sep=comma]{plots/ruler/qwen3_4b/ist3.0/estimated_attention__min_var_scores.csv};
        \addplot+[very thick] table[x=target_cr,y=overall,col sep=comma]{plots/ruler/qwen3_4b/ist3.0/estimated_attention__min_var_full_harmonic_scores.csv};
        \addplot+[dashed,very thick] table[x=target_cr,y=overall,col sep=comma]{plots/ruler/qwen3_4b/ist3.0/h2o_scores.csv};
        \addplot+[dashed,very thick] table[x=target_cr,y=overall,col sep=comma]{plots/ruler/qwen3_4b/ist3.0/knorm_scores.csv};
        \addplot+[dashed,very thick] table[x=target_cr,y=overall,col sep=comma]{plots/ruler/qwen3_4b/ist3.0/snapkv_scores.csv};
        \addplot+[dashed,very thick] table[x=target_cr,y=overall,col sep=comma]{plots/ruler/qwen3_4b/ist3.0/streaming_llm_scores.csv};
        \addplot+[dashed,very thick] table[x=target_cr,y=overall,col sep=comma]{plots/ruler/qwen3_4b/ist3.0/tova_scores.csv};

    \nextgroupplot[ymajorticks=false]
        \addplot+[very thick] table[x=target_cr,y=overall,col sep=comma]{plots/ruler/qwen3_4b/ist4.0/estimated_attention__h2o_scores.csv};
        \addplot+[very thick] table[x=target_cr,y=overall,col sep=comma]{plots/ruler/qwen3_4b/ist4.0/estimated_attention__h2o_norm_scores.csv};
        \addplot+[very thick] table[x=target_cr,y=overall,col sep=comma]{plots/ruler/qwen3_4b/ist4.0/estimated_attention__min_var_scores.csv};
        \addplot+[very thick] table[x=target_cr,y=overall,col sep=comma]{plots/ruler/qwen3_4b/ist4.0/estimated_attention__min_var_full_harmonic_scores.csv};
        \addplot+[dashed,very thick] table[x=target_cr,y=overall,col sep=comma]{plots/ruler/qwen3_4b/ist4.0/h2o_scores.csv};
        \addplot+[dashed,very thick] table[x=target_cr,y=overall,col sep=comma]{plots/ruler/qwen3_4b/ist4.0/knorm_scores.csv};
        \addplot+[dashed,very thick] table[x=target_cr,y=overall,col sep=comma]{plots/ruler/qwen3_4b/ist4.0/snapkv_scores.csv};
        \addplot+[dashed,very thick] table[x=target_cr,y=overall,col sep=comma]{plots/ruler/qwen3_4b/ist4.0/streaming_llm_scores.csv};
        \addplot+[dashed,very thick] table[x=target_cr,y=overall,col sep=comma]{plots/ruler/qwen3_4b/ist4.0/tova_scores.csv};

    \nextgroupplot[ymajorticks=false,
                   legend to name=istsleg2,
                   legend cell align=left,
                   legend style={font=\small,fill=none,draw=black,anchor=center,align=left},
                   legend columns=9]
        \addplot+[very thick] table[x=target_cr,y=overall,col sep=comma]{plots/ruler/qwen3_4b/ist5.0/estimated_attention__h2o_scores.csv};
        \addplot+[very thick] table[x=target_cr,y=overall,col sep=comma]{plots/ruler/qwen3_4b/ist5.0/estimated_attention__h2o_norm_scores.csv};
        \addplot+[very thick] table[x=target_cr,y=overall,col sep=comma]{plots/ruler/qwen3_4b/ist5.0/estimated_attention__min_var_scores.csv};
        \addplot+[very thick] table[x=target_cr,y=overall,col sep=comma]{plots/ruler/qwen3_4b/ist5.0/estimated_attention__min_var_full_harmonic_scores.csv};
        \addplot+[dashed,very thick] table[x=target_cr,y=overall,col sep=comma]{plots/ruler/qwen3_4b/ist5.0/h2o_scores.csv};
        \addplot+[dashed,very thick] table[x=target_cr,y=overall,col sep=comma]{plots/ruler/qwen3_4b/ist5.0/knorm_scores.csv};
        \addplot+[dashed,very thick] table[x=target_cr,y=overall,col sep=comma]{plots/ruler/qwen3_4b/ist5.0/snapkv_scores.csv};
        \addplot+[dashed,very thick] table[x=target_cr,y=overall,col sep=comma]{plots/ruler/qwen3_4b/ist5.0/streaming_llm_scores.csv};
        \addplot+[dashed,very thick] table[x=target_cr,y=overall,col sep=comma]{plots/ruler/qwen3_4b/ist5.0/tova_scores.csv};
        \legend{$\pihto$, $\pihtoh$, $\pimin$, $\piminh$, H2O, K-norm, SnapKV, StreamingLLM, TOVA}
        \coordinate (istc4) at (rel axis cs:1,1);
        \pgfplotsextra{\coordinate (istsqwenRow) at (rel axis cs:1,0.5);}

    \end{groupplot}
    \begin{scope}
        \coordinate (p) at ($(istc1)!0.5!(istc4)$);
        \node[below] (xlab) at (p |- current bounding box.south) {Compression ratio ($r$)};
        \node[below] (leg) at (xlab.south) {\pgfplotslegendfromname{robleg}};
        \node[rotate=270, anchor=north, font=\small, yshift=0.45cm] at (istsllamaRow) {\llama{}};
        \node[rotate=270, anchor=north, font=\small, yshift=0.45cm] at (istsqwenRow) {\qwen{}};
    \end{scope}
\end{tikzpicture}
\vspace{-0.25cm}
\caption{\textbf{Importance sampling temperature scaling average total scores.} \llama{} (top) and \qwen{} (bottom) average total scores at different importance sampling temperature scale values $\tau$.}\label{fig:ist-ruler-scores}
\end{figure}

\subsection{Per Dataset Split Scores}

We show the eviction scores for each of the four proposals defined in \Cref{sec:robustness} and the five top-$k$ eviction methods---namely, Streaming LLM, SnapKV, TOVA, H2O and K-norm---across all of the splits of LongBench and some of RULER's.
Due to space and time constraints, we only show six out of the 13 different splits in RULER.

\Cref{fig:individual-llama} shows the scores for \llama{} and \Cref{fig:individual-qwen} shows the scores for \qwen{}.
In both cases, probabilistic eviction with correction tends to achieve better score performance.
Notably, some eviction methods that do well in certain tasks.
For example, on \llama{} StreamingLLM ranks first on the QuestionAnswer split, yet dead last in MultiKey-NIAH.
Similarly, on \qwen{} K-norm is consistently in the top-3 on the MultiKey-NIAH split, yet last in CommonWords.

\subsection{Importance Sampling Temperature}

To further investigate the trade-off between bias and variance, we apply importance sampling temperature scaling to the same experimental setting as described in \Cref{sec:robustness}.
\Cref{fig:ist-ruler-scores,fig:ist-ruler-ranking} show the scores and pairwise win scores of RULER on \llama{} and \qwen{} across temperature values of $\tau\in\{1,2,3,4,5\}$.
The plots show that, in most cases, the decrease in variance was not worth the increase in bias.
This suggests that bias is the more important issue to resolve in KV eviction.

\subsection{On the Behavior of Attention Head Compression Ratio}\label{app:compression-pattern}

Given a fixed number of samples $m$, probabilistic eviction will automatically set the appropriate compression ratio of each head in accordance with that head's proposal distribution it samples from.
This means that different attention heads might have very distinct compression ratios then others.
Our findings show that this causes a distinct pattern in the distribution of attention head compression ratios.
We find that lower (closer to the input) layers tend to be allocated more of the cache budget, while at higher (closer to the output) layers, less budget.
This is a behavior that is reminiscent of \citet{cai2025pyramidkv}'s PyramidKV, where lower layers are given more of the KV cache budget.

\Cref{fig:pyramid} shows this pattern for different proposal distributions.
Notably, $\piminh$ tends to act more conservatively, while $\pihto$ spikes rapidly at certain layers.

\begin{figure}[t]
\begin{tikzpicture}
     \begin{groupplot}[
        group style={group size=4 by 2, horizontal sep=0.1cm, vertical sep=0.2cm},
        height=4.5cm,
        width=0.33\textwidth,
        grid style=dashed,
        cycle list name=cpalette,
        xmajorgrids=true, ymajorgrids=true,
        every legend image post/.append style={scale=0.75},
        title style={font=\small},
        y tick label style={font=\scriptsize, rotate=90},
        title style={yshift=-0.35cm,font={\strut{}\color{palette-gray}}},
        ymajorticks=false,
        xmin=-1, xmax=32,
    ]
        \nextgroupplot[title={$\pimin$},ymajorticks=true,ylabel={Compression ratio},xmajorticks=false]           
            \addplot[on layer=pre main,dashed,thick,palette-gray!60!white,domain=-1:32] {0.1};
            \addplot[on layer=pre main,dashed,thick,palette-gray!60!white,domain=-1:32] {0.25};
            \addplot[on layer=pre main,dashed,thick,palette-gray!60!white,domain=-1:32] {0.5};
            \addplot[on layer=pre main,dashed,thick,palette-gray!60!white,domain=-1:32] {0.75};
            \addplot[on layer=pre main,dashed,thick,palette-gray!60!white,domain=-1:32] {0.88};
            \pgfplotsset{cycle list shift=-5}
        
            \addplot+[very thick,x filter/.expression={\thisrow{target_cr}==0.1 ? x : nan}] table[x=layer,y=avg_effective_cr,col sep=comma] {plots/cr_behavior/llama_3_8b_min_var.csv};
            \addplot+[very thick,x filter/.expression={\thisrow{target_cr}==0.25 ? x : nan}] table[x=layer,y=avg_effective_cr,col sep=comma] {plots/cr_behavior/llama_3_8b_min_var.csv};
            \addplot+[very thick,x filter/.expression={\thisrow{target_cr}==0.5 ? x : nan}] table[x=layer,y=avg_effective_cr,col sep=comma] {plots/cr_behavior/llama_3_8b_min_var.csv};
            \addplot+[very thick,x filter/.expression={\thisrow{target_cr}==0.75 ? x : nan}] table[x=layer,y=avg_effective_cr,col sep=comma] {plots/cr_behavior/llama_3_8b_min_var.csv};
            \addplot+[very thick,x filter/.expression={\thisrow{target_cr}==0.88 ? x : nan}] table[x=layer,y=avg_effective_cr,col sep=comma] {plots/cr_behavior/llama_3_8b_min_var.csv}; 

            \coordinate (c1) at (rel axis cs:-0.275,1);
        \nextgroupplot[title={$\piminh$},xmajorticks=false]
            \addplot[on layer=pre main,dashed,thick,palette-gray!60!white,domain=-1:32] {0.1};
            \addplot[on layer=pre main,dashed,thick,palette-gray!60!white,domain=-1:32] {0.25};
            \addplot[on layer=pre main,dashed,thick,palette-gray!60!white,domain=-1:32] {0.5};
            \addplot[on layer=pre main,dashed,thick,palette-gray!60!white,domain=-1:32] {0.75};
            \addplot[on layer=pre main,dashed,thick,palette-gray!60!white,domain=-1:32] {0.88};
            \pgfplotsset{cycle list shift=-5}
        
            \addplot+[very thick,x filter/.expression={\thisrow{target_cr}==0.1 ? x : nan}] table[x=layer,y=avg_effective_cr,col sep=comma] {plots/cr_behavior/llama_3_8b_min_var_full_harmonic.csv};
            \addplot+[very thick,x filter/.expression={\thisrow{target_cr}==0.25 ? x : nan}] table[x=layer,y=avg_effective_cr,col sep=comma] {plots/cr_behavior/llama_3_8b_min_var_full_harmonic.csv};
            \addplot+[very thick,x filter/.expression={\thisrow{target_cr}==0.5 ? x : nan}] table[x=layer,y=avg_effective_cr,col sep=comma] {plots/cr_behavior/llama_3_8b_min_var_full_harmonic.csv};
            \addplot+[very thick,x filter/.expression={\thisrow{target_cr}==0.75 ? x : nan}] table[x=layer,y=avg_effective_cr,col sep=comma] {plots/cr_behavior/llama_3_8b_min_var_full_harmonic.csv};
            \addplot+[very thick,x filter/.expression={\thisrow{target_cr}==0.88 ? x : nan}] table[x=layer,y=avg_effective_cr,col sep=comma] {plots/cr_behavior/llama_3_8b_min_var_full_harmonic.csv}; 
        \nextgroupplot[title={$\pihto$},xmajorticks=false]
            \addplot[on layer=pre main,dashed,thick,palette-gray!60!white,domain=-1:32] {0.1};
            \addplot[on layer=pre main,dashed,thick,palette-gray!60!white,domain=-1:32] {0.25};
            \addplot[on layer=pre main,dashed,thick,palette-gray!60!white,domain=-1:32] {0.5};
            \addplot[on layer=pre main,dashed,thick,palette-gray!60!white,domain=-1:32] {0.75};
            \addplot[on layer=pre main,dashed,thick,palette-gray!60!white,domain=-1:32] {0.88};
            \pgfplotsset{cycle list shift=-5}
        
            \addplot+[very thick,x filter/.expression={\thisrow{target_cr}==0.1 ? x : nan}] table[x=layer,y=avg_effective_cr,col sep=comma] {plots/cr_behavior/llama_3_8b_h2o.csv};
            \addplot+[very thick,x filter/.expression={\thisrow{target_cr}==0.25 ? x : nan}] table[x=layer,y=avg_effective_cr,col sep=comma] {plots/cr_behavior/llama_3_8b_h2o.csv};
            \addplot+[very thick,x filter/.expression={\thisrow{target_cr}==0.5 ? x : nan}] table[x=layer,y=avg_effective_cr,col sep=comma] {plots/cr_behavior/llama_3_8b_h2o.csv};
            \addplot+[very thick,x filter/.expression={\thisrow{target_cr}==0.75 ? x : nan}] table[x=layer,y=avg_effective_cr,col sep=comma] {plots/cr_behavior/llama_3_8b_h2o.csv};
            \addplot+[very thick,x filter/.expression={\thisrow{target_cr}==0.88 ? x : nan}] table[x=layer,y=avg_effective_cr,col sep=comma] {plots/cr_behavior/llama_3_8b_h2o.csv}; 
        \nextgroupplot[title={$\pihtoh$},xmajorticks=false]
            \addplot[on layer=pre main,dashed,thick,palette-gray!60!white,domain=-1:32] {0.1};
            \addplot[on layer=pre main,dashed,thick,palette-gray!60!white,domain=-1:32] {0.25};
            \addplot[on layer=pre main,dashed,thick,palette-gray!60!white,domain=-1:32] {0.5};
            \addplot[on layer=pre main,dashed,thick,palette-gray!60!white,domain=-1:32] {0.75};
            \addplot[on layer=pre main,dashed,thick,palette-gray!60!white,domain=-1:32] {0.88};
            \pgfplotsset{cycle list shift=-5}
        
            \addplot+[very thick,x filter/.expression={\thisrow{target_cr}==0.1 ? x : nan}] table[x=layer,y=avg_effective_cr,col sep=comma] {plots/cr_behavior/llama_3_8b_h2o_norm.csv};
            \addplot+[very thick,x filter/.expression={\thisrow{target_cr}==0.25 ? x : nan}] table[x=layer,y=avg_effective_cr,col sep=comma] {plots/cr_behavior/llama_3_8b_h2o_norm.csv};
            \addplot+[very thick,x filter/.expression={\thisrow{target_cr}==0.5 ? x : nan}] table[x=layer,y=avg_effective_cr,col sep=comma] {plots/cr_behavior/llama_3_8b_h2o_norm.csv};
            \addplot+[very thick,x filter/.expression={\thisrow{target_cr}==0.75 ? x : nan}] table[x=layer,y=avg_effective_cr,col sep=comma] {plots/cr_behavior/llama_3_8b_h2o_norm.csv};
            \addplot+[very thick,x filter/.expression={\thisrow{target_cr}==0.88 ? x : nan}] table[x=layer,y=avg_effective_cr,col sep=comma] {plots/cr_behavior/llama_3_8b_h2o_norm.csv}; 

            \pgfplotsextra{\coordinate (llamaRow) at (rel axis cs:1,0.5);}
            
        \nextgroupplot[ymajorticks=true,ylabel={Compression ratio}]           
            \addplot[on layer=pre main,dashed,thick,palette-gray!60!white,domain=-1:32] {0.1};
            \addplot[on layer=pre main,dashed,thick,palette-gray!60!white,domain=-1:32] {0.25};
            \addplot[on layer=pre main,dashed,thick,palette-gray!60!white,domain=-1:32] {0.5};
            \addplot[on layer=pre main,dashed,thick,palette-gray!60!white,domain=-1:32] {0.75};
            \addplot[on layer=pre main,dashed,thick,palette-gray!60!white,domain=-1:32] {0.88};
            \pgfplotsset{cycle list shift=-5}
        
            \addplot+[very thick,x filter/.expression={\thisrow{target_cr}==0.1 ? x : nan}] table[x=layer,y=avg_effective_cr,col sep=comma] {plots/cr_behavior/qwen_3_8b_min_var.csv};
            \addplot+[very thick,x filter/.expression={\thisrow{target_cr}==0.25 ? x : nan}] table[x=layer,y=avg_effective_cr,col sep=comma] {plots/cr_behavior/qwen_3_8b_min_var.csv};
            \addplot+[very thick,x filter/.expression={\thisrow{target_cr}==0.5 ? x : nan}] table[x=layer,y=avg_effective_cr,col sep=comma] {plots/cr_behavior/qwen_3_8b_min_var.csv};
            \addplot+[very thick,x filter/.expression={\thisrow{target_cr}==0.75 ? x : nan}] table[x=layer,y=avg_effective_cr,col sep=comma] {plots/cr_behavior/qwen_3_8b_min_var.csv};
            \addplot+[very thick,x filter/.expression={\thisrow{target_cr}==0.88 ? x : nan}] table[x=layer,y=avg_effective_cr,col sep=comma] {plots/cr_behavior/qwen_3_8b_min_var.csv}; 
        \nextgroupplot
            \addplot[on layer=pre main,dashed,thick,palette-gray!60!white,domain=-1:32] {0.1};
            \addplot[on layer=pre main,dashed,thick,palette-gray!60!white,domain=-1:32] {0.25};
            \addplot[on layer=pre main,dashed,thick,palette-gray!60!white,domain=-1:32] {0.5};
            \addplot[on layer=pre main,dashed,thick,palette-gray!60!white,domain=-1:32] {0.75};
            \addplot[on layer=pre main,dashed,thick,palette-gray!60!white,domain=-1:32] {0.88};
            \pgfplotsset{cycle list shift=-5}
        
            \addplot+[very thick,x filter/.expression={\thisrow{target_cr}==0.1 ? x : nan}] table[x=layer,y=avg_effective_cr,col sep=comma] {plots/cr_behavior/qwen_3_8b_min_var_full_harmonic.csv};
            \addplot+[very thick,x filter/.expression={\thisrow{target_cr}==0.25 ? x : nan}] table[x=layer,y=avg_effective_cr,col sep=comma] {plots/cr_behavior/qwen_3_8b_min_var_full_harmonic.csv};
            \addplot+[very thick,x filter/.expression={\thisrow{target_cr}==0.5 ? x : nan}] table[x=layer,y=avg_effective_cr,col sep=comma] {plots/cr_behavior/qwen_3_8b_min_var_full_harmonic.csv};
            \addplot+[very thick,x filter/.expression={\thisrow{target_cr}==0.75 ? x : nan}] table[x=layer,y=avg_effective_cr,col sep=comma] {plots/cr_behavior/qwen_3_8b_min_var_full_harmonic.csv};
            \addplot+[very thick,x filter/.expression={\thisrow{target_cr}==0.88 ? x : nan}] table[x=layer,y=avg_effective_cr,col sep=comma] {plots/cr_behavior/qwen_3_8b_min_var_full_harmonic.csv}; 
        \nextgroupplot
            \addplot[on layer=pre main,dashed,thick,palette-gray!60!white,domain=-1:32] {0.1};
            \addplot[on layer=pre main,dashed,thick,palette-gray!60!white,domain=-1:32] {0.25};
            \addplot[on layer=pre main,dashed,thick,palette-gray!60!white,domain=-1:32] {0.5};
            \addplot[on layer=pre main,dashed,thick,palette-gray!60!white,domain=-1:32] {0.75};
            \addplot[on layer=pre main,dashed,thick,palette-gray!60!white,domain=-1:32] {0.88};
            \pgfplotsset{cycle list shift=-5}
        
            \addplot+[very thick,x filter/.expression={\thisrow{target_cr}==0.1 ? x : nan}] table[x=layer,y=avg_effective_cr,col sep=comma] {plots/cr_behavior/qwen_3_8b_h2o.csv};
            \addplot+[very thick,x filter/.expression={\thisrow{target_cr}==0.25 ? x : nan}] table[x=layer,y=avg_effective_cr,col sep=comma] {plots/cr_behavior/qwen_3_8b_h2o.csv};
            \addplot+[very thick,x filter/.expression={\thisrow{target_cr}==0.5 ? x : nan}] table[x=layer,y=avg_effective_cr,col sep=comma] {plots/cr_behavior/qwen_3_8b_h2o.csv};
            \addplot+[very thick,x filter/.expression={\thisrow{target_cr}==0.75 ? x : nan}] table[x=layer,y=avg_effective_cr,col sep=comma] {plots/cr_behavior/qwen_3_8b_h2o.csv};
            \addplot+[very thick,x filter/.expression={\thisrow{target_cr}==0.88 ? x : nan}] table[x=layer,y=avg_effective_cr,col sep=comma] {plots/cr_behavior/qwen_3_8b_h2o.csv}; 
        \nextgroupplot[legend to name=leg,
                   legend cell align=left,
                   legend style={font=\small,fill=none,draw=black,anchor=center,align=left},
                   legend columns=5]
            \addplot[on layer=pre main,dashed,thick,palette-gray!60!white,domain=-1:32] {0.1};
            \addplot[on layer=pre main,dashed,thick,palette-gray!60!white,domain=-1:32] {0.25};
            \addplot[on layer=pre main,dashed,thick,palette-gray!60!white,domain=-1:32] {0.5};
            \addplot[on layer=pre main,dashed,thick,palette-gray!60!white,domain=-1:32] {0.75};
            \addplot[on layer=pre main,dashed,thick,palette-gray!60!white,domain=-1:32] {0.88};
            \pgfplotsset{cycle list shift=-5}
        
            \addplot+[very thick,x filter/.expression={\thisrow{target_cr}==0.1 ? x : nan}] table[x=layer,y=avg_effective_cr,col sep=comma] {plots/cr_behavior/qwen_3_8b_h2o_norm.csv};
            \addplot+[very thick,x filter/.expression={\thisrow{target_cr}==0.25 ? x : nan}] table[x=layer,y=avg_effective_cr,col sep=comma] {plots/cr_behavior/qwen_3_8b_h2o_norm.csv};
            \addplot+[very thick,x filter/.expression={\thisrow{target_cr}==0.5 ? x : nan}] table[x=layer,y=avg_effective_cr,col sep=comma] {plots/cr_behavior/qwen_3_8b_h2o_norm.csv};
            \addplot+[very thick,x filter/.expression={\thisrow{target_cr}==0.75 ? x : nan}] table[x=layer,y=avg_effective_cr,col sep=comma] {plots/cr_behavior/qwen_3_8b_h2o_norm.csv};
            \addplot+[very thick,x filter/.expression={\thisrow{target_cr}==0.88 ? x : nan}] table[x=layer,y=avg_effective_cr,col sep=comma] {plots/cr_behavior/qwen_3_8b_h2o_norm.csv}; 

            \legend{,,,,,$r=0.1$,$r=0.25$,$r=0.5$,$r=75$,$r=0.88$}
            
            \coordinate (istc4) at (rel axis cs:1,1);
            \pgfplotsextra{\coordinate (qwenRow) at (rel axis cs:1,0.5);}
   \end{groupplot}
   \begin{scope}
        \coordinate (p) at ($(c1)!0.5!(c4)$);
        \node[below] (xlab) at (p |- current bounding box.south) {Layers};
        \node[below] at (xlab.south) {\pgfplotslegendfromname{leg}};
        \node[rotate=270, anchor=north, font=\small, yshift=0.45cm] at (llamaRow) {\llama{}};
        \node[rotate=270, anchor=north, font=\small, yshift=0.45cm] at (qwenRow) {\qwen{}};
    \end{scope}
\end{tikzpicture}
\caption{\textbf{Pattern of average compression ratio per layer when probabilistically evicting.} Each solid curve shows the average effective compression ratio when given a target compression ratio $r$. The target compression ratio is shown as a dashed dark gray line. The average of all points in a solid curve equals to the target $r$. This pattern shows that lower layers (closer to input) tend to evict less, while higher layers (closer to output) evict more.}\label{fig:pyramid}
\end{figure}

\section{Limitations}\label{app:limitations}

The ideal KV eviction method is not only zero-variance but also zero-bias.
However, it is unrealistic to achieve this, as to do so, correction would have to be perfect, meaning that it can perfectly recreate the same KV cache from less memory than what was evicted.
Our approach is also not perfect: self-normalized importance sampling is biased and has variance greater than zero.
However, this estimator has reasonable upper bounds on bias and variance.
This is in contrast to deterministic KV eviction methods that do not apply correction: these are zero-variance estimators but their bias is unbounded.

The bias of self-normalized importance sampling is upper bounded by $\bigo(m^{-1})$, where $m$ is the number of samples. This upper bound indicates that if there is less compression, then it will be more accurate---which is usually true for other eviction methods.
Note that the inverse is not true: at a reasonable compression ratio, if the distributions at the attention heads are sparse, then the effective number of samples required to achieve a certain precision will be lower, meaning that in those cases even a higher compression ratio can achieve lower error.

In terms of implementation, we show that we are able to achieve distinct compression ratios at different heads.
This requires a properly optimized KV eviction implementation to compute correction using an efficient sparse tensors implementation.
This is addressed by \citet{feng26}, and can be extended to our case.
Further, correction requires an additional memory overhead of $\bigo(k\cdot h\cdot b)$ bytes, where $k$ is the number of kept tokens, $h$ is the total number of attention heads in the LLM, and $b$ is the floating point precision in bytes.
Asymptotically, this is dwarfed by the KV cache size itself and not a real concern, as the cache size is $\bigo(d\cdot k\cdot h\cdot b)$, where $d$ is the embedding dimension.

\end{document}